\pdfoutput=1
\documentclass{article}

\usepackage{mathpazo}
\usepackage[round,authoryear]{natbib}
\usepackage[a4paper,left=0.83in,right=0.83in,top=0.94in,bottom=0.84in]{geometry}

\usepackage{amsmath,amsfonts,bm}

\def\eqref#1{equation~\ref{#1}}

\def\1{\bm{1}}

\DeclareMathAlphabet{\mathsfit}{\encodingdefault}{\sfdefault}{m}{sl}
\SetMathAlphabet{\mathsfit}{bold}{\encodingdefault}{\sfdefault}{bx}{n}

\usepackage{amsmath}
\usepackage{amsfonts}
\usepackage{amssymb}
\usepackage{amsthm}
\usepackage{mathtools}
\usepackage{bm}
\usepackage{nicefrac}
\usepackage{tcolorbox}

\usepackage{booktabs}
\usepackage{amsmath}
\usepackage{amsfonts}
\usepackage{amssymb}
\usepackage{amsthm}
\usepackage{nicefrac}
\usepackage{microtype}
\usepackage{graphicx}
\usepackage[table]{xcolor}
\usepackage{multirow}
\usepackage{tabularx, array}
\usepackage{arydshln}
\usepackage{pgfplots}
\usepgfplotslibrary{groupplots}
\pgfplotsset{compat=1.18}
\usepackage{url}
\usepackage{xspace}

\usepackage{hyphenat}
\usepackage{wrapfig}

\usepackage{kdmlab-manuscript}

\renewcommand{\KDMlabmark}{%
  \includegraphics[height=0.75in]{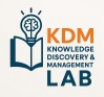}%
}
\renewcommand{\LCSmark}{%
  \includegraphics[width=1.39in]{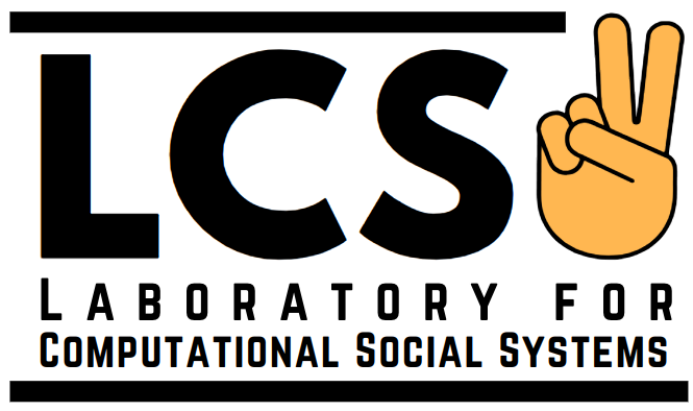}%
}
\renewcommand{\KDMidentitycard}[3]{%
  \begin{minipage}[c][1.18in][c]{\linewidth}
    \centering
    #2\par\vspace{0.10in}%
    {\sffamily\footnotesize\color{KDMGray}#1\par}%
  \end{minipage}%
}

\usepackage{hyperref}
\hypersetup{%
  colorlinks=true,linkcolor=KDMTeal,citecolor=KDMTeal,urlcolor=KDMTeal,
  pdftitle={Not All Is Lost: Repairing Lossy User Preference States of Personalization Encoders},
  pdfauthor={Parthiv Chatterjee, Dhiraj Golhar, Ummesalma Diwan, Sourish Dasgupta, Manjunath Joshi, Tanmoy Chakraborty}%
}

\DeclareRobustCommand{\repair}{\texttt{REPAIR}\xspace}
\theoremstyle{definition}
\newtheorem{definition}{Definition}[section]

\theoremstyle{plain}
\newtheorem{corollary}{Corollary}[section]

\newtheorem{theorem}{Theorem}

\newtheorem{proposition}{Proposition}

\title{Not All Is Lost: Repairing Lossy User Preference States of Personalization Encoders}

\author{%
  Parthiv Chatterjee\textsuperscript{1}\quad
  Dhiraj Golhar\textsuperscript{2}\quad
  Ummesalma Diwan\textsuperscript{1}\\[0.18em]
  Sourish Dasgupta\textsuperscript{1}\quad
  Manjunath Joshi\textsuperscript{2}\quad
  Tanmoy Chakraborty\textsuperscript{3}%
}

\begin{document}

\addtocontents{toc}{\protect\setcounter{tocdepth}{-1}}

\maketitle

\begin{tcolorbox}[
  colback=KDMWash, colframe=KDMRule, boxrule=0.35pt,
  arc=0pt, left=8pt, right=8pt, top=5pt, bottom=5pt
]
\footnotesize\textbf{Version note.} This author-prepared archival manuscript presents
work accepted to NeurIPS 2026. The discussion and interpretation are expanded
in this version; the research content, tables, and reported results are
preserved from the original accepted manuscript source.
\end{tcolorbox}
\vspace{0.05in}

\begin{abstract}
Personalization encoders compress evolving interaction histories into preference states used to rank items or condition text generation. A task head operating only on this state can miss useful evidence that remains in the frozen encoder's cached representations for individual timesteps. We study this \emph{recoverability gap} and propose \textbf{\repair{}}, which compares cached representations with the current preference state in a compact learned coordinate space. It resolves corrective evidence over extended history, recent interactions, and localized bursts. It then selects which patterns at which timesteps contribute and adds their aggregate correction to the state before the task head. Encoder-host repair reuses representations from the existing forward computation without re-encoding the history. Across MovieLens, PENS, MIND, and Amazon Reviews 2023, training only \repair{} improves MRR and nDCG@10 for all twelve representative recommendation hosts while both encoder and task head remain frozen. Head-only finetuning of the same hosts yields smaller gains. For example, Mamba4Rec on MovieLens gains $3.96$ MRR points, compared with $0.19$ from head-only finetuning. Rank and temporal diagnostics support a compact, host-dependent corrective structure. In personalized generation, IMPerSumm improves the two reported weighted PerSEval variants, which assess responsiveness to user preference, by up to $25.23\%$. These results support post-compression state correction and distinguish the availability of preference evidence from its downstream use.
\end{abstract}

\section{Introduction}\label{sec:introduction}

Personalization systems encode user interaction histories through recurrent, self-attentive, time-aware, state-space, or multi-interest mechanisms and expose a task-facing preference state to a downstream task head \citep{gru4rec-2016,nrms,mamba4rec-arxiv-2024,tisasrec-wsdm-2020,comirec-kdd-2020}. For example, a sequential recommender contextualizes item-level representations, compresses them into a user representation, and passes that state to a ranking head. A head operating only on that state has no direct access to evidence retained elsewhere in the encoder computation.

In this paper, we argue that this need not imply irreversible loss of every task-relevant trace. We therefore investigate whether cached per-timestep representations from the same frozen encoder computation can still support correction of the exposed state. We call this under-expressed but still accessible evidence the \textit{recoverability gap}. The practical goal, therefore, is to \textit{improve deployed models through state correction while keeping their encoders frozen}. Existing work focuses on encoder state quality by modifying internal representations \citep{reft-neurips-2024,stylevector-acl-2025} and editing frozen behavior \citep{serac-icml-2022}, or suppressing noisy and target-irrelevant interactions \citep{recdenoiser-recsys-2022,dicycle-sigir-2022,SSDRec-ICDE-2024}. These approaches leave open whether cached evidence from the same frozen computation supports selective correction of an already formed preference state before an unchanged task head.

To address this, we propose \textbf{\repair{}} (Recovering Evolving Preference State via Adaptive Interaction-aware Representation Correction), which compares the compressed encoder state exposed to the task head with the cached timestep representations, learns corrective evidence across time, selectively retains the most relevant parts, and adds them back to the state before it is used by the task head. \repair{} maps the mismatch to $K$ learned corrective coordinates and resolves their temporal variation through three views. Long-support, short-support, and episodic-support views emphasize extended history, recent evidence, and localized bursts. These names describe temporal basis functions rather than observed categories of user intent. The encoder itself is left unchanged and does not process the raw history again. The original state remains the base to which the correction is added.

We evaluate \repair{} on four diverse personalization datasets: MovieLens (15K movie-recommendation instances), MIND and PENS (20K news instances each), and Amazon Reviews 2023 (2K multimodal product instances), with PENS additionally supporting 20K personalized headline-generation examples. We first test whether the \textit{recoverability gap} supports useful state correction by freezing both the trained encoder and its original task head and training only \repair{} between them. We compare this condition with head-only finetuning on the unchanged encoder state to distinguish the two sources of improvement. We find that \repair{} improves all twelve representative recommendation hosts on MRR and nDCG@10. For example, \textsc{Mamba4Rec} on MovieLens, \textsc{EBNR} on PENS, and \textsc{OpenCLIP} on Amazon gain $+3.96$, $+1.42$, and $+2.84$ MRR, compared with $+0.19$, $+0.28$, and $+0.62$ from head-only finetuning.

We then investigate whether correction requires a high-dimensional representation or admits a \textit{compact, temporally structured} signal. We vary $K$, the number of learned corrective coordinates, and find that $K=128$ gives the best MRR and nDCG@10 in all five tested settings. Increasing $K$ to $192$ lowers both metrics. Temporal structure also affects the correction. Combining long-, short-, and episodic-support views is strongest in 11 of 12 controlled settings, while removing temporal resolution can even make \repair{} worse than the original frozen host. The type of temporal evidence that \repair{} finds most useful for correction does not always match the temporal pattern originally present in the user history. This suggests that different encoders can compress the same history differently, leaving different kinds of temporal evidence available for later correction.

We finally ask how the recovered preference evidence should be used downstream. For recommendation, we test whether correcting the preference state before the task head, as \repair{} does, is more effective than correcting the output after the head. Under the same frozen-host protocol, \repair{} outperforms two evaluated post-head alternatives in all eight recommendation settings. For generation, personalization measured by PerSEval can improve substantially even when standard generation-quality scores such as ROUGE and METEOR change little. Thus, generic generation quality alone does not capture the personalization response.

Our contributions are threefold. First, we identify a \textit{recoverability gap} and find compact, host-dependent temporal structure in the corrective representations. Second, we propose \textbf{\repair{}}, which recovers this evidence and adds a selective correction to the state without changing or rerunning the encoder. Third, we establish gains with both encoder and task head frozen in recommendation, and distinguish personalization gains from generic generation quality in personalized generation.

\section{Related Work}\label{sec:related-works}

\textbf{Preference Encoders And State Compression.} Personalization models encode user histories through recurrent, attentive, bidirectional, time-aware, state-space, and large-scale sequential architectures \citep{gru4rec-2016,okura2017embedding,naml,nrms,sasrec-2018,bert4rec-cikm-2019,tisasrec-wsdm-2020,mamba4rec-arxiv-2024,hstu-arxiv-2024}. These methods improve how historical evidence is contextualized and compressed into a task-facing state. \repair{} studies whether cached representations can improve that state after the trained encoder has been frozen.

\textbf{Temporal And Multi-Interest Preference Modeling.} Prior work models long- and short-term preferences \citep{lstur-acl-2019,yu-ijcai-2019}, multiple latent interests \citep{comirec-kdd-2020,miner-2022-aclfindings}, and frequency-sensitive or state-space dynamics \citep{fearec-sigir-2023,mamba4rec-arxiv-2024,ss4rec_ldh-arxic-2025}. Multi-timescale modeling is therefore established. In \repair{}, temporal bases resolve corrective evidence after compression. They do not construct the host's history state. Our questions concern the compactness of this corrective representation and the temporal evidence that remains useful across frozen hosts.

\textbf{Personalized Generation And Downstream Utilization.} Personalized generation conditions decoding on evidence from history, profiles, retrieval, or prompts \citep{pens-acl-2021,gtp-tacl-2023,scape-www-2025,FPG-icdm-2023,personachat-acl-2018,recap-acl-2023,miracle-acl-2023,mpchat-2023-acl,personamem-naacl-2024,icopernicus-emnlp-2024}. LLM recommendation also uses enriched prompts and agentic tools~\citep{llm4rec-acl-2024,recmind-naacl-2024}. The downstream interface affects how this evidence reaches the output. We examine how downstream interfaces use a repaired preference state, separating user-specific personalization gains from changes in generic text quality.

\textbf{Representation Intervention, Model Editing, And Denoising.} Adapters, Prefix-Tuning, LoRA, ReFT, and StyleVector modify parameters, conditioning vectors, or hidden representations \citep{houlsby2019parameter,li2021prefix,hu2022lora,reft-neurips-2024,stylevector-acl-2025}. MEND, SERAC, ROME, and MEMIT alter behavior through editors, memory, or localized parameter updates \citep{mitchell2022fast,serac-icml-2022,meng2022locating,meng2023memit}. Sequential denoising methods suppress noisy or target-irrelevant interactions~\citep{recdenoiser-recsys-2022,hsd-cikm-2022,dicycle-sigir-2022,SSDRec-ICDE-2024}. DiffuRec and DreamRec instead generate recommendation representations through diffusion~\citep{diffurec-tois-2023,dreamrec-neurips-2023}. \repair{} specifies a correction target and evidence source within this broader intervention setting: it uses cached per-timestep representations to correct an already formed preference state before downstream use, with the host encoder fixed and without re-encoding raw history.

\section{Preliminaries: Preference State Compression \& Recoverability}\label{sec:problem-setup}

We first represent different kinds of \textit{evolving} user histories as a time-ordered sequence of user interactions. Following the User Interaction Graph (UIG) abstraction of \citet{peraugy-tmlr-2025}, each interaction is $b^{(t_i)}=\langle a^{(t_i)},v^{(t_i)}\rangle$, where $a^{(t_i)}\in\{\texttt{pos},\texttt{neg},\texttt{cmd}\}$ records what happened and $v^{(t_i)}$ records the associated content. For example, clicking an item is a positive (\texttt{pos}) interaction, skipping or rejecting one is negative (\texttt{neg}), and \texttt{cmd} specifies the personalization task command. The ordered history up to $t_i$ is $\tau_b^{t_{1:i}}$. Recommendation tasks use this history to predict or rank the next item, whereas generation tasks additionally receive the current task-side query or content $q^{(t_i)}$ and produce a personalized response\footnote{UIG is a common abstraction of organizing heterogeneous histories and is not a component of \repair{}.}. The personalization encoder produces cached per-timestep representations $\{e_b^{(t_j)}\}_{j=1}^{i}$, which the encoder compresses into the task-facing preference state $ z_b^{(t_i)}= f_{\mathrm{enc}}\!\left( e_b^{(t_1)},\ldots,e_b^{(t_i)} \right) $. \repair{} uses $\{e_b^{(t_j)}\}_{j=1}^{i}$ to correct $z_b^{(t_i)}$ before passing it downstream to the task head. Dataset-specific UIG construction (termed \textit{eventization}) is given in Appendix~\ref{app:uig-construction}.

\textbf{Recoverability gap.} Some useful preference evidence may be only weakly expressed in the task-facing compressed state $z_b^{(t_i)}$, even though it is still available in the cached $\{e_b^{(t_j)}\}_{j=1}^{i}$. We call this difference the \emph{recoverability gap}.

This does not mean that the information is completely missing from $z_b^{(t_i)}$. Instead, $\{e_b^{(t_j)}\}_{j=1}^{i}$ may still provide additional evidence that can be used to correct it while keeping the encoder frozen. The resulting performance gain also depends on how effectively the downstream head uses the corrected state.

\begin{figure*}[t]
    \centering
    \includegraphics[width=0.85\textwidth]{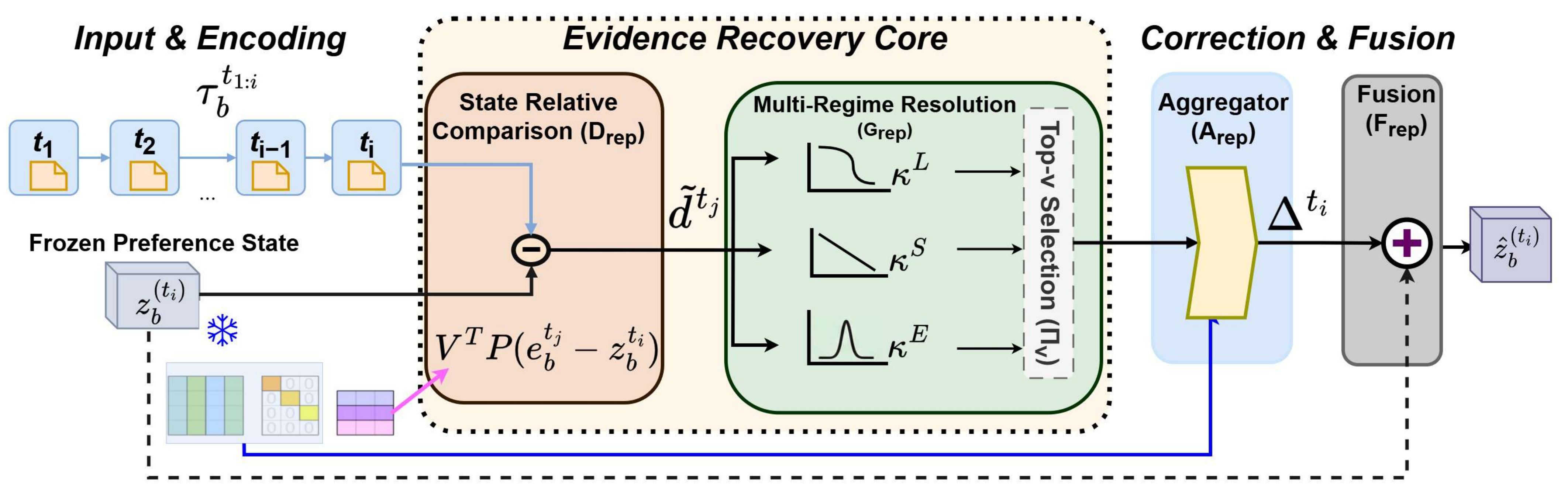}
    \caption{\textbf{Overview of \repair{}.} Given a frozen preference state and cached per-timestep representations from host computation, \repair{} constructs state-relative corrective evidence, resolves it through long-, short-, and episodic-support branches, selectively retains timestep-coordinate corrective components, and fuses the aggregated correction with the frozen state before the downstream interface consumes the repaired state.}
    \label{fig:repair_pipeline}
\end{figure*} 

\section{Methodology}\label{sec:methodology}

\repair{} is inserted between the encoder and downstream task head of a pre-trained personalization system, which we call its \emph{host}. For a user interaction prefix $\tau_b^{t_{1:i}}$ ending at the prediction time $t_i$, one frozen host-encoder forward pass already produces the compressed task-facing state $z_b^{(t_i)}$ and the per-timestep representations $\{e_b^{(t_j)}\}_{j=1}^{i}$ available before compression. \repair{} uses only this already-computed evidence to correct $z_b^{(t_i)}$, which remains the base state. Recovery neither reruns nor updates the host encoder. The cache is bounded by the host history window and costs $O(Ld_e)$ for maximum length $L$ (Figure~\ref{fig:repair_pipeline}, Appendix~\ref{app:method-details}).

\textbf{I. State-relative Corrective Evidence.} A past interaction should influence the correction only if its cached representation can still improve the compressed state. Since $e_b^{(t_j)}$ and $z_b^{(t_i)}$ may lie in different spaces, learned projections $P_e$ and $P_z$ map them to a common space, giving $\bar e_b^{(t_j)}$ and $\bar z_b^{(t_i)}$. Their difference $r^{(t_j;t_i)}=\bar e_b^{(t_j)}-\bar z_b^{(t_i)}$ is \repair{}'s state-relative corrective signal, not literal missing information. Rather than carrying this full-dimensional difference forward, \repair{} probes it using $K$ learned corrective patterns ($V_{\mathrm{rep}}^\top$) through an SVD-like rank-$K$ operator with read, scale, and write factors $J_{\mathrm{rep}}$:
\begin{equation}
\begin{split}
& J_{\mathrm{rep}} = \underbrace{U_{\mathrm{rep}}}_{\substack{\text{write}\\\text{(where patterns are written)}}} 
\cdot
\underbrace{\Lambda_{\mathrm{rep}}}_{\substack{\text{pattern scaling}\\\text{(how strongly each pattern should contribute \textit{in general})}}}
\cdot
\underbrace{V_{\mathrm{rep}}^\top}_{\substack{\text{read}\\\text{(how strongly patterns are present)}}}; \\
& \operatorname{rank}(J_{\mathrm{rep}})\le K; \quad d^{(t_j)} = V_{\mathrm{rep}}^\top r^{(t_j;t_i)}.
\end{split}
\label{eq:repair_drep_main}
\end{equation}
Each $d_k^{(t_j)}$ is the activation of corrective pattern $k$ at timestep $t_j$. $\Lambda_{\mathrm{rep}}$ stores the shared transmission strength of each pattern, and $U_{\mathrm{rep}}$ its write direction. The factorization is learned rather than computed by SVD, and its normalized columns need not be orthogonal. Because the projections and read markers are learned, $d^{(t_j)}$ is a learned corrective representation rather than a native host residual. We use $K=128$ by default and vary $K$ in RQ2. Smaller $K$ forces the correction into fewer patterns, directly testing compactness rather than assuming an intrinsic preference rank. The admissible-rank condition is stated in Appendix~\ref{app:k_choice}, and the conditional correction bound in Theorem~\ref{thm:repair-recoverability} (Appendix~\ref{app:recoverability}).

Before asking \emph{at what timestep} each pattern matters, \repair{} removes its prefix-wide average activation, $\tilde d^{(t_j)}=d^{(t_j)}-\frac{1}{i}\sum_{q=1}^{i}d^{(t_q)}$. A pattern activated equally throughout the prefix vanishes after centering, including the shared state offset. Subsequent gating, pooling, and fusion remain state-conditioned (Appendix~\ref{app:vector-implementation}).

\textbf{II. Temporal Resolution of Corrective Evidence.} The centered activations identify \emph{which} patterns vary at each timestep. The shared $\Lambda_{\mathrm{rep}}$ does not identify \emph{when} an activation is useful for correction. \repair{} therefore resolves timestep-wise activations over time before write-back. It uses \emph{long-support}, \emph{short-support}, and \emph{episodic-support} views for evidence extending farther back, recent evidence, and localized bursts, respectively. Because correction is made at the current endpoint $t_i$, an earlier timestep $t_j$ is measured by lag $\ell_j=i-j$. Larger lag means farther back in history. For pattern $k$,
\begin{equation}
\begin{aligned}
\kappa^{L}_{k}(\ell_j)&=\exp(-\beta_k\ell_j), &&\text{\emph{long: gradual decay}},\\
\kappa^{S}_{k}(\ell_j)&=\max\!\left(1-\frac{\ell_j}{i},0\right), &&\text{\emph{short: recent evidence}},\\
\kappa^{E}_{k}(\ell_j)&=\exp\!\left(\ell_j\log\xi_k-\xi_k-\log\Gamma(\ell_j+1)\right), &&\text{\emph{episodic: localized burst}}.
\end{aligned}
\label{eq:repair_kernels_main}
\end{equation}
These are three views of the same corrective activation, not labels that classify an interaction as long-, short-, or episodic. \repair{} learns their combined support,
\begin{equation}
\eta^{(t_j)}
=
\sum_{\rho\in\{L,S,E\}}
W_{\rho}\!\left(\kappa^{\rho}(\ell_j)\odot\tilde d^{(t_j)}\right),
\label{eq:repair_temporal_resolution_main}
\end{equation}
where $W_\rho$ learns how each view contributes. Because this reasoning occurs \emph{after} compression, different frozen encoders can leave different temporal evidence useful for correcting the same underlying history. For this temporal decomposition to be meaningful, the three views must be distinguishable. Theorem~\ref{thm:kernel-trace-recoverability} in Appendix~\ref{app:temporal_recoverability} formalizes when their individual contributions can be identified, while Corollary~\ref{cor:classical-timeseries-approximation} in the same appendix characterizes the finite-history patterns their combination can represent.

\textbf{III. Selective Write-Back and State Correction.} Some temporally supported activations remain weak, noisy, or redundant. \repair{} therefore retains selected timestep-pattern activations through $\Pi_{\nu}^{(t_i)}(\eta^{(t_j)})=\pi_j^{(t_i)}\odot\eta^{(t_j)}$, with $\pi_j^{(t_i)}\in\{0,1\}^{K}$. For each pattern, the adaptive selector ranks $|\eta_k^{(t_j)}|$ across the prefix and learns how much support to retain. A hard forward mask retains activations, while a straight-through surrogate trains the retention fraction (Appendix~\ref{app:selector}). The retained activations now enter the write side. Since $\Lambda_{\mathrm{rep}}$ and $U_{\mathrm{rep}}$ are shared across states, a state-conditioned gate modulates the retained pattern activations for the current compressed state $\gamma^{(t_i)}=\sigma(\mathrm{MLP}_{\gamma}(z_b^{(t_i)})), w^{(t_j)} = U_{\mathrm{rep}}\Lambda_{\mathrm{rep}} \left( \gamma^{(t_i)}\odot\Pi_{\nu}^{(t_i)}(\eta^{(t_j)}) \right).$ Thus, $w^{(t_j)}$ is the write candidate proposed by timestep $t_j$. The candidate set $\mathcal{W}^{(t_i)}=\{w^{(t_j)}\}_{j=1}^{i}$ is pooled with learned normalized weights $\alpha_j^{(t_i)}$ into one correction for the current state $\Delta_b^{(t_i)} = P_{\Delta} \left( \sum_{j=1}^{i} \alpha_j^{(t_i)}w^{(t_j)} \right).$ Here, $\alpha_j^{(t_i)}$ only controls the relative contribution of timestep $t_j$ during pooling. Its scoring and softmax normalization are given in Appendix~\ref{app:method-details}. Finally, $z_b^{(t_i)}$ remains explicitly as the base and the correction is added through residual fusion:
\begin{equation}
\tilde z_b^{(t_i)}
=
z_b^{(t_i)}
+
g_F^{(t_i)}
\odot
\mathrm{MLP}_{\mathrm{probe}}
\left(
[z_b^{(t_i)};\Delta_b^{(t_i)}]
\right),
\label{eq:repair_update_main}
\end{equation}
where $g_F^{(t_i)}=\mathrm{sigmoid}(\mathrm{MLP}_{F}([z_b^{(t_i)};\Delta_b^{(t_i)}]))$ controls how much correction is added. The repaired state is then passed to the original task head. Full aggregation and fusion details are in Appendix~\ref{app:aggregation-fusion}.

For prompt-conditioned LLM hosts, \repair{} follows the same core idea through a different evidence interface. Unlike encoder hosts, where one representation per interaction is already available in the same forward pass, an LLM processes the serialized history as tokens and therefore does not directly expose an equivalent interaction-level sequence. We thus encode each interaction separately as corrective evidence, while the frozen LLM provides the task-facing terminal state $z_b^{(t_i)}$. The repaired state $\tilde{z}_b^{(t_i)}$ is then mapped to a continuous prefix for the frozen decoder. Full details are given in Appendix~\ref{app:llm-hosts}.

\textbf{IV. Training Objective.} The correction should improve the task while preserving its timestep provenance and source interaction. We train
\begin{equation}
\mathcal{L}_{\mathrm{repair}}
=
\omega_{\mathrm{task}}\mathcal{L}_{\mathrm{task}}
+
\omega_{\mathrm{pos}}\mathcal{L}_{\mathrm{pos}}
+
\omega_{\mathrm{coh}}\mathcal{L}_{\mathrm{coh}},
\label{eq:repair_objective_main}
\end{equation}
where $\mathcal{L}_{\mathrm{task}}$ uses the downstream task loss (cross-entropy for recommendation or teacher-forced generation loss), $\mathcal{L}_{\mathrm{pos}}$ uses relative-position classification, and $\mathcal{L}_{\mathrm{coh}}$ uses classification and contrastive supervision to tie corrective activations to their source evidence. We fix $(\omega_{\mathrm{task}},\omega_{\mathrm{pos}},\omega_{\mathrm{coh}})=(0.5,0.2,0.3)$ across all hosts, avoiding host-specific coefficient search and testing one reusable correction objective. Exact losses, coefficients, and task-specific instantiations are in Appendix~\ref{app:repair-objective-implementation}.

\begin{table}[t]
\centering
\caption{Evaluation design. Recommendation ranks one target against $499$ negatives. Host architectures and citations are in Appendix~\ref{app:task-baselines}, with protocols in Appendix~\ref{app:evaluation-details}.}
\label{tab:evaluation_coverage}
\scalebox{0.83}{
\begin{tabular}{llll}
\toprule
\textbf{Task} & \textbf{Dataset} & \textbf{Host Coverage} & \textbf{Test Instances} \\
\midrule
Movie recommendation & MovieLens & 9 sequential hosts & 15K \\
News recommendation & MIND & 7 history encoders & 20K \\
News recommendation & PENS & History encoders and LLMs & 20K \\
Product recommendation & Amazon Reviews 2023 & 3 multimodal hosts & 2K \\
Headline generation & PENS & 11 encoder interfaces and 2 LLMs & 20K \\
\bottomrule
\end{tabular}}
\end{table}

\section{Evaluation}\label{sec:evaluation}

We evaluate whether cached evidence enables useful correction with the encoder and task head frozen (RQ1), whether the learned correction is compact and temporally structured across hosts (RQ2), and how the repaired state is used downstream in recommendation and generation (RQ3).

\textbf{I. Experimental Setup.}\label{subsec:experimental-setup} We evaluate recommendation and personalized generation on MovieLens~\citep{movielens-dataset}, MIND~\citep{dataset-mind-acl-2020}, PENS~\citep{pens-acl-2021}, and Amazon Reviews 2023~\citep{hou2024amazonreviews2023}. TG-ReDial~\citep{tgredial-acl-2020} is used separately as a discourse-derived stress test. Following \citet{peraugy-tmlr-2025}, we organize histories as UIG trajectories with user-disjoint train/test splits. Although test users are unseen during training, the host receives their observed history prefix $\tau_b^{t_{1:i}}$ at each prediction time. Each prefix is paired with its next supervised target ${r_u^\star}^{(t_{i+1})}$, while generation additionally receives the task-side query at $t_i$.

Recommendation ranks one ground-truth target against 499 sampled negatives and reports MRR, nDCG@5/10, and Hit@10. Paired conditions use the same held-out users, prefixes, targets, and candidate sets. Generation evaluates the same outputs using ROUGE~\citep{rouge}, METEOR~\citep{meteor}, BLEU~\citep{bleu}, BERTScore~\citep{bertscore}, weighted and paraphrase-aware ROUGE variants, and personalization-sensitive PerSEval~\citep{perseval-tmlr-2024}. Recommendation metrics use a $0$ to $100$ scale, while generation tables state their display scale and relative gains are percentages. Dataset construction, candidate sampling, and metric details are in Appendices~\ref{app:evaluation-details} and~\ref{app:metrics}.

Host selection varies history compression and downstream use (Table~\ref{tab:evaluation_coverage}). GRU4Rec, SASRec, and Mamba4Rec cover recurrent, self-attentive, and selective state-space sequence models, respectively~\citep{gru4rec-2016,sasrec-2018,mamba4rec-arxiv-2024}. NRMS uses self-attention for news histories, while LSTUR combines long- and short-term user representations~\citep{nrms,lstur-acl-2019}. SigLIP2 and OpenCLIP supply image and text features for multimodal product histories~\citep{tschannen2025siglip2,cherti2023openclip}. Walk2Pers and IMPerSumm test interaction-conditioned personalized generation~\citep{walk2pers-iclr-2026,impersumm-www-2026}. DeepSeek-R1-Distill-Qwen-32B and Qwen2.5-32B~\citep{deepseek-arxiv-2025,qwen2.5-2025} test the separate frozen-LLM interface, where repaired states enter through continuous prefixes (Appendix~\ref{app:llm-hosts}). These choices test correction across compression mechanisms and whether downstream interfaces use recovered preference evidence. The complete host inventory and task interfaces are in Appendix~\ref{app:task-baselines}.

\textbf{II. Frozen-Host Protocol.}\label{subsec:frozen-hosts} For each dataset and host pair, all RQ1 conditions start from the same task-trained host checkpoint. \emph{Original Host} uses it unchanged. \emph{Head-only FT} keeps the encoder and task-facing state fixed and trains only the original task head for five epochs, retaining the best validation checkpoint. \emph{\repair{} (Encoder+Head Frozen)} freezes both original host components and trains only \repair{} from cached per-timestep representations. This is the frozen-head test of post-compression recoverability. Starting from the trained repair checkpoint, \emph{\repair{} + Head Alignment} jointly updates \repair{} and the task head for two further epochs while keeping the encoder frozen (Appendix~\ref{app:stage3-alignment}). All trainable stages use validation-selected checkpoints. Optimization schedules are in Appendix~\ref{app:training-details}. The repair architecture and fixed coefficients are in Appendix Table~\ref{tab:repair_hyperparameters}.

\textbf{III. Corrective-Structure Diagnostics.}\label{subsec:mechanism-diagnostics} For RQ2, we vary $K\in\{32,64,128,192\}$, the number of learned corrective patterns, to test correction compactness. We also construct evaluation-only histories with long-range (L-Tr), recent (S-Tr), or episodic (E-Tr) evidence while keeping the supervised target fixed and without retraining. These controls test whether post-compression temporal evidence simply mirrors the temporal pattern placed in the history. We further ablate temporal resolution and selective retention. These diagnostics characterize the learned corrective representation and do not certify an intrinsic rank or semantic decomposition of the frozen host state.

\textbf{IV. Downstream-Utilization Diagnostics.}\label{subsec:utilization-diagnostics} For recommendation, we compare pre-head \repair{} with two adapted post-head alternatives: \emph{SERAC-Corr}, a memory-based correction inspired by SERAC~\citep{serac-icml-2022}, and \emph{Rec-Denoiser-Corr}, which adjusts output-side timestep influence following sequential denoising~\citep{recdenoiser-recsys-2022}. They use the same task-trained host checkpoint, data split, held-out instances, candidate sets, alignment budget, and frozen-host computation. Because their architectures also differ, this comparison evaluates the implemented correction settings rather than intervention locus alone. For personalized generation, we compare standard generation-quality and personalization-sensitive metrics on the same outputs to test whether recovery primarily improves generic text quality or user-specific personalization. Adaptation details are in Appendix~\ref{app:corrective-baseline-details}.

\textbf{V. Statistical Evaluation.}\label{subsec:statistical-scope} OpenCLIP and SigLIP2 on Amazon and EBNR on PENS have three-seed means and variability in Appendix Table~\ref{tab:rq1_rebuttal_stats}. Because a held-out user may contribute multiple prefixes, 95\% confidence intervals use 10,000 user-clustered bootstrap resamples. Extended settings without repeated-run uncertainty are reported as point estimates.

\section{Results And Analysis}\label{sec:results}

\begin{table*}[t]
\centering
\small
\caption{\textbf{RQ1: Post-compression recoverability under frozen-host controls.} Columns compare the original host, head-only continuation on the unchanged state, \repair{} with encoder and head frozen, and two-epoch joint \repair{}-head alignment with the host encoder frozen. Values are MRR / nDCG@10. Three-seed means, variability, and bootstrap intervals are available for OpenCLIP and SigLIP2 on Amazon and EBNR on PENS (Appendix Table~\ref{tab:rq1_rebuttal_stats}). The remaining rows are reported as point estimates. S/M/W denote strongest, median, and weakest original hosts.}
\label{tab:rq1_representative_hosts}
\scalebox{0.7}{
\begin{tabular}{llcccc}
\toprule
\textbf{Dataset} &
\textbf{Host} &
\multicolumn{4}{c}{\textbf{MRR / nDCG@10}} \\
\cmidrule(lr){3-6}
&
&
\textbf{Original Host} &
\textbf{Head-only FT} &
\textbf{\repair{} (Frozen Head)} &
\textbf{\repair{} + Head Alignment} \\
\midrule

\multirow{3}{*}{MovieLens}
& S: \texttt{TiSASRec}
& 35.51 / 41.14
& 35.57 / 41.19
& 36.38 / 42.07
& 36.78 / 42.91 \\

& M: \texttt{Mamba4Rec}
& 29.29 / 38.50
& 29.48 / 38.89
& 33.25 / 39.20
& 34.26 / 40.17 \\

& W: \texttt{SASRec}
& 26.43 / 33.82
& 27.05 / 34.37
& 29.57 / 40.08
& 28.49 / 35.18 \\

\midrule

\multirow{3}{*}{PENS}
& S: \texttt{LSTUR}
& 10.44 / 12.15
& 10.98 / 12.32
& 11.98 / 12.71
& 12.85 / 15.88 \\

& M: \texttt{EBNR}
& 2.65 / 2.45
& 2.93 / 2.86
& 4.07 / 3.97
& 11.97 / 14.28 \\

& W: \texttt{NAML}
& 1.29 / 0.81
& 2.75 / 1.46
& 5.20 / 5.21
& 12.38 / 15.13 \\

\midrule

\multirow{3}{*}{MIND}
& S: \texttt{LSTUR}
& 51.43 / 59.63
& 52.37 / 59.96
& 54.17 / 62.27
& 55.84 / 66.92 \\

& M: \texttt{EBNR}
& 47.08 / 51.63
& 47.63 / 52.31
& 49.43 / 55.17
& 53.22 / 63.18 \\

& W: \texttt{SCAPE-GRU}
& 42.53 / 41.19
& 43.21 / 41.82
& 43.31 / 44.25
& 45.32 / 47.36 \\

\midrule

\multirow{3}{*}{Amazon}
& S: \texttt{SigLIP2}
& 2.40 / 2.18
& 2.67 / 2.72
& 4.16 / 4.05
& 17.72 / 20.35 \\

& M: \texttt{OpenCLIP}
& 1.59 / 1.05
& 2.21 / 2.17
& 4.43 / 4.66
& 19.49 / 22.08 \\

& W: \texttt{DINOv2-E5}
& 1.19 / 0.72
& 2.35 / 2.42
& 3.16 / 2.90
& 16.15 / 18.19 \\

\bottomrule
\end{tabular}
}
\end{table*}

\subsection{RQ1: Post-Compression Recoverability With A Frozen Host}\label{subsec:rq1}

\textbf{Can correction help after compression?} With both the task-trained encoder and original task head frozen, training only \repair{} improves MRR and nDCG@10 for all twelve representative hosts in Table~\ref{tab:rq1_representative_hosts}. Average MRR/nDCG@10 gains are $+2.66/+2.63$, $+2.29/+2.16$, $+1.96/+3.08$, and $+2.19/+2.55$ on MovieLens, PENS, MIND, and Amazon Reviews 2023. Thus, even after compression into $z_b^{(t_i)}$, \emph{cached per-timestep representations still provide evidence that can improve the state seen by the unchanged task head}.

\textbf{Can head-only finetuning explain the gains?} \emph{Head-only FT} keeps the encoder and task-facing state fixed and trains only the original task head. Its average MRR gains are $+0.29$, $+0.76$, $+0.72$, and $+0.68$ across the four datasets, compared with $+2.66$, $+2.29$, $+1.96$, and $+2.19$ from frozen-head \repair{}. The same ordering holds in all twelve settings. Hence, \emph{further finetuning of the same task head does not reproduce the gains from correcting the state itself}.

\textbf{Does head alignment add further benefit?} Keeping the encoder frozen, jointly aligning \repair{} and the task head further improves MRR and nDCG@10 in 11/12 settings. We therefore use head-aligned \repair{} (Full) for further studies. Full aligned results are in Appendix Tables~\ref{tab:rq1_predictive_movielens_versioned} to~\ref{tab:rq1_predictive_amazon_compact}.

\subsection{RQ2: Structure Of The Learned Corrective Representation}\label{subsec:rq2}

\textbf{Does correction require many corrective patterns?} One concern is that evidence under-expressed after compression may require a large corrective representation. Across $K\in\{32,64,128,192\}$, however, $K=128$ gives the best MRR and nDCG@10 in all five settings, while $K=192$ lowers both metrics in all five (Table~\ref{tab:rq2_rank_sweep}). The mean learned read-coordinate energy ratio at $K=128$ is $96.46\%$ across these settings (Appendix~\ref{app:k_choice}, Equation~\ref{eq:learned-read-energy}). This ratio is distinct from a leading-eigenspace energy fraction. The separate conditional spectral approximation bound is stated in Theorem~\ref{thm:k_admissible} (Appendix~\ref{app:k_choice}). Thus, $K=128$ is an empirically effective repair capacity.

\begin{table*}[t]
\centering
\small
\caption{\textbf{RQ2: Repair-space rank sensitivity after compression.} MRR / nDCG@10 for two-epoch downstream alignment across $K$. Bold marks the strongest repair rank for each host.}
\label{tab:rq2_rank_sweep}

\scalebox{0.78}{
\begin{tabular}{lccccc}
\toprule
\textbf{\boldmath Host / Dataset} &
\textbf{\boldmath Original Host} &
\multicolumn{4}{c}{\textbf{\boldmath \repair{} Rank $K$}} \\
\cmidrule(lr){3-6}
&
&
\textbf{\boldmath 32} &
\textbf{\boldmath 64} &
\textbf{\boldmath 128} &
\textbf{\boldmath 192} \\
\midrule

\texttt{OpenCLIP / Amazon}
& 1.59 / 1.05
& 9.31 / 9.78
& 11.17 / 12.22
& \textbf{19.49 / 22.08}
& 11.63 / 12.16 \\

\texttt{SigLIP2 / Amazon}
& 2.40 / 2.18
& 10.44 / 11.36
& 8.42 / 8.77
& \textbf{17.72 / 20.35}
& 13.46 / 13.15 \\

\texttt{EBNR / PENS}
& 2.65 / 2.45
& 6.13 / 6.19
& 8.65 / 8.58
& \textbf{11.97 / 14.28}
& 11.23 / 13.17 \\

\texttt{Mamba4Rec / MovieLens}
& 29.29 / 38.50
& 31.47 / 38.85
& 32.38 / 39.26
& \textbf{34.26 / 40.17}
& 34.02 / 39.96 \\

\texttt{LSTUR / MIND}
& 51.43 / 59.63
& 52.67 / 62.14
& 53.49 / 64.13
& \textbf{55.84 / 66.92}
& 54.73 / 66.12 \\

\bottomrule
\end{tabular}
}
\end{table*}

\begin{table*}[t]
\centering
\caption{\textbf{RQ2 temporal diagnostic.} If the temporal module merely copied the controlled input timing, the copy-through prediction would be $L$, $S$, and $E$ for L-Tr, S-Tr, and E-Tr, respectively. This pattern holds in only \textbf{6/12} settings. Full \repair{}, which jointly uses $L+S+E$, outperforms the best isolated view in \textbf{11/12} settings. Brackets report MRR.}
\label{tab:rq2_temporal_summary}
\scalebox{0.78}{
\begin{tabular}{llcccccc}
\toprule
& &
\multicolumn{3}{c}{\textbf{\boldmath Controlled one-view diagnostic}}
& \multicolumn{3}{c}{\textbf{\boldmath Full \repair{} ($L+S+E$ jointly)}} \\
\cmidrule(lr){3-5}
\cmidrule(lr){6-8}
\textbf{\boldmath Dataset} & \textbf{\boldmath Host}
& \textbf{\boldmath L-Tr} & \textbf{\boldmath S-Tr} & \textbf{\boldmath E-Tr}
& \textbf{\boldmath L-Tr} & \textbf{\boldmath S-Tr} & \textbf{\boldmath E-Tr} \\
\midrule
\multicolumn{2}{l}{\emph{Copy-through prediction}}
& $L$ & $S$ & $E$
& \multicolumn{3}{c}{\emph{all three views used jointly}} \\
\midrule
\multirow{2}{*}{MovieLens}
& \texttt{Mamba4Rec}
& $S$ [33.95] & $S$ [33.92] & $S$ [33.88]
& \textbf{[34.87]} & \textbf{[35.52]} & \textbf{[34.39]} \\
& \texttt{TiSASRec}
& $L$ [36.19] & $S$ [36.20] & $S$ [36.04]
& \textbf{[37.38]} & \textbf{[36.93]} & \textbf{[36.63]} \\
\midrule
\multirow{2}{*}{MIND}
& \texttt{NAML}
& $L$ [52.12] & $S$ [52.03] & $L$ [53.70]
& \textbf{[53.31]} & \textbf{[54.55]} & \textbf{[53.80]} \\
& \texttt{EBNR}
& $S$ [55.08] & $L$ [53.64] & $E$ [53.70]
& \textbf{[56.79]} & \textbf{[55.69]} & [53.19] \\
\midrule
\multicolumn{2}{r}{\textbf{Summary}}
& \multicolumn{3}{c}{\textbf{Copy-through match: 6/12}}
& \multicolumn{3}{c}{\textbf{Full \repair{} wins: 11/12}} \\
\bottomrule
\end{tabular}
}
\end{table*}

\textbf{Does temporal repair simply copy input timing?} To test whether corrective timing simply follows input timing, we construct evaluation-only histories with target-related evidence concentrated in long-range (L-Tr), recent (S-Tr), or localized episodic (E-Tr) portions, while holding the supervised target fixed and without retraining. If the temporal module simply copied input timing, the prediction would be $L/S/E$ under L-Tr/S-Tr/E-Tr. Instead, this mapping holds in only \textbf{6/12} probes, and \texttt{Mamba4Rec} yields $S/S/S$ across all three input patterns (Table~\ref{tab:rq2_temporal_summary}). Thus, \emph{the most useful corrective temporal evidence is not determined simply by where evidence appears in the input}.

\textbf{Do multiple temporal views help?} Full \repair{} jointly uses long-, short-, and episodic-support, and $L+S+E$ outperforms every isolated-view ablation in \textbf{11/12} settings. Removing temporal resolution can eliminate the gain: \texttt{Mamba4Rec} drops to $28.43$ MRR, below the untouched frozen host at $29.29$, whereas full temporal \repair{} reaches $34.26$. Hence, useful post-compression correction can be supported by multiple temporal scales simultaneously. Theorem~\ref{thm:kernel-trace-recoverability} in Appendix~\ref{app:temporal_recoverability} formalizes temporal identifiability. The full pathway ablation is in Appendix Table~\ref{tab:rq1_timescale_ablation}, with the gradient analysis in Appendix~\ref{app:kernel-backprop}.

\textbf{Does selective write-back become more useful with longer prefixes?} Across $i\in\{10,20,30,40,50\}$, selection is host-dependent at earlier prefixes. By $i=40$, it improves both MRR and nDCG@10 for three of four hosts, and by $i=50$ for all four. Thus, selectivity becomes more consistent across hosts as corrective evidence accumulates. Selection acts at the finer timestep-pattern level, so an interaction can contribute one corrective pattern while another is suppressed (Appendix Table~\ref{tab:rq3_timestep_selection_length_ablation}, Appendix~\ref{app:length-selection-results}).

\textbf{Is task supervision alone sufficient?} $\mathcal{L}_{\mathrm{task}}$ tells \repair{} whether correction helps downstream, while $\mathcal{L}_{\mathrm{pos}}$ preserves \emph{where} evidence came from and $\mathcal{L}_{\mathrm{coh}}$ preserves \emph{which interaction} produced it. Across all four hosts in Appendix Table~\ref{tab:rq1_loss_progressive_ablation}, task supervision helps, timestep provenance adds further gains, and source coherence gives the strongest complete objective. Hence, preserving evidence provenance adds value beyond learning only that a correction helps the task. Theorem~\ref{thm:loss-design-recoverability} in Appendix~\ref{app:loss_recoverability} states the corresponding conditional local bound.

\subsection{RQ3: Where And How Is Recovered Preference Evidence Used?}\label{subsec:rq3}

\begin{table*}[t]
\centering
\caption{\textbf{RQ3: Intervention-locus comparison under the same frozen-host protocol.} Pre-head \repair{} is compared with two adapted post-head correctors using the same task-trained host and two-epoch downstream alignment protocol.}
\label{tab:rq2_corrector_comparison}
\scalebox{0.65}{
\begin{tabular}{llcccccccc}
\toprule
\multirow{2}{*}{\textbf{Tasks (Datasets)}} & \multirow{2}{*}{\textbf{Host Model}}
& \multicolumn{2}{c}{\textbf{Base}}
& \multicolumn{2}{c}{\textbf{+\texttt{SERAC-Corr}}}
& \multicolumn{2}{c}{\textbf{+\texttt{Rec-Denoiser-Corr}}}
& \multicolumn{2}{c}{\textbf{+\repair{}}} \\
\cmidrule(lr){3-4}\cmidrule(lr){5-6}\cmidrule(lr){7-8}\cmidrule(lr){9-10}
& & \textbf{MRR} & \textbf{nDCG@10} & \textbf{MRR} & \textbf{nDCG@10} & \textbf{MRR} & \textbf{nDCG@10} & \textbf{MRR} & \textbf{nDCG@10} \\
\midrule

\multirow{3}{*}{Movie Reco. (MovieLens)}
& \texttt{TiSASRec}  & 35.51 & 41.14 & 33.17 & 41.32 & 34.21 & 42.08 & 36.78 & 42.91 \\
& \texttt{Mamba4Rec} & 29.29 & 38.50 & 29.81 & 38.63 & 31.32 & 39.22 & 34.26 & 40.17 \\
& \texttt{SIGMA}     & 33.27 & 38.96 & 31.95 & 38.98 & 29.43 & 39.61 & 34.81 & 40.72 \\

\midrule

\multirow{3}{*}{News Reco. (MIND)}
& \texttt{LSTUR} & 51.43 & 59.63 & 53.66 & 62.17 & 52.45 & 61.35 & 55.84 & 66.92 \\
& \texttt{NAML}  & 45.65 & 46.56 & 46.47 & 51.33 & 48.12 & 48.94 & 53.22 & 63.18 \\
& \texttt{NRMS}  & 44.91 & 45.73 & 48.13 & 46.74 & 42.25 & 48.23 & 52.74 & 62.44 \\

\midrule

\multirow{2}{*}{Product Reco. (Amazon)}
& \texttt{SigLIP2} & 2.40 & 2.18 & 7.43 & 8.67 & 9.54 & 10.61 & 17.72 & 20.35 \\
& \texttt{OpenCLIP} & 1.59 & 1.05 & 5.32 & 7.41 & 6.37 & 8.11 & 19.49 & 22.08 \\

\bottomrule
\end{tabular}
}
\end{table*}


\textbf{Is pre-head repair more effective than post-head correction?} We compare state correction before the head with two implemented output-side alternatives. Under the same task-trained checkpoint, frozen encoder, and two-epoch alignment protocol, pre-head \repair{} achieves the best MRR and nDCG@10 in all \textbf{8/8} settings against \texttt{SERAC-Corr} and \texttt{Rec-Denoiser-Corr}. This comparison favors the evaluated pre-head configuration. The adapted mechanisms and the limits of attributing their difference to intervention location are specified in Appendix~\ref{app:corrective-baseline-details}.

\begin{table}[t]
\centering
\caption{RQ3 generation and personalization on PENS. Values are relative gains (\%). Full ranking and generation comparisons are in Appendix Table~\ref{tab:rq3_task_interface_full}. Absolute scores are in Appendix Tables~\ref{tab:rq1_pens_accuracy_main_all} and~\ref{tab:rq1_pens_pse_main_selected}.}
\label{tab:rq3_task_interface}
\scalebox{0.83}{
\begin{tabular}{lrrrr}
\toprule
\textbf{Host} & \textbf{RG-L} & \textbf{METEOR} & \textbf{PSE-W-JSD} & \textbf{PSE-W-RG-L} \\
\midrule
\texttt{Walk2Pers} & +6.58 & +13.38 & +13.25 & +15.32 \\
\texttt{IMPerSumm} & -2.69 & +4.76 & +20.76 & +25.23 \\ \hdashline
\texttt{DeepSeek-32B} & +53.21 & +45.49 & +9.32 & +5.05 \\
\texttt{Qwen2.5-32B} & +41.58 & +47.64 & +31.52 & +12.61 \\
\bottomrule
\end{tabular}}
\end{table}

\textbf{Does \repair{} improve personalization in preference-coupled decoders?} PerSEval (PSE)~\citep{perseval-tmlr-2024} measures personalization with respect to evolving user preference rather than snapshot generation accuracy alone. For the same source document, it compares differences between users' generated outputs with differences between their expected outputs and penalizes poor accuracy. If \repair{} improves preference evidence, preference-coupled generators should preserve that benefit through decoding. We observe this for two generators in Table~\ref{tab:rq3_task_interface}: \texttt{Walk2Pers} gains $+13.25\%/+15.32\%$ on PSE-W-JSD/PSE-W-RG-L, while \texttt{IMPerSumm} gains $+20.76\%/+25.23\%$, with PSE-P-RG-L improving by $39.29\%$.

\textbf{Can \repair{} improve preference conditioning in a frozen LLM?} For LLM hosts, \repair{} encodes interaction-level evidence externally and maps the repaired state to a continuous prefix for the frozen decoder (Appendix~\ref{app:llm-hosts}). The gains in Table~\ref{tab:rq3_task_interface} differ across decoders. DeepSeek improves generation quality much more than personalization, whereas Qwen improves both. This supports the usability of the induced preference pathway, with host-dependent output gains. It does not directly identify the decoder mechanism responsible.

\section{Discussion And Limitations}
\label{sec:discussion-limitations}
For encoder hosts, \repair{} uses representations from the existing frozen forward pass. This avoids re-encoding the history, while adding the cost of retaining cached evidence and running the repair module. Across the evaluated predictive hosts, parameter-derived memory overhead is $31$ to $36\%$ and latency overhead is $38$ to $44\%$ (Appendix Tables~\ref{tab:rq4_accuracy_scalability_overhead_est} and~\ref{tab:rq4_compute_footprint}). Evidence discarded before the host processes the history is outside the recovery path. On TG-ReDial, preference actions must be inferred from recommendations, acceptance or rejection, and topic shifts. Sequential hosts benefit from repair, while the mixed \texttt{TextCNN} response limits claims of uniformly beneficial correction across these inferred interaction interfaces (Appendix Table~\ref{tab:rq1_tgredial_joint}, Appendix~\ref{app:tgredial-broader-applicability}). Additional noise and serving-path analyses are in Appendix Table~\ref{tab:rq3_noisy_trajectory_stress_blank_repair} and Appendix~\ref{app:deployment-results-expanded}.

\section{Conclusion}

In this paper, we propose \repair{}, a method for correcting compressed preference states using evidence retained in cached representations. \repair{} learns compact corrective patterns, resolves their temporal support, and selectively updates the state before downstream use. Across twelve recommendation hosts, it improves MRR and nDCG@10 with both encoder and task head frozen, outperforming head-only finetuning in every evaluated setting. The analyses identify host-dependent corrective timing and show that personalization gains in generation can differ from gains in generic text quality. These findings support post-compression correction as a way to improve personalization without updating the host encoder. Future work can study how to retain useful corrective evidence under tighter cache budgets and how downstream interfaces can better use repaired states.



\section*{Reproducibility Statement}
Appendix~\ref{app:method-details} specifies the repair computation and host interfaces, and Appendix~\ref{app:theory} states the assumptions and proofs of the theoretical results. Appendices~\ref{app:evaluation-details} and~\ref{app:metrics} describe data construction, training stages, evaluation protocols, and metric definitions. Appendix~\ref{app:task-baselines} identifies the hosts, Appendix~\ref{app:corrective-baseline-details} describes the adapted correctors, and Appendix~\ref{app:additional-results} provides extended analyses. Architecture and hyperparameters are given in Table~\ref{tab:repair_hyperparameters}.

\bibliographystyle{plainnat}
\bibliography{REPAIR_merged_references}

\appendix
\addtocontents{toc}{\protect\setcounter{tocdepth}{2}}

\section{\textbf{\repair{}} Implementation Details}
\label{app:method-details}

The implementation follows one question through six operations. Given a user state that under-expresses some available history, which cached evidence should change that state? We first put the state and cached interactions in a common space, read their differences into $K$ corrective coordinates, resolve when those coordinates occur, retain selected activations, pool the retained evidence, and fuse the resulting correction with the original state. The notation is collected in Table~\ref{tab:symbols_notation}, and the dimensions and fixed settings are specified in Table~\ref{tab:repair_hyperparameters}. Here, MLP denotes a multilayer perceptron, $[x;y]$ denotes concatenation, $\odot$ denotes elementwise multiplication, and $\sigma$ denotes the sigmoid unless a different local meaning is stated.

For encoder-based hosts, the host encoder and its original task head are fixed during frozen-head repair training. The raw interaction history is processed once by the host. \repair{} receives the task-facing preference state and a sequence tensor already produced within that forward computation. Prompt-conditioned LLM hosts use the separate structured-interaction interface in Appendix~\ref{app:llm-hosts}. Unless stated otherwise, all equations below are evaluated only over valid prefix positions. Padded positions are masked from centering, temporal resolution, selection, aggregation, and auxiliary losses.

\subsection{User Interaction Graph And Eventization}
\label{app:uig-construction}

An interaction combines an action with its associated content. For example, a reader's click on an article becomes a positive interaction, a logged skip becomes a negative interaction, and a request for a headline supplies the task command. The graph records these relations, while the ordered prefix determines which interactions are available before the next prediction. This conversion from heterogeneous logs to action and content pairs is called eventization.

We use the User Interaction Graph (UIG) abstraction of \citet{peraugy-tmlr-2025} to represent heterogeneous histories as ordered interaction trajectories. For user $u$, $\mathcal{G}_u=\langle\mathcal{N}_u,\mathcal{E}_u\rangle,$ where $\mathcal{N}_u$ contains the user and task-dependent content, query, response, and supervised-target nodes, and $\mathcal{E}_u$ records observed actions. Actions are normalized as $a^{(t_i)}\in\{\texttt{pos},\texttt{neg},\texttt{cmd}\}.$ Here, \texttt{pos} denotes positive feedback such as click, consume, accept, purchase, or select. \texttt{neg} denotes negative feedback such as skip, reject, non-click, or low preference. \texttt{cmd} denotes the task request.

An observed interaction is
\begin{equation}
b^{(t_i)}
=
\langle a^{(t_i)},v^{(t_i)}\rangle,
\qquad
v^{(t_i)}\in\mathcal{N}_u ,
\label{eq:app_interaction_unit}
\end{equation}
and a prefix ending at $t_i$ is
\begin{equation}
\tau_b^{t_{1:i}}
=
\langle
b^{(t_1)},\ldots,b^{(t_i)}
\rangle .
\label{eq:app_interaction_sequence}
\end{equation}
Predictive personalization uses the next content item as ${r^\star}_u^{(t_{i+1})}$. For generative personalization, the supervised target is the reference response and the task-side source or query $q^{(t_i)}$ is stored separately from the user-preference state. Dataset-specific trajectory update rules are given with the evaluation protocol.

\subsection{Frozen Host Interface}
\label{app:host-interface}

The host produces two related objects. Its sequence tensor retains one representation per valid history position, while its task-facing state is the single representation passed to the task head. The cache boundary specifies where we read the sequence tensor before the host compresses or pools it. Consequently, the cache contains already-computed evidence and is not a second encoding of the history.

For an encoder-based host, let
\begin{equation}
E_b^{(t_i)}
=
g_{\mathrm{enc}}
\!\left(
\tau_b^{t_{1:i}}
\right)
=
\left[
e_b^{(t_1)},\ldots,e_b^{(t_i)}
\right],
\qquad
e_b^{(t_j)}\in\mathbb{R}^{d_e},
\label{eq:app_host_sequence}
\end{equation}
denote the sequence-position representations produced by the frozen history computation at the cache boundary. The host subsequently forms the task-facing preference state
\begin{equation}
z_b^{(t_i)}
=
c_{\mathrm{host}}
\!\left(
E_b^{(t_i)}
\right),
\qquad
z_b^{(t_i)}\in\mathbb{R}^{d_z}.
\label{eq:app_host_compression}
\end{equation}
Both $g_{\mathrm{enc}}$ and $c_{\mathrm{host}}$ are part of the trained host and remain unchanged in the frozen-head condition.

The cache boundary is the sequence tensor supplied to the host's terminal history-compression or state-readout operation. For recurrent hosts, this is the sequence of recurrent representations from the same recurrence used to form the terminal state. For self-attentive and news hosts, it is the contextualized sequence before terminal pooling or state extraction. For state-space hosts, it is the per-step sequence representation available before terminal state readout. For multimodal recommendation hosts, it is the fused item sequence consumed by history compression. No additional history encoder is introduced to produce $E_b^{(t_i)}$.

For batched computation with maximum history length $L$, let $m_j^{(t_i)}=\mathbf{1}[j\leq i]$ denote the validity mask. All sequence operations below are restricted to $m_j^{(t_i)}=1$. The cache requires $O(Ld_e)$ storage per active prefix and is released after the corresponding forward/evaluation step.

\subsection{State-Relative Corrective Evidence}
\label{app:vector-implementation}

The cached vector and final state may have different dimensions or meanings. Their raw coordinates therefore cannot be subtracted directly. The projections $P_e$ and $P_z$ first map them into a shared $m$-dimensional space. A residual then expresses how one projected history position differs from the projected state. The read vectors turn that difference into $K$ learned corrective activations, and the write vectors later map those activations back toward the state interface.

The task-facing preference state and cached per-timestep representations are mapped into a common repair space:
\begin{equation}
\bar{z}_b^{(t_i)}
=
P_z z_b^{(t_i)},
\qquad
\bar{e}_b^{(t_j)}
=
P_e e_b^{(t_j)},
\label{eq:app_repair_space}
\end{equation}
with $P_z\in\mathbb{R}^{m\times d_z}, \qquad P_e\in\mathbb{R}^{m\times d_e}.$ Their state-relative residual is
\begin{equation}
r^{(t_j;t_i)}
=
\bar{e}_b^{(t_j)}
-
\bar{z}_b^{(t_i)}
\in\mathbb{R}^{m}.
\label{eq:app_unreduced_residual}
\end{equation}
This quantity is defined as a representation-space difference and is not an information-theoretic measure of missing content.

The repair operator is parameterized as
\begin{equation}
J_{\mathrm{rep}}
=
\hat{U}_{\mathrm{rep}}
\Lambda_{\mathrm{rep}}
\hat{V}_{\mathrm{rep}}^{\top},
\label{eq:app_low_rank_map}
\end{equation}
where
\[
\hat{U}_{\mathrm{rep}}
=
\mathrm{colnorm}(U_{\mathrm{rep}}),
\qquad
\hat{V}_{\mathrm{rep}}
=
\mathrm{colnorm}(V_{\mathrm{rep}}),
\]
\[
U_{\mathrm{rep}},V_{\mathrm{rep}}
\in\mathbb{R}^{m\times K},
\qquad
\Lambda_{\mathrm{rep}}
=
\operatorname{diag}
\left(
\operatorname{softplus}(\lambda_{\mathrm{raw}})
\right).
\]
Column normalization gives each read or write vector unit norm. It does not make different columns orthogonal, and $\Lambda_{\mathrm{rep}}$ contains learned positive transmission scales rather than measured singular values. The read coordinates are
\begin{equation}
d^{(t_j)}
=
\hat{V}_{\mathrm{rep}}^\top
r^{(t_j;t_i)}
\in\mathbb{R}^{K}.
\label{eq:app_residual_coordinate}
\end{equation}

Centering removes the mean activation of each coordinate over the current prefix. The resulting field emphasizes variation across history positions before temporal weighting. For a valid prefix of length $i$,
\begin{equation}
\tilde{d}^{(t_j)}
=
d^{(t_j)}
-
\frac{1}{i}
\sum_{q=1}^{i}
d^{(t_q)} .
\label{eq:app_centered_residual}
\end{equation}
In padded batches, the mean is computed only over valid positions and the centered values at invalid positions are set to zero.

The state term is shared by every position in a fixed prefix and therefore cancels under centering. Substitution gives
\begin{equation}
\tilde d^{(t_j)}=\hat V_{\mathrm{rep}}^\top P_e\left(e_b^{(t_j)}-\frac{1}{i}\sum_{q=1}^{i}e_b^{(t_q)}\right).
\label{eq:app_centering_cancellation}
\end{equation}
The temporal field consequently encodes variation within the cached sequence rather than its common offset from the state. State conditioning remains in the later coordinate gate, aggregation scores, and residual fusion. The read-coordinate energy analysis in Appendix~\ref{app:k_choice} concerns the uncentered residual and should not be read as an energy guarantee for this centered field.

\begin{figure*}[t]
    \centering
    \includegraphics[width=0.85\textwidth]{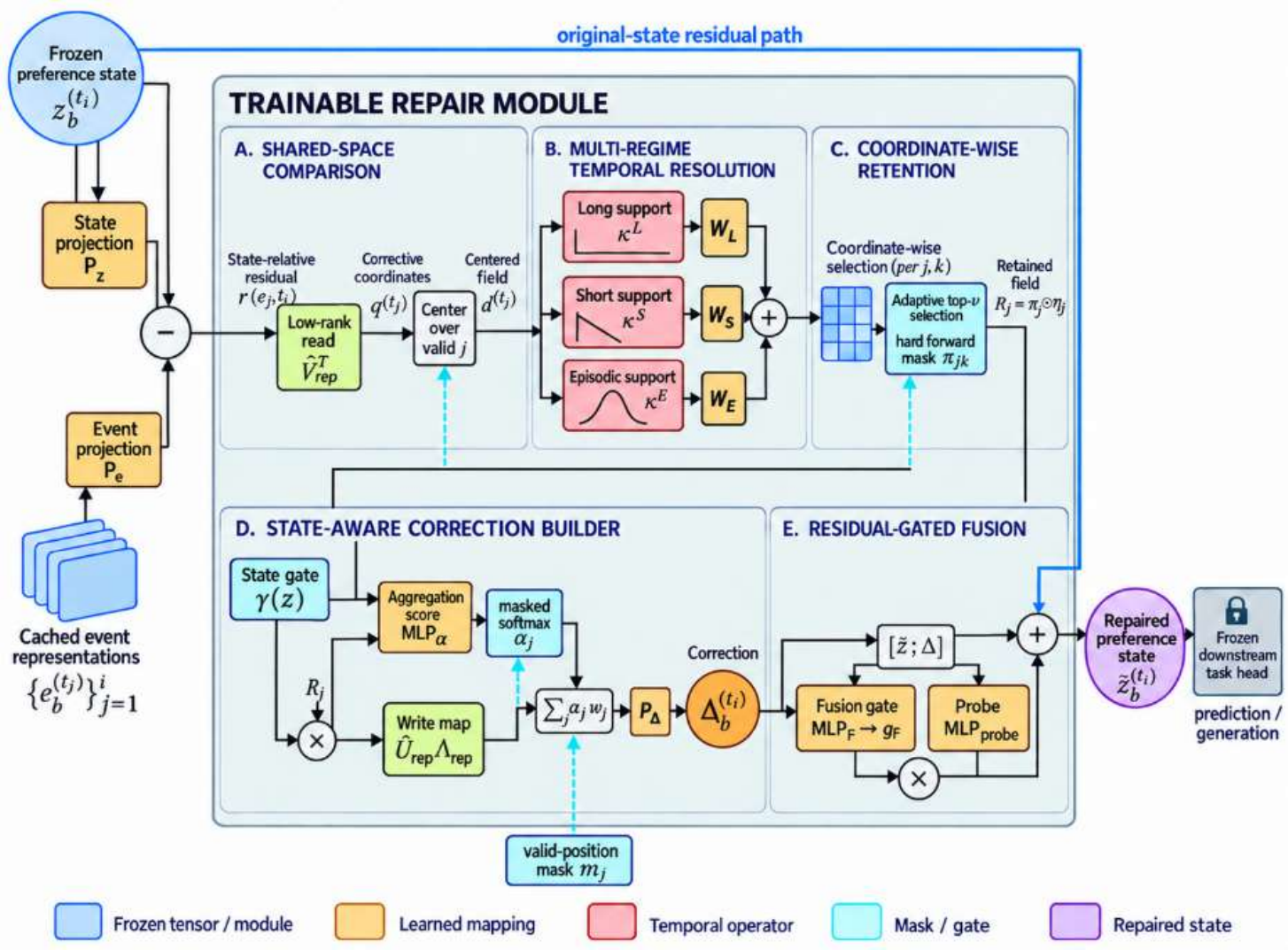}
    \caption{Encoder-host repair from cached evidence to the task head. Read from left to right. The state and cached sequence enter a shared space, the low-rank reader produces corrective coordinates, and centering precedes temporal resolution and coordinate-wise selection. State gating and weighted aggregation form a correction, which is fused with the original state. The diagram's raw-residual label denotes $r^{(t_j;t_i)}$ in Appendix~\ref{app:vector-implementation}. Dashed arrows indicate conditioning, and the original state remains the residual base.}
    \label{fig:repair_detailed}
\end{figure*}

\subsection{Temporal Resolution}
\label{app:temporal-responses}

Temporal resolution gives the same corrective activation different weight according to its lag from the current state. The exponential branch provides a learned decay, the linear branch provides a fixed recency envelope over the prefix, and the Poisson-shaped branch can emphasize a localized lag. The labels $L$, $S$, and $E$ name these functions. They do not label the user's underlying intent. The matrices $W_L$, $W_S$, and $W_E$ then mix the weighted coordinates into a shared corrective field.

For timestep $t_j$, define lag $\ell_j=i-j.$ The long-support, short-support, and episodic-support bases are coordinate-wise functions
\begin{equation}
\begin{split}
& \kappa_k^{L}(\ell_j) = \exp(-\beta_k\ell_j), \\
& \kappa_k^{S}(\ell_j) = \max\!\left(1-\frac{\ell_j}{i},0\right), \\
& \kappa_k^{E}(\ell_j) = \exp\!\left( \ell_j\log\xi_k -\xi_k -\log\Gamma(\ell_j+1) \right),
\end{split}
\label{eq:app_temporal_responses}
\end{equation}
where
\begin{equation}
\beta
=
\operatorname{softplus}(\beta_{\mathrm{raw}})
+10^{-4},
\qquad
\xi
=
\operatorname{softplus}(\xi_{\mathrm{raw}})
+10^{-4}.
\label{eq:app_positive_temporal_parameters}
\end{equation}

The resolved corrective field is
\begin{equation}
\eta^{(t_j)}
=
\sum_{\rho\in\{L,S,E\}}
W_{\rho}
\left(
\kappa^\rho(\ell_j)
\odot
\tilde{d}^{(t_j)}
\right),
\label{eq:app_regime_evidence}
\end{equation}
where each $W_{\rho}:\mathbb{R}^{K}\rightarrow\mathbb{R}^{K}$ is a trainable bias-free coordinate-mixing map. Invalid positions are zeroed after temporal resolution.

\subsection{Adaptive Top-\texorpdfstring{$\nu$}{nu} Retention}
\label{app:selector}

The selector decides which timestep activations are eligible to contribute to each corrective coordinate. For example, one interaction may contribute coordinate $k$ while its activation in another coordinate is suppressed. Selection is therefore finer than retaining or discarding an entire interaction. A hard mask determines the forward computation, while a smooth surrogate supplies gradients during training.

Selection is performed independently for each corrective coordinate. For coordinate $k$, define
\begin{equation}
\mu^{\mathrm{abs}}_k
=
\frac{1}{i}
\sum_{q=1}^{i}
|\eta_k^{(t_q)}|,
\qquad
\mu_k
=
\frac{1}{i}
\sum_{q=1}^{i}
\eta_k^{(t_q)},
\label{eq:app_selector_means}
\end{equation}
\begin{equation}
\sigma_k
=
\sqrt{
\frac{1}{i}
\sum_{q=1}^{i}
\left(
\eta_k^{(t_q)}-\mu_k
\right)^2
+10^{-8}
},
\label{eq:app_selector_std}
\end{equation}
and
\begin{equation}
s_k^{(t_i)}
=
\left[
\mu_k^{\mathrm{abs}};
\sigma_k;
\max_{1\leq q\leq i}|\eta_k^{(t_q)}|
\right].
\label{eq:app_selector_stats}
\end{equation}
The selector predicts a retained fraction
\begin{equation}
f_k^{(t_i)}
=
\sigma
\left(
\mathrm{MLP}_{\mathrm{sel}}
\left(
s_k^{(t_i)}
\right)
\right)
\in(0,1).
\label{eq:app_selector_fraction}
\end{equation}

Let $R_{j,k}^{(t_i)} \in \{0,\ldots,i-1\}$ be the zero-based rank of $|\eta_k^{(t_j)}|$ after sorting valid timesteps in descending magnitude. Its normalized rank position is
\begin{equation}
\rho_{j,k}^{(t_i)}
=
\frac{
R_{j,k}^{(t_i)}+0.5
}{i}.
\label{eq:app_rank_fraction}
\end{equation}
The hard forward mask is
\begin{equation}
h_{j,k}^{(t_i)}
=
\mathbf{1}
\left[
\rho_{j,k}^{(t_i)}
\leq
f_k^{(t_i)}
\right].
\label{eq:app_selector_hard}
\end{equation}
For gradient propagation, the implementation uses
\begin{equation}
s_{j,k}^{\mathrm{soft}}
=
\sigma
\left(
\frac{
f_k^{(t_i)}
-
\rho_{j,k}^{(t_i)}
}{
\tau_{\mathrm{sel}}
}
\right),
\qquad
\tau_{\mathrm{sel}}=0.05,
\label{eq:app_selector_soft}
\end{equation}
and the straight-through mask
\begin{equation}
\pi_{j,k}^{(t_i)}
=
h_{j,k}^{(t_i)}
+
s_{j,k}^{\mathrm{soft}}
-
\operatorname{stopgrad}
\left(
s_{j,k}^{\mathrm{soft}}
\right).
\label{eq:app_selector_st}
\end{equation}
The retained field is
\begin{equation}
\Pi_{\nu}^{(t_i)}
\left(
\eta^{(t_j)}
\right)
=
\pi_j^{(t_i)}
\odot
\eta^{(t_j)}.
\label{eq:app_selected_evidence}
\end{equation}
The rank threshold may retain zero positions for a coordinate when the predicted fraction lies below the first rank midpoint. There is no additional minimum-one rule in these equations. The implied coordinate-specific forward cardinality is
\[
\nu_k^{(t_i)}
=
\sum_{j=1}^{i}
h_{j,k}^{(t_i)}.
\]
The notation top-$\nu$ therefore refers to the hard forward support induced by the learned retained fraction. Sorting and the hard comparison are used for forward routing. Gradients to the selector are supplied through Equation~\ref{eq:app_selector_st}. No unbiased-gradient assumption is made.

\subsection{Correction Aggregation And Fusion}
\label{app:aggregation-fusion}

Three weights serve different purposes here. The selector determines eligibility for each timestep and coordinate, the state gate $\gamma$ scales corrective coordinates according to the current state, and the normalized weights $\alpha_j$ pool evidence across valid timesteps. After pooling, residual-gated fusion learns how the correction should alter each state component while keeping the original state as the additive base.

The current task-facing preference state produces a corrective-coordinate gate
\begin{equation}
\gamma^{(t_i)}
=
\sigma
\left(
\mathrm{MLP}_{\gamma}
\left(
z_b^{(t_i)}
\right)
\right)
\in(0,1)^K.
\label{eq:app_gate}
\end{equation}
For each valid timestep, an aggregation score is computed from the retained field and the state gate:
\begin{equation}
a_j^{(t_i)}
=
\mathrm{MLP}_{\alpha}
\left(
\left[
\Pi_{\nu}^{(t_i)}
(\eta^{(t_j)});
\gamma^{(t_i)}
\right]
\right),
\label{eq:app_attention_score}
\end{equation}
followed by masked softmax
\begin{equation}
\alpha_j^{(t_i)}
=
\frac{
\exp(a_j^{(t_i)})
}{
\sum_{q=1}^{i}
\exp(a_q^{(t_i)})
}.
\label{eq:app_attention}
\end{equation}

The write-space representation at timestep $t_j$ is
\begin{equation}
w^{(t_j)}
=
\hat{U}_{\mathrm{rep}}
\Lambda_{\mathrm{rep}}
\left(
\gamma^{(t_i)}
\odot
\Pi_{\nu}^{(t_i)}
(\eta^{(t_j)})
\right)
\in\mathbb{R}^{m}.
\label{eq:app_timestep_write}
\end{equation}
The task-facing correction is
\begin{equation}
\Delta_b^{(t_i)}
=
P_{\Delta}
\left(
\sum_{j=1}^{i}
\alpha_j^{(t_i)}
w^{(t_j)}
\right),
\qquad
P_{\Delta}\in\mathbb{R}^{d_z\times m}.
\label{eq:app_delta}
\end{equation}

Let
\[
p_b^{(t_i)}
=
\left[
z_b^{(t_i)};
\Delta_b^{(t_i)}
\right]
\in\mathbb{R}^{2d_z}.
\]
Residual-gated fusion is
\begin{equation}
\begin{split}
& g_F^{(t_i)} = \sigma \left( \mathrm{MLP}_{F} \left( p_b^{(t_i)} \right) \right), \\
& \tilde{z}_b^{(t_i)} = z_b^{(t_i)} + g_F^{(t_i)} \odot \mathrm{MLP}_{\mathrm{probe}} \left( p_b^{(t_i)} \right).
\end{split}
\label{eq:app_fusion}
\end{equation}
Accordingly,
\[
\mathrm{MLP}_{F}:
\mathbb{R}^{2d_z}\rightarrow\mathbb{R}^{d_z},
\qquad
\mathrm{MLP}_{\mathrm{probe}}:
\mathbb{R}^{2d_z}\rightarrow\mathbb{R}^{d_z}.
\]
The original task-facing preference state appears explicitly as the residual base of $\tilde{z}_b^{(t_i)}$. The downstream task interface receives $\tilde{z}_b^{(t_i)}$ in place of $z_b^{(t_i)}$.

\label{app:training-objective}

The three outer coefficients $\lambda_{\mathrm{task}}$, $\lambda_{\mathrm{pos}}$, and $\lambda_{\mathrm{coh}}$ below are the family weights denoted by $\omega_{\mathrm{task}}$, $\omega_{\mathrm{pos}}$, and $\omega_{\mathrm{coh}}$ in Section~\ref{sec:methodology}. The appendix reserves $\omega$ for coefficients inside each family.

The main methodology organizes repair supervision into three functional loss families:
\[
\mathcal{L}_{\mathrm{task}},
\qquad
\mathcal{L}_{\mathrm{pos}},
\qquad
\mathcal{L}_{\mathrm{coh}}.
\]
These families specify distinct supervisory roles rather than individual primitive losses. The task-utility family constrains the repaired state and aggregated correction to remain predictive of the supervised task target. The timestep-provenance family preserves the relative temporal origin of timestep-level corrective representations. The source-coherence family preserves correspondence between corrective representations and the action, interaction, and target-side evidence from which they are constructed.

For encoder hosts, the complete repair objective is
\[
\mathcal{L}_{\mathrm{repair}}^{\mathrm{enc}}
=
\lambda_{\mathrm{task}}
\mathcal{L}_{\mathrm{task}}^{\mathrm{enc}}
+
\lambda_{\mathrm{pos}}
\mathcal{L}_{\mathrm{pos}}^{\mathrm{enc}}
+
\lambda_{\mathrm{coh}}
\mathcal{L}_{\mathrm{coh}}^{\mathrm{enc}},
\]
with
\[
\left(
\lambda_{\mathrm{task}},
\lambda_{\mathrm{pos}},
\lambda_{\mathrm{coh}}
\right)
=
(0.5,0.2,0.3).
\]
The coefficients $\lambda_{\mathrm{task}}$, $\lambda_{\mathrm{pos}}$, and $\lambda_{\mathrm{coh}}$ therefore weight the three functional families. Each family is itself instantiated as a weighted combination of implementation-level supervisory terms, whose within-family coefficients are specified separately. The complete encoder-host decomposition, coefficients, training-side perturbations, and task-specific instantiations are given in Appendix~\ref{app:repair-objective-implementation}.

During frozen-head repair training, the host encoder and original downstream task interface remain fixed. Gradients update only \repair{} and its associated repair-side projections and supervision heads. Downstream-interface parameters are updated only in the separate alignment stage described in Appendix~\ref{app:evaluation-details}.

\subsection{Prompt-Conditioned LLM Hosts}
\label{app:llm-hosts}

For a large language model (LLM), tokens and user interactions do not have a one-to-one correspondence. This branch therefore constructs a separate vector for each serialized interaction and compares those vectors with the frozen LLM's terminal prompt state. The output of repair becomes a soft prefix, meaning a sequence of learned continuous vectors supplied before the decoder's ordinary token embeddings.

Prompt-conditioned LLM hosts use a separate input construction. Let
\begin{equation}
x_{\mathrm{task}}^{(t_i)}
=
\mathrm{Prompt}
\left(
\tau_b^{t_{1:i}},
q^{(t_i)}
\right),
\qquad
z_b^{(t_i)}
=
\phi_{\mathrm{LLM}}^{\mathrm{last}}
\left(
x_{\mathrm{task}}^{(t_i)}
\right),
\label{eq:llm_prompt_state}
\end{equation}
where $z_b^{(t_i)}$ is the terminal hidden representation of the frozen prompt computation.

The interaction representations in this branch are not native per-timestep hidden states from that prompt computation. Each serialized history interaction is represented separately as
\begin{equation}
e_b^{(t_j)}
=
P_c c_b^{(t_j)}
+
P_a a_b^{(t_j)},
\qquad
j\leq i,
\label{eq:llm_interaction_rep}
\end{equation}
where $c_b^{(t_j)}$ is the interaction-content representation and $a_b^{(t_j)}$ is the action embedding. The resulting pair $\left( z_b^{(t_i)}, \{e_b^{(t_j)}\}_{j=1}^{i} \right)$ is passed through the same residual, temporal-resolution, selection, aggregation, and fusion operators defined above.

The repaired state is converted to a continuous prefix
\begin{equation}
S_b^{(t_i)}
=
\mathrm{MLP}_{\mathrm{sp}}
\left(
\tilde{z}_b^{(t_i)}
\right)
\in
\mathbb{R}^{M\times d_{\mathrm{LLM}}}.
\label{eq:llm_soft_prompt}
\end{equation}
$S_b^{(t_i)}$ is prepended to the token embeddings used for prefix-conditioned decoding. The LLM backbone remains frozen. The trainable components in this branch are \repair{} and the soft-prefix projection.

Here, $M$ is the number of soft-prefix positions and $d_{\mathrm{LLM}}$ is the decoder embedding dimension. The key and value (KV) cache stores intermediate attention keys and values for a particular decoder input. Prefix insertion changes that input sequence. The decoder forward pass is therefore evaluated with the inserted prefix rather than reusing a KV cache computed for the unmodified prompt. Reported LLM latency includes this prefix-conditioned decoding step. The no-second-history-encoder condition used for encoder-based hosts does not imply that the original prompt forward cache can be reused after prefix insertion.

\begin{figure*}[t]
\centering
\begin{tikzpicture}[x=1cm,y=1cm,box/.style={draw,rounded corners,align=center,text width=3.1cm,minimum height=0.75cm,font=\small},flow/.style={->,thick}]
\node[box,fill=blue!8] (prompt) at (0,0) {Serialized history and task query};
\node[box,fill=blue!8] (llm) at (0,-1.5) {Frozen LLM prompt computation};
\node[box,fill=green!8] (events) at (4,0) {Separate content and action representations};
\node[box,fill=green!8] (eventvec) at (4,-1.5) {Interaction vectors $e_b^{(t_j)}$};
\node[box,fill=orange!10,text width=7cm] (repair) at (2,-3.1) {\repair{} reads state-relative evidence, resolves time, selects, aggregates, and fuses};
\node[box,fill=purple!10] (prefix) at (2,-4.6) {Repaired state $\tilde z_b^{(t_i)}$ mapped to soft prefix $S_b^{(t_i)}$};
\node[box,fill=blue!8] (decoder) at (2,-6.2) {Frozen decoder with soft prefix and task-token embeddings};
\node[box] (out) at (2,-7.7) {Ranking or generated output};
\draw[flow] (prompt)--(llm);
\draw[flow] (events)--(eventvec);
\draw[flow] (llm)--node[left,font=\small] {$z_b^{(t_i)}$} (repair.north west);
\draw[flow] (eventvec)--(repair.north east);
\draw[flow] (repair)--(prefix);
\draw[flow] (prefix)--(decoder);
\draw[flow] (decoder)--(out);
\end{tikzpicture}
\caption{Prompt-conditioned repair for a frozen LLM. The prompt computation supplies the terminal state, while a separate interaction construction supplies corrective evidence. Repair maps the fused state into continuous prefix vectors. The frozen decoder then evaluates the prefix-conditioned input to produce the task output. Decoder weights remain fixed, and the attention cache must correspond to this modified input.}
\label{fig:repair_llm-prompt-pipeline}
\end{figure*}

\begin{figure*}[t]
    \centering
    \includegraphics[width=0.85\textwidth]{Figures/prompt-template.pdf}
    \caption{Task prompts used for the LLM headline-generation and recommendation settings. The history supplies preference evidence, while the task block specifies the requested output and, for generation, the current source document. The examples illustrate prompt structure rather than additional evaluation instances. The repair-specific continuous prefix is constructed separately as described in Appendix~\ref{app:llm-hosts}.}
    \label{fig:prompt}
\end{figure*}

The interaction encoder, repaired state, and soft-prefix projection are located in Figure~\ref{fig:repair_llm-prompt-pipeline}. The task prompts for serializing histories and requests are reproduced in Figure~\ref{fig:prompt}.

\begin{table}[!t]
\centering
\caption{Notation for the interaction sequence, frozen host interface, repair operators, and supervision. Hats on $U_{\mathrm{rep}}$ and $V_{\mathrm{rep}}$ denote column normalization. The family weights denoted by $\omega$ in the main text are denoted by $\lambda$ here, with $\omega$ reserved for within-family coefficients.}
\label{tab:symbols_notation}
\setlength{\tabcolsep}{3pt}
\renewcommand{\arraystretch}{0.95}
\scalebox{0.6}{
\begin{tabular}{p{0.18\textwidth}p{0.28\textwidth}p{0.18\textwidth}p{0.28\textwidth}}
\toprule
\textbf{Symbol} & \textbf{Meaning} & \textbf{Symbol} & \textbf{Meaning} \\
\midrule

\multicolumn{4}{l}{\textit{Interaction and host interface}} \\
$\mathcal{G}_u=\langle \mathcal{N}_u,\mathcal{E}_u\rangle$ & User interaction graph
& $\mathcal{N}_u$ & User-specific typed node set \\
$\mathcal{E}_u$ & Action-labeled edge set
& $u^{(t_0)}$ & Initial user node \\
$v^{(t_i)}$ & Content or response node
& $a^{(t_i)}$ & Observed action label \\
$\{\mathrm{pos},\mathrm{neg},\mathrm{cmd}\}$ & Normalized action types
& $b^{(t_i)}=\langle a^{(t_i)},v^{(t_i)}\rangle$ & Interaction unit at timestep \\
$\tau^{t_{1:i}}_{b}$ & Model-side prefix trajectory
& $t_i$ & Current prefix endpoint \\
$t_j$ & Earlier prefix timestep
& $\ell_j=i-j$ & Lag from current timestep \\
$f_{\mathrm{enc}}$ & Frozen personalization encoder
& $\mathbf{e}^{(t_j)}_{b}$ & Cached per-timestep representation \\
$\mathbf{z}^{(t_i)}_{b}$ & Task-facing preference state
& $\mathbf{q}^{(t_i)}$ & Task-side query input \\
$r^{\star(t_{i+1})}_{u}$ & Supervised next-task target
& $h_{\mathrm{task}}$ & Prediction or generation interface \\

\midrule
\multicolumn{4}{l}{\textit{Core dimensions and projections}} \\
$d_e$ & Cached-representation dimension
& $d_z$ & Preference-state dimension \\
$m$ & Shared repair-space dimension
& $K$ & Corrective coordinate dimension \\
$H$ & Maximum theoretical prefix length
& $L$ & Host truncation window \\
$P_z$ & State-to-repair projection
& $P_e$ & Event-to-repair projection \\
$\bar{\mathbf{z}}^{(t_i)}_{b}=P_z\mathbf{z}^{(t_i)}_{b}$ & Projected host state
& $\bar{\mathbf{e}}^{(t_j)}_{b}=P_e\mathbf{e}^{(t_j)}_{b}$ & Projected cached representation \\

\midrule
\multicolumn{4}{l}{\textit{State-relative corrective representation}} \\
$\mathbf{r}^{(t_j;t_i)}$ & Unreduced state-relative residual
& $D_{\mathrm{rep}}$ & Corrective evidence reader \\
$\mathbf{d}^{(t_j)}$ & Low-rank corrective coordinate
& $\tilde{\mathbf{d}}^{(t_j)}$ & Centered corrective coordinate \\
$\mathcal{D}_i=\{\mathbf{d}^{(t_j)}\}_{j=1}^{i}$ & Learned corrective coordinate field
& $J_{\mathrm{rep}}$ & Low-rank repair operator \\
$V_{\mathrm{rep}}$ & Corrective read basis
& $U_{\mathrm{rep}}$ & Corrective write basis \\
$\Lambda_{\mathrm{rep}}$ & Learned positive transmission scales
&  &  \\

\bottomrule
\end{tabular}
}
\end{table}

\begin{table}[!t]
\centering
\caption{Notation continued from Table~\ref{tab:symbols_notation}. Temporal resolution, selective retention, aggregation, and training objective.}
\label{tab:symbols_notation_repair}
\setlength{\tabcolsep}{3pt}
\renewcommand{\arraystretch}{0.95}
\scalebox{0.6}{
\begin{tabular}{p{0.18\textwidth}p{0.28\textwidth}p{0.18\textwidth}p{0.28\textwidth}}
\toprule
\textbf{Symbol} & \textbf{Meaning} & \textbf{Symbol} & \textbf{Meaning} \\
\midrule

\multicolumn{4}{l}{\textit{Temporal resolution}} \\
$\rho\in\{L,S,E\}$ & Temporal basis index
& $\kappa^L(\ell_j)$ & Long-support temporal basis \\
$\kappa^S(\ell_j)$ & Short-support temporal basis
& $\kappa^E(\ell_j)$ & Episodic-support temporal basis \\
$\beta$ & Learned long-support decay vector
& $\xi$ & Learned episodic-support rate vector \\
$W_L,W_S,W_E$ & Temporal branch-mixing matrices
& $G_{\mathrm{rep}}$ & Multi-basis resolution operator \\
$\boldsymbol{\eta}^{(t_j)}$ & Temporally resolved corrective vector
&  &  \\

\midrule
\multicolumn{4}{l}{\textit{Selective retention}} \\
$\Pi^{(t_i)}_{\nu}$ & Top-$\nu$ retention operator
& $\boldsymbol{\pi}^{(t_i)}_{j}$ & Timestep-coordinate selector mask \\
$\pi^{(t_i)}_{j,k}$ & Timestep-coordinate mask entry
& $\nu^{(t_i)}_{k}$ & Retained count per coordinate \\
$\mathbf{s}^{(t_i)}_{k}$ & Selector summary statistics
& $\mathrm{MLP}_{\mathrm{sel}}$ & Adaptive selector network \\

\midrule
\multicolumn{4}{l}{\textit{Aggregation and fusion}} \\
$\boldsymbol{\gamma}^{(t_i)}$ & State-conditioned coordinate gate
& $\alpha^{(t_i)}_{j}$ & Retained-timestep aggregation weight \\
$A_{\mathrm{rep}}$ & Correction aggregation operator
& $P_{\Delta}$ & Correction-to-state projection \\
$\boldsymbol{\Delta}^{(t_i)}_{b}$ & State-space correction vector
& $F_{\mathrm{rep}}$ & Residual repair fusion operator \\
$\mathbf{g}^{(t_i)}_{F}$ & Residual fusion gate
& $\mathrm{MLP}_{F}$ & Fusion gate network \\
$\mathrm{MLP}_{\mathrm{probe}}$ & Probe residual network
& $\tilde{\mathbf{z}}^{(t_i)}_{b}$ & Repaired preference state \\

\midrule
\multicolumn{4}{l}{\textit{Training objective}} \\
$\mathcal{L}_{\mathrm{task}}$ & Task-utility loss family
& $\mathcal{L}_{\mathrm{pos}}$ & Timestep-provenance loss family \\
$\mathcal{L}_{\mathrm{coh}}$ & Source-coherence loss family
& $\mathcal{L}_{\mathrm{repair}}$ & Complete repair objective \\
$\lambda_{\mathrm{task}}$ & Task-utility family weight
& $\lambda_{\mathrm{pos}}$ & Timestep-provenance family weight \\
$\lambda_{\mathrm{coh}}$ & Source-coherence family weight
& $\omega_{\cdot}$ & Within-family supervisory coefficient \\
$h_{\mathrm{pos}}$ & Relative-position classifier
& $P_{\mathrm{coh}}$ & Coherence projection matrix \\
$\tau$ & Contrastive temperature
&  &  \\

\bottomrule
\end{tabular}
}
\end{table}

\section{Theoretical Justification}
\label{app:theory}

The results answer five separate questions. Can a small subspace approximate the residual? Can retained evidence express a desired state correction? Can the temporal components be distinguished? What gradient reaches the temporal parameters? Under what local assumptions do the loss families control coefficient error? These questions concern representation and optimization conditions. Improved ranking or generation remains an empirical outcome rather than a conclusion of the bounds.

The analysis in this section follows the encoder-host \repair{} pathway defined in Appendix~\ref{app:method-details}. The methodology first constructs a state-relative residual between cached per-timestep representations and the task-facing preference state, reads that residual in a compact repair space, resolves its temporal support, selectively retains corrective coordinates, aggregates them, and writes the resulting correction back through residual-gated fusion. The theoretical results follow the same sequence. We characterize compact approximation of the measured residual, conditional realizability of the resulting state correction, identifiability and optimization of its temporal structure, and local control supplied by the three training-objective families. Each statement is conditional on its explicit assumptions. The experiments test observable consequences of these conditions rather than establishing the assumptions themselves. Unless stated otherwise, these results apply to the encoder-host path in which cached per-timestep representations arise from the same frozen host computation. The structured-interaction LLM interface in Appendix~\ref{app:llm-hosts} is not used to establish the same-forward-pass theoretical claim.

\subsection{Compact Approximation Of The State-Relative Residual}
\label{app:k_choice}

Compactness concerns the size of the space needed to approximate residual vectors. Imagine that most residual variation lies along a few directions. Retaining those directions can then preserve most of the approximation quality of a bounded linear write map. The analysis first considers the best spectral subspace, then an arbitrary learned subspace, and finally the distinct read-coordinate statistic measured in our experiments. Assume the residual has a finite second moment.

Appendix~\ref{app:vector-implementation} constructs the unreduced state-relative residual
\begin{equation}
r^{(t_j;t_i)}
=
P_e e_b^{(t_j)}
-
P_z z_b^{(t_i)}
\in
\mathbb{R}^{m},
\label{eq:theory_raw_residual}
\end{equation}
and reads it through a $K$-dimensional repair space. Let $r$ denote a residual drawn from the distribution over valid prefix and timestep pairs and define its second-moment matrix
\begin{equation}
C
=
\mathbb{E}
\left[
rr^\top
\right]
=
\sum_{q=1}^{m}
\lambda_q u_q u_q^\top,
\qquad
\lambda_1
\geq
\cdots
\geq
\lambda_m
\geq
0.
\label{eq:theory_residual_second_moment}
\end{equation}
We use the second moment rather than calling $C$ a covariance matrix because the unreduced residual in Equation~\ref{eq:theory_raw_residual} need not be globally mean zero.

For $K\leq m$, define the leading spectral projector and discarded residual energy as
\begin{equation}
\mathcal P_K
=
\sum_{q=1}^{K}
u_q u_q^\top,
\qquad
T_K
=
\sum_{q>K}
\lambda_q.
\label{eq:theory_spectral_tail}
\end{equation}

\begin{definition}[Admissible repair dimension]
\label{def:k_admissible}
Let $A:\mathbb{R}^{m}\rightarrow\mathbb{R}^{d_z}$ be a fixed bounded linear write map satisfying $\|A\|_2 \leq B_A.$ Let $\mathcal B(K)$ denote any monotone implementation-cost function for a $K$-dimensional repair space. For approximation tolerance $\varepsilon_{\rm app}>0$ and deployment budget $B_{\max}$, a rank $K$ is admissible when
\begin{equation}
B_A^2 T_K
\leq
\varepsilon_{\rm app}^2,
\qquad
\mathcal B(K)
\leq
B_{\max}.
\label{eq:theory_admissibility}
\end{equation}
Equivalently, define
\[
K_{\min}
=
\min
\left\{
k:
B_A^2
\sum_{q>k}
\lambda_q
\leq
\varepsilon_{\rm app}^2
\right\}
\]
and $K_{\max} = \max \left\{ k: \mathcal B(k) \leq B_{\max} \right\}.$ Any $K\in[K_{\min},K_{\max}]$ is admissible whenever this interval is nonempty.
\end{definition}

The definition does not assume that implementation cost is linear in $K$. This is useful for \repair{}, whose projections contain terms scaling linearly with $K$ together with coordinate-mixing maps whose parameter cost can scale quadratically.

\begin{theorem}[Compact residual approximation]
\label{thm:k_admissible}
For the leading rank-$K$ projector $\mathcal P_K$ and any bounded linear write map $A$ satisfying $\|A\|_2\leq B_A$,
\begin{equation}
\mathbb{E}
\left[
\left\|
Ar
-
A\mathcal P_Kr
\right\|_2^2
\right]
\leq
B_A^2 T_K.
\label{eq:theory_rank_bound}
\end{equation}
Consequently, every $K$ satisfying Equation~\ref{eq:theory_admissibility} meets the prescribed approximation tolerance. Moreover, for any $\tau>0$,
\begin{equation}
\Pr
\left(
\left\|
Ar
-
A\mathcal P_Kr
\right\|_2
\geq
\tau
\right)
\leq
\frac{
B_A^2T_K
}{
\tau^2
}.
\label{eq:theory_rank_probability}
\end{equation}
\end{theorem}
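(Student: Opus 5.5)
The plan is to reduce everything to the orthogonal complement of the leading eigenspace and then apply Markov's inequality. Write $\mathcal P_K^\perp = I - \mathcal P_K = \sum_{q>K} u_q u_q^\top$. Since $C$ is symmetric positive semidefinite, the eigenvectors $u_1,\ldots,u_m$ can be taken orthonormal, so $\mathcal P_K^\perp$ is an orthogonal projector. Pointwise, $Ar - A\mathcal P_K r = A\mathcal P_K^\perp r$, and the operator-norm bound gives
\[
\|A\mathcal P_K^\perp r\|_2^2 \le \|A\|_2^2\,\|\mathcal P_K^\perp r\|_2^2 \le B_A^2\,\|\mathcal P_K^\perp r\|_2^2 .
\]
Only this step uses the write map, and only through $\|A\|_2\le B_A$. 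This is why the bound holds uniformly over all admissible $A$.

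Next, compute the expected discarded energy through the second-moment matrix. Because $\mathcal P_K^\perp$ is symmetric and idempotent,
\[
\|\mathcal P_K^\perp r\|_2^2 = r^\top \mathcal P_K^\perp r = \operatorname{tr}\bigl(\mathcal P_K^\perp r r^\top\bigr).
\]
The finite second moment assumed at the start of Appendix~\ref{app:k_choice} justifies exchanging expectation and trace. This yields $\mathbb E\|\mathcal P_K^\perp r\|_2^2 = \operatorname{tr}(\mathcal P_K^\perp C)$. Substituting the spectral decomposition in Equation~\ref{eq:theory_residual_second_moment} and using orthonormality, $\operatorname{tr}(\mathcal P_K^\perp C) = \sum_{q>K}\lambda_q = T_K$. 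Combining with the first step gives Equation~\ref{eq:theory_rank_bound}.

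The admissibility consequence is then immediate. If $K$ satisfies Equation~\ref{eq:theory_admissibility}, then $B_A^2T_K\le\varepsilon_{\rm app}^2$, so the mean-squared approximation error is at most $\varepsilon_{\rm app}^2$. For the interval form, note that $T_k$ is nonincreasing in $k$ because the $\lambda_q$ are nonnegative. Hence every $K\ge K_{\min}$ also satisfies the tolerance. Monotonicity of $\mathcal B$ likewise gives $\mathcal B(K)\le B_{\max}$ for every $K\le K_{\max}$.

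For Equation~\ref{eq:theory_rank_probability}, apply Markov's inequality to the nonnegative random variable $X=\|Ar-A\mathcal P_Kr\|_2^2$. The event $\{\|Ar-A\mathcal P_Kr\|_2\ge\tau\}$ equals $\{X\ge\tau^2\}$, so its probability is at most $\mathbb E[X]/\tau^2\le B_A^2T_K/\tau^2$.

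No step is a real obstacle. The points needing care are that $C$ is the uncentered second moment, not the covariance, and that trace–expectation interchange requires integrability. Both are handled by working directly with $\mathbb E[rr^\top]$ under the finite-second-moment assumption. The proof never uses a mean-zero assumption.
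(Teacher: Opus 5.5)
Your proposal is correct and follows the paper's proof step for step: write the error as $A(I-\mathcal P_K)r$, bound it by $B_A^2\|(I-\mathcal P_K)r\|_2^2$ via the operator norm, evaluate $\mathbb{E}[r^\top(I-\mathcal P_K)r]=\operatorname{tr}((I-\mathcal P_K)C)=T_K$, and apply Markov's inequality to the squared error. Your remark that $T_k$ is nonincreasing in $k$, which justifies the interval form $[K_{\min},K_{\max}]$ of admissibility, makes explicit what the paper covers only with a one-line appeal to the admissibility definition.
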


\begin{proof}
Because $Ar-A\mathcal P_Kr = A(I-\mathcal P_K)r,$ submultiplicativity gives
\[
\left\|
A(I-\mathcal P_K)r
\right\|_2^2
\leq
B_A^2
\left\|
(I-\mathcal P_K)r
\right\|_2^2.
\]
Taking expectation,
\[
\mathbb{E}
\left[
\left\|
Ar-A\mathcal P_Kr
\right\|_2^2
\right]
\leq
B_A^2
\mathbb{E}
\left[
r^\top
(I-\mathcal P_K)
r
\right].
\]
Using Equation~\ref{eq:theory_residual_second_moment},
\[
\mathbb{E}
\left[
r^\top
(I-\mathcal P_K)
r
\right]
=
{\rm tr}
\left(
(I-\mathcal P_K)C
\right)
=
\sum_{q>K}\lambda_q
=
T_K,
\]
which proves Equation~\ref{eq:theory_rank_bound}. The admissibility statement follows directly from Definition~\ref{def:k_admissible}. For Equation~\ref{eq:theory_rank_probability}, apply Markov's inequality to $X = \left\| Ar-A\mathcal P_Kr \right\|_2^2,$ using $\mathbb{E}[X]\leq B_A^2T_K$.
\end{proof}

Theorem~\ref{thm:k_admissible} characterizes the optimal rank-$K$ spectral approximation of the measured residual distribution. \repair{} instead learns its read basis $V_{\rm rep}$ end-to-end. The next corollary connects the ideal spectral approximation to the subspace induced by this learned basis.

\begin{corollary}[Excess error of a learned repair subspace]
\label{cor:learned-repair-subspace}
Let $\widehat{\mathcal P}_K$ denote the orthogonal projector onto the learned $K$-dimensional subspace ${\rm span}(V_{\rm rep})$, assuming $V_{\rm rep}$ has rank $K$. Define
\begin{equation}
\delta_K
=
{\rm tr}
\left[
(\mathcal P_K-\widehat{\mathcal P}_K)C
\right].
\label{eq:theory_subspace_excess}
\end{equation}
Then $\delta_K \geq 0$ and
\begin{equation}
\mathbb{E}
\left[
\left\|
Ar
-
A\widehat{\mathcal P}_Kr
\right\|_2^2
\right]
\leq
B_A^2
\left(
T_K+\delta_K
\right).
\label{eq:theory_learned_rank_bound}
\end{equation}
\end{corollary}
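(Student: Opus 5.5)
The argument follows the proof of Theorem~\ref{thm:k_admissible}, with $\mathcal P_K$ replaced by the learned projector $\widehat{\mathcal P}_K$. The only additional ingredient is a Ky Fan type extremal property of the leading eigenspace of $C$, which gives the nonnegativity of $\delta_K$.

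First, I establish $\delta_K\ge 0$. Write $\delta_K={\rm tr}(\mathcal P_K C)-{\rm tr}(\widehat{\mathcal P}_K C)$. By the spectral decomposition in Equation~\ref{eq:theory_residual_second_moment},
\[
{\rm tr}(\mathcal P_K C)=\sum_{q\le K}\lambda_q .
\]
Since $V_{\rm rep}$ has rank $K$, $\widehat{\mathcal P}_K$ is an orthogonal projector of rank exactly $K$. Ky Fan's maximum principle states that $\max\{{\rm tr}(PC): P \text{ orthogonal projector of rank } K\}=\sum_{q\le K}\lambda_q$, valid for $C$ symmetric positive semidefinite. Hence ${\rm tr}(\widehat{\mathcal P}_K C)\le{\rm tr}(\mathcal P_K C)$, and so $\delta_K\ge0$.

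If one wants this self-contained, write $\widehat{\mathcal P}_K=\sum_{s=1}^K w_sw_s^\top$ with the $w_s$ orthonormal. Then
\[
{\rm tr}(\widehat{\mathcal P}_K C)=\sum_q \lambda_q c_q,
\qquad c_q=\sum_s (u_q^\top w_s)^2\in[0,1],\qquad \sum_q c_q=K .
\]
A linear function with decreasing coefficients $\lambda_q$, maximized over weights $c_q\in[0,1]$ summing to $K$, is maximized by $c_q=1$ for $q\le K$ and $c_q=0$ otherwise. This is the only step requiring an actual argument.

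Second, I bound the error. As before, $Ar-A\widehat{\mathcal P}_Kr=A(I-\widehat{\mathcal P}_K)r$. Submultiplicativity with $\|A\|_2\le B_A$ gives
\[
\|A(I-\widehat{\mathcal P}_K)r\|_2^2\le B_A^2\, r^\top (I-\widehat{\mathcal P}_K)r,
\]
using that $I-\widehat{\mathcal P}_K$ is symmetric and idempotent. Taking expectations, which is justified by the finite second moment, yields
\[
\mathbb E\left[\|A(I-\widehat{\mathcal P}_K)r\|_2^2\right]\le B_A^2\,{\rm tr}\big((I-\widehat{\mathcal P}_K)C\big).
\]

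Finally, I decompose the trace:
\[
{\rm tr}\big((I-\widehat{\mathcal P}_K)C\big)={\rm tr}\big((I-\mathcal P_K)C\big)+{\rm tr}\big((\mathcal P_K-\widehat{\mathcal P}_K)C\big)=T_K+\delta_K .
\]
This gives Equation~\ref{eq:theory_learned_rank_bound}.

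The main obstacle is the Ky Fan step. The rest is identical to Theorem~\ref{thm:k_admissible}. One caveat: because $\hat V_{\rm rep}$ has non-orthogonal columns, the bound concerns the orthogonal projector onto the span of $V_{\rm rep}$, not the read map itself. The rank-$K$ assumption ensures this projector has rank exactly $K$.
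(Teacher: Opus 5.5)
Your proposal is correct and takes the same route as the paper. Nonnegativity of $\delta_K$ comes from the leading eigenspace maximizing ${\rm tr}(QC)$ over rank-$K$ orthogonal projectors, and the bound comes from the identity ${\rm tr}[(I-\widehat{\mathcal P}_K)C]=T_K+\delta_K$ combined with the operator-norm argument of Theorem~\ref{thm:k_admissible}. Your weight argument with $c_q=u_q^\top\widehat{\mathcal P}_K u_q\in[0,1]$ and $\sum_q c_q=K$ also makes self-contained the Ky Fan step that the paper only asserts.
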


\begin{proof}
For any rank-$K$ orthogonal projector $Q$,
\[
\mathbb{E}
\left[
\|
(I-Q)r
\|_2^2
\right]
=
{\rm tr}(C)
-
{\rm tr}(QC).
\]
The leading eigenspace $\mathcal P_K$ maximizes ${\rm tr}(QC)$ over rank-$K$ projectors, so
\[
{\rm tr}
\left[
(\mathcal P_K-\widehat{\mathcal P}_K)C
\right]
\geq0.
\]
Therefore,
\[
{\rm tr}
\left[
(I-\widehat{\mathcal P}_K)C
\right]
=
T_K+\delta_K.
\]
Applying the same operator-norm argument as in Theorem~\ref{thm:k_admissible} yields Equation~\ref{eq:theory_learned_rank_bound}. Here, $\widehat{\mathcal P}_K$ is the orthogonal projector associated with the learned span $\operatorname{span}(V_{\rm rep})$ and is used only for the approximation analysis. The implemented read operation itself is $d=\widehat V_{\rm rep}^{\top}r$, as defined in Appendix~\ref{app:vector-implementation}.
\end{proof}

\paragraph{Learned Read-Space Energy.}
To characterize the representation learned by \repair{}, we additionally measure the normalized energy transferred from the state-relative residual into its learned read coordinates. For $d=\widehat V_{\rm rep}^{\top}r,$ define
\begin{equation}
R_{\rm read}(K)
=
\frac{\mathbb E\|d\|_2^2}
     {\mathbb E\|r\|_2^2}
=
\frac{
{\rm tr}\!\left(
\widehat V_{\rm rep}^{\top}
C
\widehat V_{\rm rep}
\right)
}{
{\rm tr}(C)
}.
\label{eq:learned-read-energy}
\end{equation}

The ratio is defined when $\mathbb E\|r\|_2^2>0$. Because the implemented read vectors are column-normalized and need not be orthogonal, $R_{\rm read}(K)$ is not generally bounded by one. Overlapping read directions can count the same residual component more than once. Thus, $1-R_{\rm read}(K)$ is not a spectral-tail fraction. The orthogonal projector in Corollary~\ref{cor:learned-repair-subspace} is an analytical object distinct from this implemented read map. The equality follows from $\mathbb E[r^\top\widehat V_{\rm rep}\widehat V_{\rm rep}^{\top}r] = {\rm tr}(\widehat V_{\rm rep}^{\top}C\widehat V_{\rm rep})$.

\paragraph{Connection To The Methodology And RQ2.}
At $K=128$, the measured mean learned read-coordinate energy ratio is $96.46\%$ across five encoder-host settings. This is the statistic in Equation~\ref{eq:learned-read-energy}, not the energy fraction of the leading $128$ eigendirections. Separately, $K=128$ gives the best MRR and nDCG@10 among the four tested capacities in Table~\ref{tab:rq2_rank_sweep}. These observations support the usefulness of the chosen compact representation. They do not measure $T_K$ or $\delta_K$ and therefore do not numerically instantiate the spectral approximation bound in Theorem~\ref{thm:k_admissible}.

\subsection{Conditional Recoverability Through Selective Write-Back And Fusion}
\label{app:recoverability}

A small residual representation is useful only if its retained coordinates can produce the needed state change. The next bound separates four possible sources of error. The write space may miss part of the desired correction, retained coordinates may approximate it imperfectly, pooling weights may differ from the desired weights, and fusion may be unable to express the ideal final state. An oracle quantity denotes a hypothetical target used in the analysis, not an observed training label.

Compact residual approximation does not by itself imply that the resulting coordinates can realize a useful state correction. We next connect the selective write-back and aggregation operations in Appendices~\ref{app:selector} and~\ref{app:aggregation-fusion} to the frozen-head recoverability experiment in RQ1.

For fixed host state $z=z_b^{(t_i)}$, define the residual-fusion map
\begin{equation}
\phi_z(\Delta)
=
g_F(z,\Delta)
\odot
\mathrm{MLP}_{\rm probe}
\left(
[z;\Delta]
\right),
\label{eq:theory_fusion_map}
\end{equation}
so that the final repaired state is $\tilde z = z+\phi_z(\Delta).$ Let
\begin{equation}
W_{\rm wr}
=
P_\Delta
\widehat U_{\rm rep}
\Lambda_{\rm rep}
\label{eq:theory_write_map}
\end{equation}
denote the learned coordinate-to-correction write map.

\begin{theorem}[Conditional recoverability of state-relative \repair{}]
\label{thm:repair-recoverability}
Fix a trajectory prefix and frozen host state $z_b^{(t_i)}$. Let $z_b^{\star(t_i)}$ denote an ideal task-relevant state. Assume there exists an oracle pre-fusion correction $\Delta_b^{\star(t_i)}$ such that
\begin{equation}
\left\|
z_b^{\star(t_i)}
-
\left[
z_b^{(t_i)}
+
\phi_{z_b^{(t_i)}}
\left(
\Delta_b^{\star(t_i)}
\right)
\right]
\right\|_2
\leq
\varepsilon_{\rm fuse}.
\label{eq:theory_oracle_fusion}
\end{equation}

Suppose there exist oracle timestep coordinates $c^{(t_j)} \in \mathbb{R}^{K}$ and convex aggregation weights
\[
\alpha_j^{\star(t_i)}
\geq0,
\qquad
\sum_{j=1}^{i}
\alpha_j^{\star(t_i)}
=
1,
\]
such that
\begin{equation}
\left\|
\Delta_b^{\star(t_i)}
-
W_{\rm wr}
\sum_{j=1}^{i}
\alpha_j^{\star(t_i)}
c^{(t_j)}
\right\|_2
\leq
\varepsilon_{\rm span}.
\label{eq:theory_span_error}
\end{equation}

Let the coordinates and aggregation weights produced by \repair{} be
\[
\widehat c^{(t_j)}
=
\gamma^{(t_i)}
\odot
\Pi_\nu^{(t_i)}
\left(
\eta^{(t_j)}
\right)
\]
and
\[
\widehat\alpha_j^{(t_i)}
\geq0,
\qquad
\sum_{j=1}^{i}
\widehat\alpha_j^{(t_i)}
=
1.
\]
Assume
\begin{equation}
\left\|
\widehat c^{(t_j)}
-
c^{(t_j)}
\right\|_2
\leq
\varepsilon_j,
\label{eq:theory_coordinate_error}
\end{equation}
\begin{equation}
\left\|
\widehat\alpha^{(t_i)}
-
\alpha^{\star(t_i)}
\right\|_1
\leq
\varepsilon_\alpha,
\label{eq:theory_aggregation_error}
\end{equation}
and $\| c^{(t_j)} \|_2 \leq C_c \qquad \text{for all }j.$

Then the actual pre-fusion correction
\begin{equation}
\widehat\Delta_b^{(t_i)}
=
W_{\rm wr}
\sum_{j=1}^{i}
\widehat\alpha_j^{(t_i)}
\widehat c^{(t_j)}
\label{eq:theory_actual_delta}
\end{equation}
satisfies
\begin{equation}
\left\|
\widehat\Delta_b^{(t_i)}
-
\Delta_b^{\star(t_i)}
\right\|_2
\leq
\varepsilon_{\rm span}
+
\|W_{\rm wr}\|_2
\left[
\sum_{j=1}^{i}
\widehat\alpha_j^{(t_i)}
\varepsilon_j
+
C_c\varepsilon_\alpha
\right].
\label{eq:theory_delta_bound}
\end{equation}

If, additionally, $\phi_z$ is $L_F$-Lipschitz in $\Delta$ over the relevant correction region, then the final repaired state satisfies
\begin{equation}
\begin{split}
& \left\| \tilde z_b^{(t_i)} - z_b^{\star(t_i)} \right\|_2 \leq{} \varepsilon_{\rm fuse} + L_F\varepsilon_{\rm span} \\
& + L_F \|W_{\rm wr}\|_2 \left[ \sum_{j=1}^{i} \widehat\alpha_j^{(t_i)} \varepsilon_j + C_c\varepsilon_\alpha \right].
\end{split}
\label{eq:theory_final_state_bound}
\end{equation}
If $\varepsilon_j\leq\varepsilon_{\rm coord}$ for every $j$, then the weighted coordinate term is at most $\varepsilon_{\rm coord}$.
\end{theorem}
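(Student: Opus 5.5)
The plan is a triangle-inequality decomposition around an intermediate ``oracle-written'' correction. Set
\[
\Delta^{\circ}=W_{\rm wr}\sum_{j=1}^{i}\alpha_j^{\star(t_i)}c^{(t_j)} .
\]
Then split
\[
\|\widehat\Delta_b^{(t_i)}-\Delta_b^{\star(t_i)}\|_2\le\|\Delta_b^{\star(t_i)}-\Delta^{\circ}\|_2+\|\widehat\Delta_b^{(t_i)}-\Delta^{\circ}\|_2 .
\]
The first term is at most $\varepsilon_{\rm span}$ directly by Equation~\ref{eq:theory_span_error}. For the second term, I factor out the linear write map and use the operator-norm bound:
\[
\|\widehat\Delta_b^{(t_i)}-\Delta^{\circ}\|_2\le\|W_{\rm wr}\|_2\Big\|\sum_j\widehat\alpha_j^{(t_i)}\widehat c^{(t_j)}-\sum_j\alpha_j^{\star(t_i)}c^{(t_j)}\Big\|_2 .
\]

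The key step is the add-and-subtract decomposition of the pooled-coordinate difference. I insert $\sum_j\widehat\alpha_j^{(t_i)}c^{(t_j)}$, giving
\[
\sum_j\widehat\alpha_j^{(t_i)}\big(\widehat c^{(t_j)}-c^{(t_j)}\big)+\sum_j\big(\widehat\alpha_j^{(t_i)}-\alpha_j^{\star(t_i)}\big)c^{(t_j)} .
\]
The first sum is bounded by $\sum_j\widehat\alpha_j^{(t_i)}\varepsilon_j$. This uses the triangle inequality, the nonnegativity of the $\widehat\alpha_j$, and Equation~\ref{eq:theory_coordinate_error}. The second sum is bounded by $\sum_j|\widehat\alpha_j-\alpha_j^\star|\,\|c^{(t_j)}\|_2\le C_c\varepsilon_\alpha$, by H\"older's inequality ($\ell_1$ against $\ell_\infty$ of the norms) together with Equation~\ref{eq:theory_aggregation_error}. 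I choose to attach the learned weights $\widehat\alpha$ to the coordinate errors, rather than the oracle weights. This is the choice that matches the stated form of Equation~\ref{eq:theory_delta_bound}. Combining these bounds gives Equation~\ref{eq:theory_delta_bound}.

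For the final-state bound, I write $\tilde z_b^{(t_i)}=z+\phi_z(\widehat\Delta_b^{(t_i)})$ and insert $z+\phi_z(\Delta_b^{\star(t_i)})$ as an intermediate point. The triangle inequality then splits the error into two pieces. The first is the oracle fusion gap, bounded by $\varepsilon_{\rm fuse}$ via Equation~\ref{eq:theory_oracle_fusion}. The second is $\|\phi_z(\widehat\Delta)-\phi_z(\Delta^\star)\|_2$, which is at most $L_F\|\widehat\Delta-\Delta^\star\|_2$ by the Lipschitz hypothesis. Substituting Equation~\ref{eq:theory_delta_bound} then yields Equation~\ref{eq:theory_final_state_bound}. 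The closing remark follows because the weights $\widehat\alpha_j$ are convex, so $\sum_j\widehat\alpha_j^{(t_i)}\varepsilon_j\le\varepsilon_{\rm coord}\sum_j\widehat\alpha_j^{(t_i)}=\varepsilon_{\rm coord}$.

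There is no real technical obstacle here. The only point needing care is to verify that the Lipschitz hypothesis actually applies: both $\widehat\Delta$ and $\Delta^\star$ must lie in the ``relevant correction region'' on which $\phi_z$ is $L_F$-Lipschitz. I would state this explicitly as part of the interpretation of the assumption; for instance, the region can be taken convex and containing both points. A second, minor point is that I treat $P_\Delta$ as part of $W_{\rm wr}$, so the actual $\Delta$ in Equation~\ref{eq:theory_actual_delta} matches Equation~\ref{eq:app_delta}.
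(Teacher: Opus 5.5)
Your proposal is correct and follows the paper's proof essentially step for step: the same intermediate correction $W_{\rm wr}\sum_j\alpha_j^{\star(t_i)}c^{(t_j)}$, the same add-and-subtract of $\sum_j\widehat\alpha_j^{(t_i)}c^{(t_j)}$ with the learned weights attached to the coordinate errors, the same fusion triangle inequality followed by the Lipschitz step, and the same convexity argument for the closing remark. Your explicit note that both $\widehat\Delta_b^{(t_i)}$ and $\Delta_b^{\star(t_i)}$ must lie in the region where $\phi_z$ is $L_F$-Lipschitz is a reasonable clarification that the paper leaves implicit in the phrase ``over the relevant correction region.''
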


\begin{proof}
Let
\[
\bar\Delta_b^{\star(t_i)}
=
W_{\rm wr}
\sum_{j=1}^{i}
\alpha_j^{\star(t_i)}
c^{(t_j)}.
\]
By Equation~\ref{eq:theory_span_error},
\[
\left\|
\Delta_b^{\star(t_i)}
-
\bar\Delta_b^{\star(t_i)}
\right\|_2
\leq
\varepsilon_{\rm span}.
\]
For the realizable correction,
\[
\begin{split}
& \widehat\Delta_b^{(t_i)} - \bar\Delta_b^{\star(t_i)} = W_{\rm wr} \Bigg[ \sum_{j=1}^{i} \widehat\alpha_j^{(t_i)} \left( \widehat c^{(t_j)} - c^{(t_j)} \right) \\
& + \sum_{j=1}^{i} \left( \widehat\alpha_j^{(t_i)} - \alpha_j^{\star(t_i)} \right) c^{(t_j)} \Bigg].
\end{split}
\]
Hence
\[
\begin{split}
& \left\| \widehat\Delta_b^{(t_i)} - \bar\Delta_b^{\star(t_i)} \right\|_2 \leq \|W_{\rm wr}\|_2 \Bigg[ \sum_{j=1}^{i} \widehat\alpha_j^{(t_i)} \left\| \widehat c^{(t_j)} - c^{(t_j)} \right\|_2 \\
& + \sum_{j=1}^{i} \left| \widehat\alpha_j^{(t_i)} - \alpha_j^{\star(t_i)} \right| \| c^{(t_j)} \|_2 \Bigg].
\end{split}
\]
Applying Equations~\ref{eq:theory_coordinate_error} and~\ref{eq:theory_aggregation_error} and $\|c^{(t_j)}\|_2\leq C_c$ gives
\[
\left\|
\widehat\Delta_b^{(t_i)}
-
\bar\Delta_b^{\star(t_i)}
\right\|_2
\leq
\|W_{\rm wr}\|_2
\left[
\sum_{j=1}^{i}
\widehat\alpha_j^{(t_i)}
\varepsilon_j
+
C_c\varepsilon_\alpha
\right].
\]
A triangle inequality with $\Delta_b^{\star(t_i)}$ proves Equation~\ref{eq:theory_delta_bound}.

For the final state, Equation~\ref{eq:theory_oracle_fusion} and another triangle inequality give
\[
\begin{split}
& \left\| \tilde z_b^{(t_i)} - z_b^{\star(t_i)} \right\|_2 \leq{} \left\| \phi_z \left( \widehat\Delta_b^{(t_i)} \right) - \phi_z \left( \Delta_b^{\star(t_i)} \right) \right\|_2 + \varepsilon_{\rm fuse}.
\end{split}
\]
The $L_F$-Lipschitz assumption together with Equation~\ref{eq:theory_delta_bound} yields Equation~\ref{eq:theory_final_state_bound}. Finally, because the learned aggregation weights are nonnegative and sum to one,
\[
\sum_{j=1}^{i}
\widehat\alpha_j^{(t_i)}
\varepsilon_j
\leq
\varepsilon_{\rm coord}.
\]
\end{proof}

\paragraph{Connection To The Methodology And RQ1.}
Theorem~\ref{thm:repair-recoverability} bounds the complete pre-head pathway, including the low-rank write map, aggregation, and fusion. Its four approximation terms correspond to four implementation operations: $\varepsilon_{\rm span}$ measures whether the required pre-fusion correction lies near the learned write-back span, $\varepsilon_j$ measures timestep-coordinate recovery after temporal resolution and selective retention, $\varepsilon_\alpha$ measures aggregation error, and $\varepsilon_{\rm fuse}$ accounts for whether the residual-gated fusion map can express the desired state change. The theorem is conditional on the existence of an appropriate oracle correction and does not establish that information is absent from the original state.

RQ1 tests the observable consequence of this setting. In the \repair{} (Frozen Head) condition, both the task-trained encoder and original task head remain unchanged, and only the task-facing state supplied between them is modified using cached per-timestep representations. Improvements across all twelve representative recommendation hosts therefore show that useful state corrections can be realized from the available cached host computation for the same downstream task interface. Head-only continuation is a complementary same-state consumer control: it produces substantially smaller gains while leaving the task-facing preference state unchanged. Neither experiment identifies the oracle state or separately estimates the four error terms in Equation~\ref{eq:theory_final_state_bound}. Stage-3 \repair{} + Head Alignment is also outside the attribution claim of this theorem because \repair{} and its downstream consumer are jointly adapted there.

\subsection{Identifiability Of Temporal Residual Structure}
\label{app:temporal_recoverability}

Identifiability means that different coefficient choices cannot produce the same temporal trace for a fixed dictionary. If two temporal atoms are nearly identical over a short prefix, separating their contributions becomes unstable. The Gram matrix below records their pairwise similarity, and its smallest eigenvalue quantifies this separation. Full column rank for three atoms requires at least three valid positions.

Appendix~\ref{app:temporal-responses} resolves each centered corrective coordinate through long-support, short-support, and episodic-support kernels. We next characterize when the coefficients of this finite-prefix temporal dictionary are identifiable. The identifiable objects are temporal components of the centered corrective field. They are not semantic long-term, short-term, or episodic user-preference classes.

\begin{definition}[Temporal residual trace]
\label{def:temporal-trace-classes}
For each coordinate $k\in\{1,\ldots,K\}$, define
\begin{equation}
\tilde{\mathbf d}_k
=
\left[
\tilde d_k^{(t_1)},
\ldots,
\tilde d_k^{(t_i)}
\right]^\top
\in
\mathbb{R}^{i}.
\label{eq:theory_temporal_trace}
\end{equation}
Let $\ell_j=i-j$ and define the normalized temporal atoms
\[
\psi_{L,k}
=
\frac{
\kappa_k^L
}{
\|\kappa_k^L\|_2
},
\qquad
\psi_{S,k}
=
\frac{
\kappa_k^S
}{
\|\kappa_k^S\|_2
},
\qquad
\psi_{E,k}
=
\frac{
\kappa_k^E
}{
\|\kappa_k^E\|_2
},
\]
with
\[
\kappa_{k,j}^L
=
\exp(-\beta_k\ell_j),
\qquad
\kappa_{k,j}^S
=
\max
\left(
1-\frac{\ell_j}{i},
0
\right),
\]
and
\[
\kappa_{k,j}^E
=
\exp
\left(
\ell_j\log\xi_k
-
\xi_k
-
\log\Gamma(\ell_j+1)
\right).
\]
A mixed temporal residual trace is a decomposition
\begin{equation}
\tilde{\mathbf d}_k
=
a_{L,k}\psi_{L,k}
+
a_{S,k}\psi_{S,k}
+
a_{E,k}\psi_{E,k}
+
\varepsilon_k.
\label{eq:theory_temporal_decomposition}
\end{equation}
\end{definition}

\begin{theorem}[Identifiability of temporal residual components]
\label{thm:kernel-trace-recoverability}
Let
\[
\Psi_k
=
[
\psi_{L,k},
\psi_{S,k},
\psi_{E,k}
]
\in
\mathbb{R}^{i\times3},
\qquad
G_k
=
\Psi_k^\top\Psi_k.
\]
Assume $\lambda_{\min}(G_k) > 0.$ For
\[
\tilde{\mathbf d}_k
=
\Psi_k a_k
+
\varepsilon_k,
\qquad
a_k
=
[
a_{L,k},
a_{S,k},
a_{E,k}
]^\top,
\]
let
\[
\widehat a_k
=
\Psi_k^\dagger
\tilde{\mathbf d}_k
=
G_k^{-1}
\Psi_k^\top
\tilde{\mathbf d}_k.
\]
Then
\begin{equation}
\left\|
\widehat a_k-a_k
\right\|_2
\leq
\frac{
1
}{
\sqrt{
\lambda_{\min}(G_k)
}
}
\|
\varepsilon_k
\|_2.
\label{eq:theory_temporal_identifiability}
\end{equation}
If $\varepsilon_k=0$, then $\widehat a_k=a_k.$
\end{theorem}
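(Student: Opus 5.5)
The approach is the standard least-squares perturbation argument. Since $\lambda_{\min}(G_k)>0$, the Gram matrix $G_k=\Psi_k^\top\Psi_k$ is symmetric positive definite, hence invertible. It follows that $\Psi_k$ has full column rank three, which in particular forces $i\geq 3$. In that case the Moore--Penrose pseudoinverse takes the closed form $\Psi_k^\dagger=G_k^{-1}\Psi_k^\top$, so the estimator in the statement is well defined and is a left inverse: $\Psi_k^\dagger\Psi_k=G_k^{-1}G_k=I_3$.

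First, substitute the model $\tilde{\mathbf d}_k=\Psi_k a_k+\varepsilon_k$ into the estimator:
\[
\widehat a_k=\Psi_k^\dagger\Psi_k a_k+\Psi_k^\dagger\varepsilon_k=a_k+\Psi_k^\dagger\varepsilon_k .
\]
This isolates the error as $\widehat a_k-a_k=\Psi_k^\dagger\varepsilon_k$. The exact-recovery claim for $\varepsilon_k=0$ is immediate from this identity.

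Second, bound $\|\Psi_k^\dagger\|_2$. Take the thin SVD $\Psi_k=Q\Sigma R^\top$, where $\Sigma=\operatorname{diag}(s_1,s_2,s_3)$ and $s_1\geq s_2\geq s_3>0$. Then $G_k=R\Sigma^2R^\top$, so $\lambda_{\min}(G_k)=s_3^2$. Likewise $\Psi_k^\dagger=R\Sigma^{-1}Q^\top$, whose largest singular value is $1/s_3=1/\sqrt{\lambda_{\min}(G_k)}$.

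An equivalent route avoids the SVD. Compute $\Psi_k^{\dagger}(\Psi_k^{\dagger})^\top=G_k^{-1}\Psi_k^\top\Psi_kG_k^{-1}=G_k^{-1}$, which gives $\|\Psi_k^\dagger\|_2^2=\lambda_{\max}(G_k^{-1})=1/\lambda_{\min}(G_k)$. Submultiplicativity then yields Equation~\ref{eq:theory_temporal_identifiability}.

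There is no substantive obstacle. The only point needing care is justifying that the operator norm of the pseudoinverse equals $\lambda_{\min}(G_k)^{-1/2}$, rather than being merely bounded by $\lambda_{\min}(G_k)^{-1}\|\Psi_k\|_2$. Using that cruder bound would lose the sharp constant, so I would present the $G_k^{-1}$ identity explicitly.

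I would also note that the normalization of the atoms is well defined. Each $\kappa^\rho_k$ has a strictly positive entry at lag $0$: $\kappa^L_k(0)=1$, $\kappa^S_k(0)=1$, and $\kappa^E_k(0)=e^{-\xi_k}>0$. Hence every $\|\kappa^\rho_k\|_2>0$. Normalization does not affect identifiability, since it only rescales the coefficients.
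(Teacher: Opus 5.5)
Your proof is correct and follows the paper's argument. You substitute the model to get $\widehat a_k-a_k=\Psi_k^\dagger\varepsilon_k$ via $\Psi_k^\dagger\Psi_k=I_3$, then bound the error by $\|\Psi_k^\dagger\|_2=1/\sigma_{\min}(\Psi_k)=1/\sqrt{\lambda_{\min}(G_k)}$; your alternative identity $\Psi_k^\dagger(\Psi_k^\dagger)^\top=G_k^{-1}$ and your lag-zero check that the atom normalizations are well defined are valid additions the paper leaves implicit.
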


\begin{proof}
Substituting Equation~\ref{eq:theory_temporal_decomposition},
\[
\widehat a_k
=
\Psi_k^\dagger
\left(
\Psi_k a_k+\varepsilon_k
\right)
=
a_k
+
\Psi_k^\dagger
\varepsilon_k,
\]
because full column rank implies $\Psi_k^\dagger\Psi_k=I_3$. Hence
\[
\left\|
\widehat a_k-a_k
\right\|_2
\leq
\|
\Psi_k^\dagger
\|_2
\|
\varepsilon_k
\|_2.
\]
The singular values of $\Psi_k$ are the square roots of the eigenvalues of $G_k$, so
\[
\|
\Psi_k^\dagger
\|_2
=
\frac{
1
}{
\sigma_{\min}(\Psi_k)
}
=
\frac{
1
}{
\sqrt{
\lambda_{\min}(G_k)
}
}.
\]
This proves Equation~\ref{eq:theory_temporal_identifiability}. The zero-residual case follows immediately.
\end{proof}

\begin{corollary}[Coordinate-wise identification of the temporal field]
\label{cor:full-field-kernel-recovery}
If the assumptions of Theorem~\ref{thm:kernel-trace-recoverability} hold for every corrective coordinate, then
\begin{equation}
\sum_{k=1}^{K}
\left\|
\widehat a_k-a_k
\right\|_2^2
\leq
\sum_{k=1}^{K}
\frac{
\|
\varepsilon_k
\|_2^2
}{
\lambda_{\min}(G_k)
}.
\label{eq:theory_full_temporal_bound}
\end{equation}
\end{corollary}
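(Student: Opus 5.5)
The plan is to reduce the corollary to Theorem~\ref{thm:kernel-trace-recoverability} applied separately to each corrective coordinate, and then sum the resulting bounds. For every fixed $k\in\{1,\ldots,K\}$, the hypothesis $\lambda_{\min}(G_k)>0$ holds by assumption. The trace $\tilde{\mathbf d}_k$ admits the decomposition $\Psi_k a_k+\varepsilon_k$ of Definition~\ref{def:temporal-trace-classes}. The estimator $\widehat a_k=\Psi_k^\dagger\tilde{\mathbf d}_k$ is formed coordinate-wise from $\tilde{\mathbf d}_k$ alone. So the theorem gives, for each $k$,
\[
\left\|\widehat a_k-a_k\right\|_2\leq\frac{\|\varepsilon_k\|_2}{\sqrt{\lambda_{\min}(G_k)}}.
\]

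Next, both sides of this inequality are nonnegative, so squaring preserves it. This yields $\|\widehat a_k-a_k\|_2^2\leq\|\varepsilon_k\|_2^2/\lambda_{\min}(G_k)$ for each $k$. Summing over $k=1,\ldots,K$ then gives Equation~\ref{eq:theory_full_temporal_bound} directly.

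I would also note why the per-coordinate treatment is legitimate. The atoms $\psi_{L,k},\psi_{S,k},\psi_{E,k}$ depend only on the coordinate-specific parameters $\beta_k$ and $\xi_k$, together with the shared $i$. Each coefficient vector $a_k$ is likewise estimated from its own trace. There is therefore no coupling across coordinates that would require a joint Gram-matrix argument: the full stacked system is block diagonal with blocks $\Psi_k$.

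There is essentially no obstacle here. The only point to state carefully is that the assumption $\lambda_{\min}(G_k)>0$ must hold for every $k$, which requires $i\geq 3$ valid positions and non-collinear atoms for each coordinate's learned $\beta_k,\xi_k$. Without that, the right-hand side is undefined for some $k$. Optionally, I would remark that the bound is at most $\|\mathcal E\|_F^2/\min_k\lambda_{\min}(G_k)$, where $\mathcal E=[\varepsilon_1,\ldots,\varepsilon_K]$, as a uniform simplification.
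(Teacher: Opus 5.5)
Your proposal is correct and is exactly the paper's argument: apply Theorem~\ref{thm:kernel-trace-recoverability} separately to each coordinate $k$, square the nonnegative per-coordinate bound, and sum over $k=1,\ldots,K$. Your side remarks are also fine, with one caveat: $\lambda_{\min}(G_k)>0$ means the three atoms are linearly independent, which is a stronger condition than non-collinearity.
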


\begin{proof}
Square Equation~\ref{eq:theory_temporal_identifiability} independently for each $k$ and sum over the $K$ coordinates.
\end{proof}

\begin{corollary}[Fixed short-support envelope]
\label{cor:fixed-short-envelope}
The short-support coefficient remains identifiable without a separately trainable decay parameter inside $\kappa^S$, provided $\lambda_{\min}(G_k)>0.$ Thus identifiability requires the short-support atom to remain distinguishable from the long-support and episodic-support atoms, not that every temporal atom contain its own learnable decay.
\end{corollary}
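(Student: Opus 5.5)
The corollary follows almost immediately from Theorem~\ref{thm:kernel-trace-recoverability}. The plan is to observe that the theorem's hypotheses never require any atom to carry a trainable parameter. The theorem only assumes that $\Psi_k$ has full column rank, i.e.\ $\lambda_{\min}(G_k)>0$. It then derives the bound $\|\widehat a_k-a_k\|_2\le\|\varepsilon_k\|_2/\sqrt{\lambda_{\min}(G_k)}$ purely from the identity $\Psi_k^\dagger\Psi_k=I_3$ and the singular-value expression for $\|\Psi_k^\dagger\|_2$.

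Concretely, I would first fix the prefix length $i$ and the learned parameters $\beta_k,\xi_k$. The short-support column $\psi_{S,k}$ is then a deterministic vector determined only by $i$. It has entries $\max(1-\ell_j/i,0)=j/i$ for $j=1,\ldots,i$, which are strictly positive, so $\|\kappa^S_k\|_2>0$ and the normalization is well defined. Second, I would note that under the assumption $\lambda_{\min}(G_k)>0$, the pseudoinverse estimator $\widehat a_k=G_k^{-1}\Psi_k^\top\tilde{\mathbf d}_k$ is well defined. Invoking Theorem~\ref{thm:kernel-trace-recoverability} verbatim then gives the error bound for $a_{S,k}$ as a component of $a_k$, since $|\widehat a_{S,k}-a_{S,k}|\le\|\widehat a_k-a_k\|_2$. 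It also gives exact recovery when $\varepsilon_k=0$. This establishes identifiability of the short-support coefficient without any decay parameter in $\kappa^S$.

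For the interpretive second sentence, I would argue as follows. Positive definiteness of $G_k$ is equivalent to linear independence of $\psi_{S,k}$ from $\mathrm{span}\{\psi_{L,k},\psi_{E,k}\}$ together with independence of $\psi_{L,k}$ and $\psi_{E,k}$. In other words, distinguishability is exactly the stated condition. Optionally, I would sketch why the condition holds generically. For $i\ge3$, the linear ramp $j/i$ is not a combination of a geometric sequence $e^{-\beta_k(i-j)}$ and a Poisson-shaped sequence, except on a measure-zero set of $(\beta_k,\xi_k)$. One could check this through a nonvanishing $3\times3$ minor, for example on lags $0,1,2$.

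The only real obstacle is that generic-nondegeneracy remark, since it is not needed for the corollary as stated. I would present it as a side comment, or omit it, keeping the proof a direct specialization of the theorem.
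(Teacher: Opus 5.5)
Your proposal is correct and matches the paper's argument: the paper's proof likewise just observes that Theorem~\ref{thm:kernel-trace-recoverability} uses only full column rank of $\Psi_k$, so having no trainable parameter in $\kappa^S$ is irrelevant once $\psi_{S,k}$ is fixed, and your added checks that $\kappa^S_{k,j}=j/i>0$ and $|\widehat a_{S,k}-a_{S,k}|\le\|\widehat a_k-a_k\|_2$ are harmless refinements. You are right to keep the genericity claim as an optional aside, since degeneracy does occur: for $i=3$, $e^{-\beta_k}=1/2$, $\xi_k=1$ one has $\kappa^S_k=\tfrac23\kappa^L_k+\tfrac{e}{3}\kappa^E_k$, so $\lambda_{\min}(G_k)>0$ must remain an assumption rather than something that holds automatically.
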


\begin{proof}
Theorem~\ref{thm:kernel-trace-recoverability} requires only full column rank of $\Psi_k$. The parameterization used to generate an individual dictionary column is irrelevant once the resulting column is fixed.
\end{proof}

\begin{corollary}[Finite-prefix expressivity of the temporal dictionary]
\label{cor:classical-timeseries-approximation}
For prefix length $i$, define the multi-coordinate dictionary
\[
\Phi
=
[
\psi_{L,1},
\ldots,
\psi_{L,K},
\psi_{S,1},
\ldots,
\psi_{S,K},
\psi_{E,1},
\ldots,
\psi_{E,K}
]
\in
\mathbb{R}^{i\times3K}.
\]
For any centered sampled trace $\mathbf y\in\mathbb{R}^{i}$, let $\widehat{\mathbf y} = \Phi\Phi^\dagger\mathbf y.$ If $\mathbf y = \Phi b+\zeta$ for some $b\in\mathbb{R}^{3K}$, then
\begin{equation}
\left\|
\mathbf y-\widehat{\mathbf y}
\right\|_2
\leq
\|\zeta\|_2.
\label{eq:theory_temporal_projection}
\end{equation}

Moreover, if at least $i$ episodic atoms have distinct positive parameters $\xi_1,\ldots,\xi_i>0, \qquad \xi_p\neq\xi_q \quad (p\neq q),$ then the corresponding episodic subdictionary spans $\mathbb{R}^{i}$.
\end{corollary}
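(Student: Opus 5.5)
The first claim is a direct least-squares projection argument. I will note that $\Phi\Phi^\dagger$ is the orthogonal projector onto $\operatorname{range}(\Phi)$, so $\widehat{\mathbf y}$ is the closest point of that range to $\mathbf y$. Since $\Phi b\in\operatorname{range}(\Phi)$, it follows that
\[
\|\mathbf y-\widehat{\mathbf y}\|_2\le\|\mathbf y-\Phi b\|_2=\|\zeta\|_2 .
\]
Equivalently, $\mathbf y-\widehat{\mathbf y}=(I-\Phi\Phi^\dagger)\zeta$ because $(I-\Phi\Phi^\dagger)\Phi b=0$, and $I-\Phi\Phi^\dagger$ is an orthogonal projector with operator norm at most one. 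No assumption on the rank of $\Phi$ is needed, since the Moore--Penrose identity $\Phi\Phi^\dagger\Phi=\Phi$ holds in general.

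For the spanning claim, I will drop the normalization, since scaling columns does not change the span. I will also drop the common factor $e^{-\xi_p}>0$ from each unnormalized episodic atom. After that, the column indexed by $\xi_p$ has entries $\xi_p^{\ell_j}/\ell_j!$ for $j=1,\ldots,i$. As $j$ runs over $1,\ldots,i$, the lag $\ell_j=i-j$ takes each value in $\{0,1,\ldots,i-1\}$ exactly once. The $i\times i$ matrix formed by $i$ such columns therefore equals, up to a row permutation, $D\,V$. Here $D=\operatorname{diag}(1/0!,\ldots,1/(i-1)!)$ is invertible, and $V_{\ell,p}=\xi_p^{\ell}$ is the Vandermonde matrix in the nodes $\xi_1,\ldots,\xi_i$.

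Since the $\xi_p$ are pairwise distinct, $\det V=\prod_{p<q}(\xi_q-\xi_p)\neq0$. Hence the $i$ columns are linearly independent and span $\mathbb{R}^i$. This remains true after multiplying back by the nonzero scalars $e^{-\xi_p}$ and the normalizations $1/\|\kappa^E\|_2$; those norms are positive because $\xi_p>0$ makes every entry positive. In this case $\Phi$ has rank $i$, so $\widehat{\mathbf y}=\mathbf y$ exactly.

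The only step needing care is bookkeeping rather than a real obstacle. I must check that the lags really exhaust $0,\ldots,i-1$ without repetition, which holds on the valid prefix. I must also use positivity of $\xi$ so that $\log\xi_p$ is defined; the paper's parameterization $\xi=\operatorname{softplus}(\xi_{\mathrm{raw}})+10^{-4}$ guarantees this. Centering of $\mathbf y$ plays no role in either argument.
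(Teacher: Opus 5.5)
Your proposal is correct and follows the paper's proof essentially step for step. The first claim comes from the optimality of the orthogonal projection $\Phi\Phi^\dagger\mathbf y$ against the candidate $\Phi b$, and the second from stripping the invertible diagonal factors $\Gamma(\ell_j+1)$ and $e^{-\xi_q}$ to leave a row-permuted Vandermonde matrix with nonzero determinant; your extra remarks (no rank assumption needed for the bound, normalization not affecting the span, and full rank giving $\widehat{\mathbf y}=\mathbf y$) are correct refinements rather than a different route.
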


\begin{proof}
Because $\Phi\Phi^\dagger\mathbf y$ is the orthogonal projection of $\mathbf y$ onto ${\rm span}(\Phi)$, it minimizes Euclidean projection error over that span. Taking the candidate $\Phi b$ gives Equation~\ref{eq:theory_temporal_projection}.

For the second statement, consider the unnormalized episodic matrix
\[
E_{j,q}
=
\frac{
e^{-\xi_q}
\xi_q^{\ell_j}
}{
\Gamma(\ell_j+1)
}.
\]
Define
\[
D_r
=
{\rm diag}
\left(
\Gamma(\ell_1+1),
\ldots,
\Gamma(\ell_i+1)
\right)
\]
and $D_c = {\rm diag} \left( e^{\xi_1}, \ldots, e^{\xi_i} \right).$ Then $(D_rED_c)_{j,q} = \xi_q^{\ell_j}.$ Because $\ell_j$ ranges over $0,\ldots,i-1$, this is a row-permuted Vandermonde matrix. Therefore $\det(D_rED_c) = \pm \prod_{1\leq p<q\leq i} (\xi_q-\xi_p) \neq0$ for distinct $\xi_q$. Since the diagonal scaling matrices are invertible, $E$ has rank $i$.
\end{proof}

\paragraph{Boundary Of The Finite-Prefix Result.}
Corollary~\ref{cor:classical-timeseries-approximation} is an expressivity statement, not an identifiability or generalization guarantee. It does not assert that the learned episodic parameters are distinct, that training reaches the spanning configuration, or that a finite-prefix representation extrapolates the same temporal pattern beyond the observed window. Its role is only to show that the $L/S/E$ terminology does not restrict the dictionary to three literal behavioral categories.

\paragraph{Connection To The Methodology And RQ2.}
Theorem~\ref{thm:kernel-trace-recoverability} concerns the temporal envelopes applied to the centered state-relative corrective field before the learned branch-mixing matrices $W_L,W_S,W_E$. It establishes when those basis coefficients are identifiable in principle. It does not state that end-to-end gradient training computes the least-squares estimator used in the proof. The controlled temporal experiments test the corresponding structural behavior after host compression. The preferred single pathway varies by host, while the combined $L+S+E$ pathway is strongest in eleven of twelve controlled host and trace settings. TiSASRec and NAML recover several injected long- and short-support patterns through the corresponding single branches, whereas Mamba4Rec favors short-support correction across all three injected trace families. These results are consistent with the theoretical object being the post-compression residual rather than the raw temporal label.

Selective retention is analytically separate from identifiability. Theorem~\ref{thm:kernel-trace-recoverability} does not imply that every identifiable temporal component should be written back. The RQ2 selective-retention experiment therefore tests the subsequent methodological operation: at the full history length, selective write-back improves all four representative hosts while retaining broad support across the prefix.

\subsection{Backpropagation Through Temporal Recovery Kernels}
\label{app:kernel-backprop}

Identifiability concerns uniqueness of temporal coefficients for a fixed dictionary, while the preceding expressivity result concerns which traces the dictionary can represent. We next follow the chain rule to identify the signal used to update the trainable temporal parameters. A parameter receives credit only through the residual activation and the loss gradient reaching that activation. The straight-through selector supplies the surrogate gradient described in Appendix~\ref{app:selector}.

For timestep $t_j$, define
\[
x_\rho^{(t_j)}
=
\kappa^\rho(\ell_j)
\odot
\tilde d^{(t_j)},
\qquad
\rho\in\{L,S,E\},
\]
so that
\[
\eta^{(t_j)}
=
\sum_{\rho\in\{L,S,E\}}
W_\rho
x_\rho^{(t_j)}.
\]
Let
\begin{equation}
g^{(t_j)}
=
\nabla_{\eta^{(t_j)}}
\mathcal L_{\rm repair}^{\rm enc}
\in
\mathbb{R}^{K}
\label{eq:theory_temporal_gradient}
\end{equation}
denote the gradient reaching the temporally resolved field after backpropagation through selective retention, aggregation, residual fusion, and the complete encoder-host repair objective. Thus $g^{(t_j)}$ contains credit from the task-utility, timestep-provenance, and source-coherence loss families defined in Appendix~\ref{app:training-details}. For the hard selector, this denotes the straight-through surrogate gradient used by the implementation.

The positive temporal parameters are
\[
\beta_k
=
{\rm softplus}(b_k)
+
\epsilon_0,
\qquad
\xi_k
=
{\rm softplus}(s_k)
+
\epsilon_0,
\qquad
\epsilon_0=10^{-4}.
\]

\begin{proposition}[Temporal credit assignment]
\label{prop:temporal-credit}
For coordinate $k$,
\begin{equation}
\frac{
\partial
\mathcal L_{\rm repair}^{\rm enc}
}{
\partial b_k
}
=
-
\sigma(b_k)
\sum_{j=1}^{i}
\ell_j
\kappa_k^L(\ell_j)
\tilde d_k^{(t_j)}
\left[
W_L^\top
g^{(t_j)}
\right]_k.
\label{eq:theory_long_gradient}
\end{equation}
For the episodic-support parameter,
\begin{equation}
\frac{
\partial
\mathcal L_{\rm repair}^{\rm enc}
}{
\partial s_k
}
=
\sigma(s_k)
\sum_{j=1}^{i}
\left(
\frac{\ell_j}{\xi_k}
-
1
\right)
\kappa_k^E(\ell_j)
\tilde d_k^{(t_j)}
\left[
W_E^\top
g^{(t_j)}
\right]_k.
\label{eq:theory_episodic_gradient}
\end{equation}
For each branch $\rho\in\{L,S,E\}$,
\begin{equation}
\frac{
\partial
\mathcal L_{\rm repair}^{\rm enc}
}{
\partial W_\rho
}
=
\sum_{j=1}^{i}
g^{(t_j)}
\left(
x_\rho^{(t_j)}
\right)^\top.
\label{eq:theory_branch_gradient}
\end{equation}
\end{proposition}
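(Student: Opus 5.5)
This is a chain-rule computation, and I would organize it around the single place where each parameter enters the loss. Treat $g^{(t_j)}=\nabla_{\eta^{(t_j)}}\mathcal L_{\rm repair}^{\rm enc}$ as given, with the straight-through surrogate for the selector already absorbed into it. The only remaining task is to differentiate the map from $(b_k,s_k,W_\rho)$ to $\{\eta^{(t_j)}\}_{j=1}^i$. The key observation is that $\tilde d^{(t_j)}$ does not depend on $b_k$, $s_k$, or $W_\rho$: centering and the read map come strictly before temporal resolution. Likewise $\kappa^S$ contains no trainable parameters. Hence each temporal parameter reaches the loss only through its own kernel factor inside $x_\rho^{(t_j)}$, and the total derivative is a sum over valid timesteps $j$.

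For the branch matrices, $\eta^{(t_j)}=\sum_\rho W_\rho x_\rho^{(t_j)}$ is linear in $W_\rho$. Its differential is $dW_\rho\,x_\rho^{(t_j)}$, so $d\mathcal L=\sum_j \langle g^{(t_j)},dW_\rho x_\rho^{(t_j)}\rangle=\sum_j{\rm tr}\big((g^{(t_j)}x_\rho^{(t_j)\top})^\top dW_\rho\big)$. This gives Equation~\ref{eq:theory_branch_gradient} directly.

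For $b_k$, I would write $\partial\eta^{(t_j)}/\partial b_k = W_L\,e_k\,\tilde d_k^{(t_j)}\,\partial\kappa_k^L(\ell_j)/\partial b_k$, since only coordinate $k$ of $x_L^{(t_j)}$ moves. Contracting with $g^{(t_j)}$ produces the factor $[W_L^\top g^{(t_j)}]_k$. Then $\partial\kappa_k^L/\partial\beta_k=-\ell_j\kappa_k^L$, and ${\rm softplus}'(b_k)=\sigma(b_k)$. Summing over $j$ yields Equation~\ref{eq:theory_long_gradient}.

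For $s_k$ the same contraction gives $[W_E^\top g^{(t_j)}]_k\tilde d_k^{(t_j)}\,\partial\kappa_k^E/\partial s_k$. I would compute the log-derivative $\partial\log\kappa_k^E/\partial\xi_k=\ell_j/\xi_k-1$, because the $\log\Gamma$ term is independent of $\xi_k$. Multiplying by $\kappa_k^E$ and by $\sigma(s_k)$ gives Equation~\ref{eq:theory_episodic_gradient}.

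I expect the main obstacle to be justification rather than computation. First, I must confirm that the parameters have no hidden paths into the loss. One possible path is through the selector ranks, but the hard mask is piecewise constant and under the straight-through convention all its gradient is routed via $g^{(t_j)}$. Another possible path is through padded positions, which are zeroed after resolution and so contribute nothing; restricting the sums to valid $j$ handles this. Second, I should note that if the losses read $\eta$ at positions other than via $g$, those contributions are already included in $g^{(t_j)}$ by its definition as the full gradient. With these two points settled, the three identities follow.
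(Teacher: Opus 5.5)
Your proposal is correct and follows the same route as the paper: it applies the chain rule through $\eta^{(t_j)}=\sum_\rho W_\rho x_\rho^{(t_j)}$ and uses the kernel derivatives $-\ell_j\kappa_k^L(\ell_j)$ and $(\ell_j/\xi_k-1)\kappa_k^E(\ell_j)$, the softplus derivative $\sigma(\cdot)$, and linearity of $\eta^{(t_j)}$ in $W_\rho$. Your additional checks make explicit what the paper's brief proof leaves implicit: that $\tilde d^{(t_j)}$ and $\kappa^S$ do not depend on the temporal parameters, that the selector path is absorbed into $g^{(t_j)}$, and that padded positions contribute nothing.
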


\begin{proof}
For $\kappa_k^L(\ell_j) = \exp(-\beta_k\ell_j),$ we have
\[
\frac{
\partial
\kappa_k^L(\ell_j)
}{
\partial\beta_k
}
=
-
\ell_j
\kappa_k^L(\ell_j)
\]
and
\[
\frac{
\partial\beta_k
}{
\partial b_k
}
=
\sigma(b_k).
\]
The additive constant $\epsilon_0$ has zero derivative. Applying the chain rule through
\[
\eta^{(t_j)}
=
\sum_\rho
W_\rho x_\rho^{(t_j)}
\]
gives Equation~\ref{eq:theory_long_gradient}.

For
\[
\kappa_k^E(\ell_j)
=
\exp
\left(
\ell_j\log\xi_k
-
\xi_k
-
\log\Gamma(\ell_j+1)
\right),
\]
\[
\frac{
\partial
\kappa_k^E(\ell_j)
}{
\partial\xi_k
}
=
\left(
\frac{\ell_j}{\xi_k}
-
1
\right)
\kappa_k^E(\ell_j)
\]
and
\[
\frac{
\partial\xi_k
}{
\partial s_k
}
=
\sigma(s_k),
\]
which yields Equation~\ref{eq:theory_episodic_gradient}. Equation~\ref{eq:theory_branch_gradient} follows from linearity of $\eta^{(t_j)}$ in each $W_\rho$.
\end{proof}

\paragraph{Temporal Credit.}
The product
\[
\tilde d_k^{(t_j)}
\left[
W_\rho^\top
g^{(t_j)}
\right]_k
\]
couples the presence of state-relative residual evidence with the gradient assigned by the complete repair objective. The temporal update is therefore conditioned on both the host-specific post-compression residual and the supervision reaching that residual through the repaired-state pathway. For $\kappa^L$, the additional lag factor $\ell_j$ modulates how distant evidence changes the learned decay. For $\kappa^E$, the factor
\[
\frac{\ell_j}{\xi_k}-1
\]
changes sign around the current episodic center. The short-support envelope contains no learned decay parameter. Its contribution remains trainable through $W_S$, the state-conditioned gate, the selector, and the aggregation operator.

\begin{theorem}[Bounded local temporal-kernel gradients]
\label{thm:kernel-gradient-stability}
Assume that in a local training region $\| \tilde d^{(t_j)} \|_2 \leq D, \qquad \| g^{(t_j)} \|_2 \leq G,$ and $\|W_L\|_2 \leq M_L, \qquad \|W_E\|_2 \leq M_E.$ Assume $0 \leq \ell_j \leq H-1$ and $\xi_k \geq \xi_{\min} > 0.$ Then
\begin{equation}
\left|
\frac{
\partial
\mathcal L_{\rm repair}^{\rm enc}
}{
\partial b_k
}
\right|
\leq
iDGM_L(H-1),
\label{eq:theory_long_gradient_bound}
\end{equation}
and, whenever $\kappa_k^E(\ell_j)\leq1$,
\begin{equation}
\left|
\frac{
\partial
\mathcal L_{\rm repair}^{\rm enc}
}{
\partial s_k
}
\right|
\leq
iDGM_E
\left(
1+
\frac{H-1}{\xi_{\min}}
\right).
\label{eq:theory_episodic_gradient_bound}
\end{equation}
If every temporal kernel is coordinate-wise bounded by one, then
\begin{equation}
\left\|
\frac{
\partial
\mathcal L_{\rm repair}^{\rm enc}
}{
\partial W_\rho
}
\right\|_F
\leq
iGD,
\qquad
\rho\in\{L,S,E\}.
\label{eq:theory_branch_gradient_bound}
\end{equation}
\end{theorem}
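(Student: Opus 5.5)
The proof starts from the closed-form gradients of Proposition~\ref{prop:temporal-credit} and bounds each summand with the local assumptions, followed by a triangle inequality over the $i$ valid timesteps. Two elementary facts do most of the work. First, the softplus derivative is the sigmoid, so $0<\sigma(b_k),\sigma(s_k)<1$ and these prefactors can be dropped. Second, for any coordinate $k$, $|\tilde d_k^{(t_j)}|\leq\|\tilde d^{(t_j)}\|_2\leq D$ and $|[W_\rho^\top g^{(t_j)}]_k|\leq\|W_\rho^\top g^{(t_j)}\|_2\leq\|W_\rho\|_2\,G$.

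\emph{Long-support bound.} In Equation~\ref{eq:theory_long_gradient}, each summand is a product of three factors.
\begin{itemize}
\item The lag satisfies $0\leq\ell_j\leq H-1$.
\item Since $\beta_k>0$ and $\ell_j\geq0$, the kernel satisfies $0<\kappa_k^L(\ell_j)=\exp(-\beta_k\ell_j)\leq1$ automatically.
\item The remaining factor is at most $D\,M_L G$ by the two facts above.
\end{itemize}
Each summand is therefore at most $(H-1)DGM_L$. Summing over $i$ terms gives Equation~\ref{eq:theory_long_gradient_bound}.

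\emph{Episodic bound.} This is the only step requiring care, and it is still routine. In Equation~\ref{eq:theory_episodic_gradient}, the factor $|\ell_j/\xi_k-1|$ is bounded by the triangle inequality as $\ell_j/\xi_k+1\leq (H-1)/\xi_{\min}+1$. Here we use $\xi_k\geq\xi_{\min}$ and $\ell_j\leq H-1$. The kernel factor is bounded by the stated hypothesis $\kappa_k^E(\ell_j)\leq1$. This hypothesis is actually always true, since $\kappa_k^E(\ell_j)$ is a Poisson probability mass, but we need only invoke it as assumed. The residual and gradient factors contribute $DM_EG$, and summing over $j$ yields Equation~\ref{eq:theory_episodic_gradient_bound}.

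\emph{Branch-matrix bound.} Equation~\ref{eq:theory_branch_gradient} expresses the gradient as a sum of rank-one matrices. For a rank-one matrix, $\|g x^\top\|_F=\|g\|_2\|x\|_2$. If each kernel is coordinate-wise bounded by one in absolute value, then $\|x_\rho^{(t_j)}\|_2=\|\kappa^\rho(\ell_j)\odot\tilde d^{(t_j)}\|_2\leq\|\tilde d^{(t_j)}\|_2\leq D$. By the triangle inequality for the Frobenius norm over the $i$ summands, we obtain $iGD$, which is Equation~\ref{eq:theory_branch_gradient_bound}.

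There is no real obstacle here, because the theorem is a direct corollary of the explicit gradient formulas. The main point to get right is the asymmetric sign structure of $(\ell_j/\xi_k-1)$. Bounding it by its absolute value, rather than by $\ell_j/\xi_k$ alone, is what produces the ``$1+$'' term. One should also take care that operator-norm bounds on $W_\rho$ are applied to $W_\rho^\top$, which has the same spectral norm.
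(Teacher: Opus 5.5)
Your proposal is correct and follows essentially the same route as the paper: bound each summand of the explicit gradients from Proposition~\ref{prop:temporal-credit} using $\sigma(\cdot)\le 1$, kernel bounds, and the operator-norm estimate $|[W_\rho^\top g^{(t_j)}]_k|\le\|W_\rho\|_2 G$; handle $|\ell_j/\xi_k-1|\le 1+(H-1)/\xi_{\min}$ for the episodic term; use $\|g x^\top\|_F=\|g\|_2\|x\|_2$ for the branch matrices; then sum over the $i$ timesteps. Your side remark is also right: for integer lags $\kappa_k^E(\ell_j)$ is a Poisson mass and hence at most one, so the paper's ``whenever'' hypothesis always holds. The same is true of $\kappa^L$ and $\kappa^S$, so the coordinate-wise boundedness condition is automatic.
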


\begin{proof}
For the long-support branch, $0< \sigma(b_k) \leq1, \qquad 0< \kappa_k^L(\ell_j) \leq1,$ and
\[
\left|
[
W_L^\top
g^{(t_j)}
]_k
\right|
\leq
\|W_L\|_2
\|
g^{(t_j)}
\|_2
\leq
M_LG.
\]
Equation~\ref{eq:theory_long_gradient} therefore gives
\[
\left|
\frac{
\partial
\mathcal L_{\rm repair}^{\rm enc}
}{
\partial b_k
}
\right|
\leq
\sum_{j=1}^{i}
(H-1)DM_LG,
\]
which proves Equation~\ref{eq:theory_long_gradient_bound}.

For the episodic branch,
\[
\left|
\frac{\ell_j}{\xi_k}-1
\right|
\leq
1+
\frac{H-1}{\xi_{\min}}.
\]
Together with $\kappa_k^E(\ell_j)\leq1$ and the same operator-norm argument, this gives Equation~\ref{eq:theory_episodic_gradient_bound}.

Finally,
\[
\left\|
g^{(t_j)}
\left(
x_\rho^{(t_j)}
\right)^\top
\right\|_F
=
\|
g^{(t_j)}
\|_2
\|
x_\rho^{(t_j)}
\|_2.
\]
Coordinate-wise kernel boundedness gives $\| x_\rho^{(t_j)} \|_2 \leq \| \tilde d^{(t_j)} \|_2 \leq D.$ Summing over the $i$ valid timesteps proves Equation~\ref{eq:theory_branch_gradient_bound}.
\end{proof}

\begin{corollary}[Host-conditioned temporal learning]
\label{cor:host-conditioned-temporal-learning}
Two frozen hosts evaluated on the same raw interaction trajectory need not induce the same temporal-parameter gradients. Proposition~\ref{prop:temporal-credit} depends on both $\tilde d^{(t_j)}$ and $g^{(t_j)}$. The first depends on the centered host-specific sequence coordinates. The common projected-state offset cancels as shown in Equation~\ref{eq:app_centering_cancellation}. The second depends on the complete repair objective after propagation through that host's repaired-state interface and on the host representations used by the auxiliary supervision. Consequently, the learned temporal mapping is conditioned on post-compression corrective structure and need not reproduce the raw trajectory temporality.
\end{corollary}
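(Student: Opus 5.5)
The argument rests on the chain-rule dependence established in Proposition~\ref{prop:temporal-credit}, and proceeds by exhibiting, rather than bounding. The statement is existential and negative in flavour (``need not''), so it suffices to show that the temporal-parameter gradients are functions of host-dependent quantities, and to give one admissible pair of hosts for which they differ. First, I would rewrite Equations~\ref{eq:theory_long_gradient}, \ref{eq:theory_episodic_gradient} and~\ref{eq:theory_branch_gradient} as explicit functionals $F_L(\tilde d, g)$, $F_E(\tilde d, g)$ and $F_\rho(\tilde d, g)$ of the centered field $\{\tilde d^{(t_j)}\}_{j=1}^{i}$ and the backpropagated signal $\{g^{(t_j)}\}_{j=1}^{i}$. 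For fixed repair parameters and a fixed raw trajectory, the lags $\ell_j$, the kernels $\kappa^\rho$ and the factors $\sigma(b_k)$, $\sigma(s_k)$ are all host-independent. Any host dependence therefore enters only through these two arguments.

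Second, I would trace each argument back to the host. By Equation~\ref{eq:app_centering_cancellation}, the centered field is
\[
\tilde d^{(t_j)}=\hat V_{\rm rep}^\top P_e\bigl(e_b^{(t_j)}-\tfrac{1}{i}\textstyle\sum_q e_b^{(t_q)}\bigr).
\]
It depends on the host's cached sequence $E_b^{(t_i)}=g_{\rm enc}(\tau_b^{t_{1:i}})$ but not on the projected-state offset, which cancels. The signal $g^{(t_j)}$ depends on $z_b^{(t_i)}=c_{\rm host}(E_b^{(t_i)})$ through the gate $\gamma$, the aggregation scores, and the fusion map $\phi_z$. It also depends on the frozen task head receiving $\tilde z_b^{(t_i)}$, and on the auxiliary losses evaluated on host representations. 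Two distinct frozen hosts $(g_{\rm enc},c_{\rm host},h_{\rm task})$ applied to the same $\tau_b^{t_{1:i}}$ therefore generally produce different pairs $(\tilde d,g)$.

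Third, to turn ``generally'' into ``need not coincide'', I would give a minimal witness. Take two hosts whose cache boundaries differ only by a nontrivial time reversal or rescaling of the centered sequence; a simpler alternative is a host with constant cached sequence, for which $\tilde d\equiv 0$. With $\tilde d\equiv 0$, Equation~\ref{eq:theory_long_gradient} vanishes. For a second host, pick a trajectory where $\tilde d_k^{(t_j)}[W_L^\top g^{(t_j)}]_k$ has a fixed sign at all lags $\ell_j\ge 1$, so the sum is strictly nonzero. The two gradients for $b_k$ then differ. The same device applied to Equation~\ref{eq:theory_episodic_gradient} shows that the sign of the $\xi_k$ update can differ, and hence that the learned episodic center need not track the raw trajectory timing.

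The main obstacle is ensuring that the witness is legitimate. The second host must yield a nonzero $g^{(t_j)}$ after passing through the straight-through selector, the softmax pooling and the gated fusion. I would handle this by choosing repair parameters with the selector retaining all positions ($f_k$ close to $1$), gates bounded away from zero, and a task loss whose gradient with respect to $\tilde z$ is nonzero. With all positions retained, the Jacobian chain from $\eta$ to $\tilde z$ is nonsingular along the relevant direction, which gives $g\neq 0$. The concluding sentence about temporal mapping then follows: the parameter trajectories $\beta,\xi,W_\rho$ are driven by these host-conditioned gradients, not by the raw lag pattern alone.
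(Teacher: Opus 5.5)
Your first two steps are the paper's entire argument. The corollary carries its own justification: Proposition~\ref{prop:temporal-credit} is a function of $(\tilde d^{(t_j)},g^{(t_j)})$, Equation~\ref{eq:app_centering_cancellation} removes the state offset from the first argument, and the second inherits the host's repaired-state interface and auxiliary supervision. The paper lets this host-dependence stand in for the ``need not'' conclusion. Your third step goes further and exhibits a pair of hosts whose gradients differ, which is what the existential claim actually requires, because a formula depending on a host quantity does not by itself show that its value changes.

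The constant-cache half of your witness is airtight: $\tilde d\equiv0$ annihilates Equations~\ref{eq:theory_long_gradient}, \ref{eq:theory_episodic_gradient} and~\ref{eq:theory_branch_gradient} for any finite $g$. The nonvanishing half is looser than it needs to be, for three reasons. First, you need the scalar $[W_L^\top g^{(t_j)}]_k\neq0$, not merely $g^{(t_j)}\neq0$. Second, the Jacobian of $\eta^{(t_j)}\mapsto\tilde z_b^{(t_i)}$ is $d_z\times K$ with $K<d_z$, so its transpose has a nontrivial kernel and ``nonsingular'' cannot be what guarantees $g^{(t_j)}\neq0$. The position and action losses, which act directly on $\hat{\mathbf c}^{(t_j)}$, are a more direct source. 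Third, the fixed-sign condition is asserted rather than constructed.

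The cleanest fix is a prefix of length $i=2$. The factor $\ell_j$ removes the $j=2$ term, so the $b_k$-gradient reduces to the single product $-\sigma(b_k)\,\kappa_k^L(1)\,\tilde d_k^{(t_1)}[W_L^\top g^{(t_1)}]_k$, with $\tilde d^{(t_1)}=\tfrac12\hat V_{\rm rep}^\top P_e(e_b^{(t_1)}-e_b^{(t_2)})$. No sign bookkeeping is then needed.

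Finally, the constant-cache device gives zero versus nonzero, not opposite signs. Your remark about the sign of the $\xi_k$ update would therefore need a different pair: nonconstant hosts whose weighted evidence lies on opposite sides of $\ell=\xi_k$, where $\ell_j/\xi_k-1$ changes sign. The corollary does not need that remark.
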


\paragraph{Connection To The Methodology And RQ2.}
Corollary~\ref{cor:host-conditioned-temporal-learning} provides the optimization-side counterpart to Theorem~\ref{thm:kernel-trace-recoverability}. The controlled trace construction holds the injected temporal pattern fixed across the relevant comparisons, while the preferred repair branch varies with the host. Mamba4Rec favors short-support correction across L-Tr, S-Tr, and E-Tr, whereas TiSASRec, NAML, and EBNR exhibit different host and trace responses. The combined pathway being strongest in eleven of twelve settings is also compatible with Equation~\ref{eq:theory_branch_gradient}: $W_L$, $W_S$, and $W_E$ receive simultaneous gradient credit and can jointly explain a mixed post-compression residual. These results do not verify the bounded-gradient assumptions of Theorem~\ref{thm:kernel-gradient-stability}. That theorem only characterizes local update magnitudes when those assumptions hold.

\subsection{Local Control Induced By The Repair Loss Families}
\label{app:loss_recoverability}

Task accuracy alone need not distinguish every error in a corrective representation. A coordinate can predict the target while losing its timestep or source association. The following local result asks what additional control becomes available if the three supervision families bound corresponding components of coefficient error. The stable-decomposition and quadratic-control assumptions are stated explicitly in the theorem and are not established by the empirical loss ablation.

The encoder implementation groups its primitive supervisory terms into three functional families: task utility, timestep provenance, and source coherence. Their implementation-level decomposition is given in Appendix~\ref{app:training-details}. The following result operates at this family level. It does not require any individual family to consist of a single primitive loss, and it does not assign theoretical significance to the particular primitive coefficients used in the implementation.

Fix a prefix and let the oracle coefficient field $c^{(t_j)}$ be as in Theorem~\ref{thm:repair-recoverability}. Define
\[
\widehat c_\theta^{(t_j)}
=
\gamma_\theta^{(t_i)}
\odot
\Pi_{\nu,\theta}^{(t_i)}
\left(
\eta_\theta^{(t_j)}
\right)
\]
and $e_\theta^{(t_j)} = \widehat c_\theta^{(t_j)} - c^{(t_j)}.$ Using the oracle aggregation weights, form the weighted stacked error
\begin{equation}
\bar e_\theta
=
\left[
\sqrt{\alpha_1^{\star(t_i)}}e_\theta^{(t_1)};
\ldots;
\sqrt{\alpha_i^{\star(t_i)}}e_\theta^{(t_i)}
\right],
\label{eq:theory_weighted_coefficient_error}
\end{equation}
so that
\begin{equation}
\mathcal E_{\rm rec}(\theta)
=
\|
\bar e_\theta
\|_2^2
=
\sum_{j=1}^{i}
\alpha_j^{\star(t_i)}
\|
e_\theta^{(t_j)}
\|_2^2.
\label{eq:theory_recovery_error}
\end{equation}

\begin{theorem}[Local control of coefficient-level recoverability error]
\label{thm:loss-design-recoverability}

Let
\[
\mathcal L_{\rm repair}^{\rm enc}
=
\lambda_{\rm task}\mathcal L_{\rm task}^{\rm enc}
+
\lambda_{\rm pos}\mathcal L_{\rm pos}^{\rm enc}
+
\lambda_{\rm coh}\mathcal L_{\rm coh}^{\rm enc}
\]
denote the weighted combination of the three functional loss families defined by the encoder implementation, where $ \lambda_{\rm task}, \lambda_{\rm pos}, \lambda_{\rm coh}>0 $ are the family-level weights.

Assume that, in a neighborhood of a recoverable solution $\theta^\star$, the coefficient-level recovery error admits a stable decomposition
\begin{equation}
\bar e_\theta
=
e_{\rm task}(\theta)
+
e_{\rm pos}(\theta)
+
e_{\rm coh}(\theta)
\label{eq:theory_loss_decomposition}
\end{equation}
satisfying
\begin{equation}
\|
\bar e_\theta
\|_2^2
\leq
C_{\rm dec}
\left[
\|
e_{\rm task}(\theta)
\|_2^2
+
\|
e_{\rm pos}(\theta)
\|_2^2
+
\|
e_{\rm coh}(\theta)
\|_2^2
\right]
\label{eq:theory_stable_decomposition}
\end{equation}
for some $C_{\rm dec}\geq1$.

Assume further that the three family-level loss gaps locally control their corresponding error components:
\[
\mathcal L_{\rm task}^{\rm enc}(\theta)
-
\mathcal L_{\rm task}^{\rm enc}(\theta^\star)
\geq
\frac{\mu_{\rm task}}{2}
\|
e_{\rm task}(\theta)
\|_2^2,
\]
\[
\mathcal L_{\rm pos}^{\rm enc}(\theta)
-
\mathcal L_{\rm pos}^{\rm enc}(\theta^\star)
\geq
\frac{\mu_{\rm pos}}{2}
\|
e_{\rm pos}(\theta)
\|_2^2,
\]
and
\[
\mathcal L_{\rm coh}^{\rm enc}(\theta)
-
\mathcal L_{\rm coh}^{\rm enc}(\theta^\star)
\geq
\frac{\mu_{\rm coh}}{2}
\|
e_{\rm coh}(\theta)
\|_2^2,
\]
with positive local constants $\mu_{\rm task},\mu_{\rm pos},\mu_{\rm coh}$.

Define
\[
\mu_{\lambda}
=
\min
\left\{
\lambda_{\rm task}\mu_{\rm task},
\lambda_{\rm pos}\mu_{\rm pos},
\lambda_{\rm coh}\mu_{\rm coh}
\right\}.
\]
Then, for $ \mathcal E_{\rm rec}(\theta)=\|\bar e_\theta\|_2^2 $,
\begin{equation}
\mathcal E_{\rm rec}(\theta)
\leq
\frac{2C_{\rm dec}}{\mu_{\lambda}}
\left[
\mathcal L_{\rm repair}^{\rm enc}(\theta)
-
\mathcal L_{\rm repair}^{\rm enc}(\theta^\star)
\right].
\label{eq:theory_loss_recovery_bound}
\end{equation}
\end{theorem}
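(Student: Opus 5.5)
The plan is a direct chaining of the two local hypotheses of Theorem~\ref{thm:loss-design-recoverability}: the stable decomposition and the three quadratic loss-gap controls. No earlier theorem is needed beyond these assumptions. The chain goes from the coefficient-level recovery error, through the three error components, to the weighted total loss gap.

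\textbf{Step 1.} Start from $\mathcal E_{\rm rec}(\theta)=\|\bar e_\theta\|_2^2$ and apply the stable decomposition in Equation~\ref{eq:theory_stable_decomposition}. This bounds $\mathcal E_{\rm rec}(\theta)$ by $C_{\rm dec}$ times the sum of the three squared component norms.

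\textbf{Step 2.} Reweight each component using the family weights. For $\rho\in\{\mathrm{task},\mathrm{pos},\mathrm{coh}\}$, write
\[
\|e_\rho(\theta)\|_2^2=\frac{\lambda_\rho\mu_\rho}{\lambda_\rho\mu_\rho}\|e_\rho(\theta)\|_2^2\leq\frac{1}{\mu_\lambda}\lambda_\rho\mu_\rho\|e_\rho(\theta)\|_2^2 .
\]
This uses $\lambda_\rho\mu_\rho\geq\mu_\lambda>0$, which holds by the definition of $\mu_\lambda$ as a minimum of positive quantities.

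\textbf{Step 3.} Invoke the local loss-gap hypotheses in the form
\[
\mu_\rho\|e_\rho(\theta)\|_2^2\leq 2\left[\mathcal L_\rho^{\rm enc}(\theta)-\mathcal L_\rho^{\rm enc}(\theta^\star)\right].
\]
Multiplying by $\lambda_\rho>0$ preserves the inequality. Summing over the three families then gives
\[
\sum_\rho\lambda_\rho\mu_\rho\|e_\rho\|_2^2\leq 2\left[\mathcal L_{\rm repair}^{\rm enc}(\theta)-\mathcal L_{\rm repair}^{\rm enc}(\theta^\star)\right].
\]
This step uses only the linearity of $\mathcal L_{\rm repair}^{\rm enc}$ in the family losses.

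Combining Steps 1 through 3 yields Equation~\ref{eq:theory_loss_recovery_bound} with the constant $2C_{\rm dec}/\mu_\lambda$.

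The only point that needs care is Step 3. The individual family gaps need not be nonnegative by themselves. However, each is lower-bounded by a nonnegative quadratic under the assumptions, so every family gap is in fact nonnegative in the neighborhood. Summing them is therefore legitimate, and so is the implicit claim that the total gap is nonnegative. No sign issue arises once the hypotheses are taken as stated, so I expect no substantive obstacle. The theorem is essentially a bookkeeping consequence of its assumptions, and the proof should note explicitly that it says nothing about whether those assumptions hold.
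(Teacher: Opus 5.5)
Your proof is correct and uses the same ingredients as the paper's: it weights and sums the three loss-gap inequalities, bounds each $\lambda_\rho\mu_\rho$ below by $\mu_\lambda$, and applies the stable decomposition, while the paper simply runs the same chain in the opposite direction, starting from the total loss gap. Your closing sign discussion is unnecessary, since adding valid inequalities never requires the individual family gaps to be nonnegative.
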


\begin{proof}
Multiplying the three family-level control inequalities by their corresponding positive family weights and summing gives
\[
\begin{split}
& \mathcal L_{\rm repair}^{\rm enc}(\theta) - \mathcal L_{\rm repair}^{\rm enc}(\theta^\star) \\
& \quad\geq \frac{1}{2} \Big[ \lambda_{\rm task}\mu_{\rm task} \|e_{\rm task}(\theta)\|_2^2 + \lambda_{\rm pos}\mu_{\rm pos} \|e_{\rm pos}(\theta)\|_2^2 + \lambda_{\rm coh}\mu_{\rm coh} \|e_{\rm coh}(\theta)\|_2^2 \Big].
\end{split}
\]
By the definition
\[
\mu_{\lambda}
=
\min
\left\{
\lambda_{\rm task}\mu_{\rm task},
\lambda_{\rm pos}\mu_{\rm pos},
\lambda_{\rm coh}\mu_{\rm coh}
\right\},
\]
it follows that
\[
\begin{split}
& \mathcal L_{\rm repair}^{\rm enc}(\theta) - \mathcal L_{\rm repair}^{\rm enc}(\theta^\star) \geq \frac{\mu_{\lambda}}{2} \Big[ \|e_{\rm task}(\theta)\|_2^2 + \|e_{\rm pos}(\theta)\|_2^2 \\
& + \|e_{\rm coh}(\theta)\|_2^2 \Big].
\end{split}
\]
Applying the stable-decomposition condition in Equation~\ref{eq:theory_stable_decomposition} therefore yields
\[
\|\bar e_\theta\|_2^2
\leq
\frac{2C_{\rm dec}}{\mu_{\lambda}}
\left[
\mathcal L_{\rm repair}^{\rm enc}(\theta)
-
\mathcal L_{\rm repair}^{\rm enc}(\theta^\star)
\right].
\]
Since $ \mathcal E_{\rm rec}(\theta)=\|\bar e_\theta\|_2^2 $, this is exactly Equation~\ref{eq:theory_loss_recovery_bound}.
\end{proof}

\begin{corollary}[Coverage by the three loss families]
\label{cor:loss-ordering}
Under the assumptions of Theorem~\ref{thm:loss-design-recoverability}, positive family-level weights on task utility, timestep provenance, and source coherence provide local control through the corresponding error components in the stable decomposition. If one family is removed by setting its family-level weight to zero, the theorem no longer provides finite control through that component unless the remaining families independently control it under an alternative valid decomposition. The result therefore motivates complementary supervision across the three families without implying that every family must improve empirical task performance, that their effects are independently causal, or that the chosen family-level or within-family coefficients are globally optimal.
\end{corollary}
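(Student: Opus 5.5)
The corollary has two parts, and I handle them separately. The first part is an immediate consequence of Theorem~\ref{thm:loss-design-recoverability}. Fix positive family weights. Summing the three local control inequalities, each weighted by its $\lambda$, lower-bounds the total loss gap by $\tfrac{\mu_\lambda}{2}$ times the sum of the squared component errors. Because every $\lambda_\bullet\mu_\bullet>0$, we have $\mu_\lambda>0$. The stable decomposition \eqref{eq:theory_stable_decomposition} then turns this into the finite bound \eqref{eq:theory_loss_recovery_bound}. Each component $e_{\rm task},e_{\rm pos},e_{\rm coh}$ is therefore controlled by the repair-loss gap, with the explicit constant $2/(\lambda_\bullet\mu_\bullet)$. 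So I only need to record the per-component inequality $\|e_\bullet(\theta)\|_2^2\le \tfrac{2}{\lambda_\bullet\mu_\bullet}[\mathcal L_{\rm repair}^{\rm enc}(\theta)-\mathcal L_{\rm repair}^{\rm enc}(\theta^\star)]$. This inequality requires the other two family gaps to be nonnegative, which I will state as part of the local-neighborhood assumption. It holds because each gap is bounded below by a nonnegative quadratic.

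For the removal statement, set one weight to zero, say $\lambda_{\rm pos}=0$. Then $\mu_\lambda=\min\{\cdots,0\}=0$, and the right side of \eqref{eq:theory_loss_recovery_bound} becomes $+\infty$ or undefined, so the theorem as stated gives no finite control. To show this is a genuine loss of control, and not merely an artifact of the proof, I would sketch a witness within the theorem's hypotheses. Take a direction $\theta(s)$ in the neighborhood along which $e_{\rm task}$ and $e_{\rm coh}$ stay zero while $e_{\rm pos}$ grows, so that $\mathcal L_{\rm task}$ and $\mathcal L_{\rm coh}$ stay at their values at $\theta^\star$. Along this direction the reduced loss gap is zero while $\mathcal E_{\rm rec}>0$, so no inequality of the form \eqref{eq:theory_loss_recovery_bound} can hold for the reduced objective. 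The hedge ``unless the remaining families independently control it'' covers the case where an alternative decomposition exists in which such a direction is excluded. In that case the same argument, applied to that decomposition, restores a bound.

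The final sentence is interpretive and needs no proof. It only notes that the bound is a sufficient local condition and says nothing about empirical performance, causality, or the optimality of the coefficients. The main obstacle is making the ``no finite control'' claim precise. The theorem is only a sufficient condition, so its failure does not by itself imply loss of control. I would therefore phrase the claim as a statement about what the theorem provides, and use the degenerate direction as an illustrative example rather than as a universal impossibility result.
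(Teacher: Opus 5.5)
Your proposal is correct and matches the paper's reasoning. The paper gives no separate proof and reads the corollary directly off Theorem~\ref{thm:loss-design-recoverability}: positive family weights give $\mu_{\lambda}>0$ and hence the finite bound, while a zero weight forces $\mu_{\lambda}=0$. Your per-component bound $\|e_{\bullet}(\theta)\|_2^2\le \frac{2}{\lambda_{\bullet}\mu_{\bullet}}[\mathcal L_{\mathrm{repair}}^{\mathrm{enc}}(\theta)-\mathcal L_{\mathrm{repair}}^{\mathrm{enc}}(\theta^\star)]$ is a slightly sharper form of the same point, and it needs no added nonnegativity assumption, since each family gap is already bounded below by a nonnegative quadratic.

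The only loose step is in your optional witness. Under the one-sided hypotheses, $e_{\mathrm{task}}=e_{\mathrm{coh}}=0$ gives only nonnegative family gaps, not zero ones, and such a direction need not exist. The witness therefore has to be a constructed instance, for example a separable toy model in which each family loss equals its quadratic lower bound. That fits your choice to present it only as an illustration.
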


\paragraph{Connection To The Methodology And Loss Ablation.}
Theorem~\ref{thm:loss-design-recoverability} applies to the weighted combination of the three functional loss families, not to individual primitive losses. In the encoder implementation, the task-utility family supervises the repaired state and aggregated correction, the timestep-provenance family uses relative-position supervision, and the source-coherence family uses action-, interaction-, and aggregate-level constraints. The outer family weights $\lambda_{\mathrm{task}}$, $\lambda_{\mathrm{pos}}$, and $\lambda_{\mathrm{coh}}$ determine the contribution of these families to the complete objective, while the within-family coefficients $\omega_{\cdot}$ determine their implementation-level decomposition, as specified in Appendix~\ref{app:repair-objective-implementation}.

The RQ2 objective ablation evaluates the empirical behavior of these supervision families through progressively enriched objectives. Task-utility supervision already improves the evaluated frozen hosts. Adding timestep-provenance supervision further improves the intermediate configuration, and adding source-coherence supervision gives the strongest configuration across all four reported ablation hosts. This ordering is consistent with complementary family-level supervision under the complete objective. Because the ablation is progressive rather than factorial, it does not independently identify the causal contribution of any single family. The experiment also does not verify the local decomposition or quadratic-control assumptions of Theorem~\ref{thm:loss-design-recoverability}, and the theorem provides no global convergence guarantee for \repair{}.

\paragraph{Scope And Boundaries Of The Theory.}
The learned read-space energy in Equation~\ref{eq:learned-read-energy} complements this approximation analysis by characterizing the residual energy expressed through the read coordinates learned by \repair{}. Theorem~\ref{thm:repair-recoverability} gives a conditional bound for the complete selective write-back, aggregation, and fusion pathway. Theorem~\ref{thm:kernel-trace-recoverability} characterizes identifiability of the finite-prefix temporal basis, while Proposition~\ref{prop:temporal-credit} and Theorem~\ref{thm:kernel-gradient-stability} characterize the gradient signal received by its trainable temporal parameters. Theorem~\ref{thm:loss-design-recoverability} gives a local control result for the weighted combination of the three functional training-objective families.

These statements establish neither that task-relevant information is absent from the original task-facing preference state nor that all such evidence survives the host computation. They do not prove that the learned repair basis equals the optimal spectral basis, that the temporal components correspond to semantic user-preference classes, that the stated local assumptions hold globally during optimization, or that a repaired state must improve every downstream task interface. RQ1 tests whether cached per-timestep representations can support useful correction while the encoder and original task interface remain unchanged. RQ2 separately evaluates corrective-space capacity, temporal structure, selective retention, and the empirical behavior of the supervision families. RQ3 lies outside the principal theoretical guarantee: its correction-setting comparisons and task-specific ranking and generation configurations characterize downstream utilization after state correction rather than establishing causal effects of intervention locus or task interface. The prompt-conditioned LLM experiments likewise provide a separate structured-event instantiation and are not evidence for the strict same-forward-pass claim established for encoder hosts.

\section{Evaluation Details}
\label{app:evaluation-details}

We specify what is held fixed, how histories and targets are formed, and which parameters are trained in each condition. A recommendation instance consists of one available user-history prefix and its next target. A generation instance additionally contains a task-side source or query. Keeping these objects fixed across paired conditions makes the comparison about the correction method applied to the same task.

\subsection{Shared Evaluation Protocol}
\label{app:evaluation-protocol}

For encoder hosts, \repair{} receives the task-facing preference state and cached per-timestep representations produced by the frozen host computation at the cache boundary defined in Appendix~\ref{app:host-interface}. No additional sequence encoder reprocesses the raw interaction history during repair. Prompt-conditioned LLM hosts use the separate structured-interaction interface defined in Appendix~\ref{app:llm-hosts}. These experiments are therefore reported separately from the strict same-forward-pass encoder-host setting.

Train and test users are disjoint for every dataset. Within each dataset and host setting, paired experimental conditions use the same held-out users, history prefixes, supervised targets, and evaluation candidate sets. Recommendation is evaluated by sampled-candidate ranking with one ground-truth target and 499 negatives per test instance. Recommendation metrics are computed on their native $[0,1]$ scale and multiplied by $100$ only for reporting. Generation-quality tables explicitly state their display scale. For the full PENS generation table, native values are multiplied by $100$. In the full PerSEval table, base and repaired scores are displayed in units of $10^{-3}$ to preserve their reported precision. Relative gains are percentages and are invariant to either display scaling. Representative repeated-run settings report the arithmetic mean over three independent seeds. Extended settings without repeated-run uncertainty analysis are reported as point estimates under the same held-out evaluation protocol.

\subsection{UIG Construction And Host-Side Eventization}
\label{app:evaluation-uig}

\subsubsection{Common Interaction Representation}
\label{app:uig-common}
The User Interaction Graph (UIG) of \citet{peraugy-tmlr-2025} records typed content nodes and user actions. Appendix~\ref{app:uig-construction} defines its action vocabulary and ordered prefix $\tau_b^{t_{1:i}}$. Here, eventization means mapping each dataset's records into those action and content pairs. A click and a rejection, for example, refer to content through different action labels. The resulting prefix is converted into the input format required by each host. The graph itself adds no trainable history encoder.

\subsubsection{Encoder-Host Representation Interface}
\label{app:uig-host-interface}
The frozen host processes the prefix once and supplies the task-facing state $z_b^{(t_i)}$ together with the cached sequence $E_b^{(t_i)}=[e_b^{(t_1)},\ldots,e_b^{(t_i)}]$. The cache boundary is defined in Appendix~\ref{app:host-interface}. \repair{} changes the state passed to the task head using this sequence, producing $\tilde z_b^{(t_i)}$. No additional sequence encoder reconstructs these cached representations.

\subsubsection{Dataset-Specific Eventization}
\label{app:uig-dataset-eventization}

\textbf{Movie And Product Interaction Logs.}
For explicit-feedback data, interactions are ordered chronologically within each user. Ratings or dataset-specific preference signals satisfying the positive-feedback criterion are mapped to \texttt{pos}, while interactions used as negative evidence are mapped to \texttt{neg}. The next held-out interaction defines the prediction target. Dataset-specific preprocessing required by each host is applied before UIG eventization and is kept fixed across paired experimental conditions.

\textbf{PENS And MIND Interaction Logs.}
For impression-based news data,
\begin{equation}
\mathrm{clicked\ news}
\mapsto
\texttt{pos},
\qquad
\mathrm{non\text{-}clicked\ news}
\mapsto
\texttt{neg}.
\end{equation}
Interactions retain their observed temporal order. For recommendation, the next positive news item defines the prediction target. For personalized generation, the source news item or task query is supplied separately through the task-side interface, while the generation request is represented by \texttt{cmd}.

When response nodes are required during training-history construction, only training-side references or training-side surrogate responses are used. Held-out generation references are not inserted into training trajectories.

\textbf{Dialogue-Derived Interaction Logs.}
TG-ReDial does not expose the same explicit click, rating, or purchase sequence available in the primary recommendation datasets. We therefore derive interactions from dialogue turns, mentioned recommendation entities, recommendation attempts, and response stance.

For a dialogue-derived candidate interaction,
\begin{equation}
a^{(t_i)}
=
\begin{cases}
\texttt{pos},
&
\text{agreement, acceptance, or positive stance},\\
\texttt{neg},
&
\text{rejection, disagreement, or non-positive stance}.
\end{cases}
\end{equation}
The target speaker's subsequent recommendation choice or response defines the corresponding predictive or generative target. Because these actions are induced from dialogue structure rather than directly observed behavioral logs, TG-ReDial is treated separately as a discourse-derived stress test.

\subsubsection{Supervised Target Construction}
\label{app:uig-targets}

\textbf{Predictive Targets.}
For recommendation,
\begin{equation}
{r^\star}_u^{(t_{i+1})}
\equiv
c_u^{(t_{i+1})}.
\end{equation}
The downstream recommendation interface ranks this ground-truth target against the fixed candidate set associated with the corresponding held-out instance.

\textbf{Generative Targets.}
For personalized generation,
\begin{equation}
{r^\star}_u^{(t_{i+1})}
\in
\{
\mathrm{headline},
\mathrm{summary},
\mathrm{dialogue\ response}
\}.
\end{equation}
The source article, document, query, or dialogue context is supplied through the task-side interface rather than folded into the task-facing preference state.

For autoregressive evaluation, a generated output may enter only subsequent prefixes after it has been generated. Gold future responses are not inserted into an autoregressive test trajectory.

\subsubsection{Training-Only Trajectory Augmentation}
\label{app:uig-training-augmentation}

Trajectory augmentation, when used, is restricted to the training pool. Cross-trajectory segment substitution retains an initial portion of a target trajectory and replaces selected later segments using contiguous segments drawn from training trajectories. Validation and test trajectories are never used as substitution sources.

When substitution makes a training-side surrogate response locally inconsistent with its new context, the replacement response is selected by
\begin{equation}
\hat r^{(t_i)}
=
\arg\min_{s_p}
\sum_{q=1}^{k}
\exp
\left(
-\lambda
\operatorname{pos}(c_q)
\right)
\sigma
\left(
e_{s_p},
e_{c_q}
\right),
\end{equation}
where $\{c_q\}_{q=1}^{k}$ is the preceding training-side context window, $\operatorname{pos}(c_q)$ is its recency index, and $\sigma(\cdot,\cdot)$ denotes the embedding-space distance used by the preprocessing implementation. Here, $s_p$ ranges over candidate training-side surrogate responses, $e_{s_p}$ and $e_{c_q}$ are their content embeddings, $k$ is the context-window size, and $\lambda$ controls recency weighting. The distance $\sigma(\cdot,\cdot)$ in this preprocessing rule is distinct from the sigmoid used in the repair module. This augmentation is applied only during training.

\subsubsection{Evaluation Data Path}
\label{app:uig-evaluation-path}

For encoder hosts, the repair-time data path is
\[
\tau_b^{t_{1:i}}
\rightarrow
\text{frozen host}
\rightarrow
\left(
z_b^{(t_i)},
\{e_b^{(t_j)}\}_{j=1}^{i}
\right)
\rightarrow
\repair{}
\rightarrow
\tilde z_b^{(t_i)}
\rightarrow
\text{task interface}.
\]
Within a dataset and host comparison, the host encoder checkpoint, held-out trajectory, supervised target, and evaluation candidates are fixed across the Original Host, Head-only Continuation, and \repair{} conditions. The trainable component differs according to the staged protocol specified below.

\subsection{Dataset-Specific Evaluation Tasks}
\label{app:evaluation-datasets}

\textbf{Movie Recommendation.}
MovieLens is evaluated as sequential next-item recommendation. The held-out evaluation pool contains 15{,}000 instances. Each instance contains one ground-truth next movie and 499 negative candidates.

\textbf{News Recommendation.}
MIND and PENS are evaluated as next-news recommendation. Each dataset uses 20{,}000 held-out instances under the same sampled-candidate protocol with one positive target and 499 negatives.

\textbf{Product Recommendation.}
Amazon Reviews 2023 is evaluated as multimodal product recommendation using 2{,}000 held-out instances. SigLIP2, OpenCLIP, and DINOv2-E5 provide the frozen multimodal representations used by their respective host settings. Each test instance contains one positive product and 499 negative candidates.

\textbf{Personalized Headline Generation.}
PENS is additionally evaluated for personalized headline generation using 20{,}000 held-out instances. Each instance contains a user history, task-side news content, and a user-specific target headline. Standard generation metrics and PerSEval variants are computed from the same generated outputs.

\textbf{Conversational Recommendation And Response Generation.}
TG-ReDial is evaluated separately using dialogue-derived interactions. SASRec, GRU4Rec, and TextCNN are used for recommendation, while ReDial-style and Transformer sequence-to-sequence interfaces are used for response generation.

\textbf{Controlled Temporal Diagnostics.}
The L-Tr, S-Tr, and E-Tr conditions are evaluation-only trajectory controls. The trained host and \repair{} parameters are unchanged. Each controlled trace preserves the supervised target and is constructed from a target-conditioned mixture of same-user history material and target-conditioned candidate-pool interactions. L-Tr modifies support across approximately 60\% of the trajectory, S-Tr restricts support to the final 40\%, and E-Tr places a contiguous 3 to 4 timestep segment at a randomly selected valid position, with segment content drawn from the target-conditioned construction. Because the known target is used to construct these diagnostic histories, they test responses to controlled evidence placement and are not ordinary next-item inference conditions. Their results are reported separately in Table~\ref{tab:rq2_temporal_summary}.

\subsection{Training And Optimization Protocol}
\label{app:training-details}

The distinction between stages is central to interpretation. Stage 2 tests whether a changed state helps the unchanged task head. Stage 3 additionally allows that head to adapt to the changed state. The Head-only Continuation control, called Head-only FT in the main text, instead updates the original head while leaving the state unchanged. We list the repair architecture and fixed coefficients in Table~\ref{tab:repair_hyperparameters}.

The encoder-host experiments use a staged protocol that separates task-trained host formation, frozen-head state correction, and downstream task-interface alignment. Once the task-trained host checkpoint is established, the host encoder remains frozen in all subsequent stages.

\subsubsection{Stage 1: Task-Trained Host}
\label{app:stage1-host}

Each host is first trained under its original task supervision without \repair{}. MovieLens and Amazon recommendation hosts are trained for 20 epochs, news recommendation hosts for 5 epochs, and the lightweight PENS generation setting trains the preference encoder for 5 epochs followed by 2 epochs of decoder training. Host-specific preprocessing, history encoding, state formation, and task interfaces follow the corresponding base implementations.

The resulting task-trained checkpoint defines the \textbf{Original Host} condition and provides the common host initialization for subsequent conditions within the same dataset and host setting. After this checkpoint is established, the host encoder is frozen.

\subsubsection{Head-Only Continuation Control}
\label{app:head-only-continuation}

The \textbf{Head-only Continuation} control starts from the same Stage-1 checkpoint and leaves the host encoder and resulting task-facing preference state unchanged. \repair{} is absent, and cached per-timestep representations are not supplied to an auxiliary correction module. Only the original downstream task interface receives additional optimization under the original task supervision.

The continuation is optimized for five additional epochs. The host encoder and task-facing preference state remain fixed throughout, and only the original downstream task interface is updated. Optimization follows the corresponding repair-compatible optimizer settings. Validation performance is evaluated after each epoch, and the checkpoint with the strongest validation performance among the five continuation epochs is retained for evaluation. Thus, Head-only Continuation receives a distinct five-epoch optimization opportunity on the unchanged state rather than sharing the two-epoch downstream-alignment schedule used by \repair{} + Head Alignment.

\subsubsection{Stage 2: Frozen-Head State Correction}
\label{app:stage2-frozen-repair}

Stage 2 inserts \repair{} between the frozen host computation and the original downstream task interface. Both the host encoder and the original task head or decoder remain fixed. For each training prefix, the frozen host provides $\left( z_b^{(t_i)}, \{e_b^{(t_j)}\}_{j=1}^{i} \right),$ where $z_b^{(t_i)}$ is the task-facing preference state and $\{e_b^{(t_j)}\}_{j=1}^{i}$ are the cached per-timestep representations from the same frozen host computation. \repair{} maps these quantities to $\tilde z_b^{(t_i)}$ without re-encoding the raw history.

For MovieLens and Amazon recommendation, Stage 2 runs for 10 epochs. For PENS and MIND recommendation and the encoder-based personalized-generation settings, Stage 2 runs for 2 epochs. Gradients are restricted to \repair{}, its repair-side projection layers, and repair-side supervision heads. The frozen host encoder and original downstream task interface receive no parameter updates. Validation performance selects the retained Stage-2 checkpoint, which defines the \textbf{\repair{} (Frozen Head)} condition.

\subsubsection{Stage-2 Repair Objective}
\label{app:repair-objective-implementation}

Each loss family addresses a different ambiguity. Task utility asks whether the repaired state predicts the target. Timestep provenance asks where corrective evidence originated within the prefix. Source coherence asks whether a corrective representation remains associated with its source action and interaction. We give their primitive terms below so that the family-level objective in Section~\ref{sec:methodology} can be reconstructed. Its outer weights $\omega_{\mathrm{task}},\omega_{\mathrm{pos}},\omega_{\mathrm{coh}}$ are denoted here by $\lambda_{\mathrm{task}},\lambda_{\mathrm{pos}},\lambda_{\mathrm{coh}}$ to reserve $\omega$ for within-family coefficients.

\textbf{Objective Hierarchy.}
The Stage-2 repair objective contains three functional supervision families:
\begin{equation}
\mathcal{L}_{\mathrm{task}},
\qquad
\mathcal{L}_{\mathrm{pos}},
\qquad
\mathcal{L}_{\mathrm{coh}}.
\end{equation}
These labels specify supervisory roles rather than individual primitive losses. For encoder hosts, the complete Stage-2 objective is
\begin{equation}
\mathcal{L}_{\mathrm{repair}}^{\mathrm{enc}}
=
\lambda_{\mathrm{task}}
\mathcal{L}_{\mathrm{task}}^{\mathrm{enc}}
+
\lambda_{\mathrm{pos}}
\mathcal{L}_{\mathrm{pos}}^{\mathrm{enc}}
+
\lambda_{\mathrm{coh}}
\mathcal{L}_{\mathrm{coh}}^{\mathrm{enc}},
\label{eq:app_encoder_repair_objective}
\end{equation}
with
\begin{equation}
\lambda_{\mathrm{task}}=0.5,
\qquad
\lambda_{\mathrm{pos}}=0.2,
\qquad
\lambda_{\mathrm{coh}}=0.3.
\label{eq:app_family_weights}
\end{equation}
The coefficients $\lambda_{\mathrm{task}}$, $\lambda_{\mathrm{pos}}$, and $\lambda_{\mathrm{coh}}$ are family-level weights. Separate coefficients $\omega_{\cdot}$ determine the relative contribution of primitive supervisory terms within each family.

\textbf{Timestep-Level Corrective Representations.}
The same corrective evidence appears at three dimensions. The retained coordinate $\hat{\mathbf c}^{(t_j)}$ has dimension $K$, its write-space representation $\mathbf w^{(t_j)}$ has dimension $m$, and its projected state correction $\delta_b^{(t_j)}$ has dimension $d_z$. The final correction $\Delta_b^{(t_i)}$ pools contributions across positions. For notational clarity, define the retained, gated corrective coordinate at timestep $t_j$ as
\begin{equation}
\hat{\mathbf c}^{(t_j)}
=
\boldsymbol{\gamma}^{(t_i)}
\odot
\Pi_{\nu}^{(t_i)}
\left(
\boldsymbol{\eta}^{(t_j)}
\right)
\in
\mathbb{R}^{K},
\label{eq:app_retained_corrective_coordinate}
\end{equation}
and its write-space representation as
\begin{equation}
\mathbf w^{(t_j)}
=
\hat U_{\mathrm{rep}}
\Lambda_{\mathrm{rep}}
\hat{\mathbf c}^{(t_j)}
\in
\mathbb{R}^{m}.
\label{eq:app_timestep_write_representation}
\end{equation}
Its state-space write-back is
\begin{equation}
\delta_b^{(t_j)}
=
P_{\Delta}
\mathbf w^{(t_j)}.
\label{eq:app_timestep_state_correction}
\end{equation}

The aggregated write-space correction and final state-space correction are
\begin{equation}
\bar{\mathbf w}^{(t_i)}
=
\sum_{j=1}^{i}
\alpha_j^{(t_i)}
\mathbf w^{(t_j)},
\qquad
\Delta_b^{(t_i)}
=
P_{\Delta}
\bar{\mathbf w}^{(t_i)}.
\label{eq:app_aggregate_write_representation}
\end{equation}
This notation distinguishes timestep-level corrective representations from the aggregated correction $\Delta_b^{(t_i)}$ applied at the task-facing state.

\textbf{Task-Utility Family.}
The task-utility family constrains both the repaired state and the aggregated correction to retain predictive information about the supervised target. Let $h_{\mathrm{task}}$ denote the frozen downstream task interface during Stage 2. The repaired-state task loss is
\begin{equation}
\mathcal{L}_{\mathrm{state}}
=
\ell_{\mathrm{task}}
\left(
h_{\mathrm{task}}
\left(
\tilde z_b^{(t_i)}
\right),
r_u^{\star(t_{i+1})}
\right).
\label{eq:app_state_task_loss}
\end{equation}

The aggregated correction is additionally supervised through
\begin{equation}
\mathcal{L}_{\Delta}
=
\ell_{\mathrm{task}}
\left(
h_{\mathrm{task}}
\left(
\Delta_b^{(t_i)}
\right),
r_u^{\star(t_{i+1})}
\right).
\label{eq:app_delta_task_loss}
\end{equation}

Let $\mathcal{L}_{\mathrm{base}}$ denote the corresponding task loss obtained from the frozen original state. We use the one-sided reference constraint
\begin{equation}
\mathcal{L}_{\mathrm{better}}
=
\max
\left(
0,
\mathcal{L}_{\Delta}
-
\mathcal{L}_{\mathrm{base}}
\right),
\label{eq:app_better_loss}
\end{equation}
which penalizes an aggregated corrective representation whose target loss exceeds that of the frozen-host reference.

The task-utility family is
\begin{equation}
\mathcal{L}_{\mathrm{task}}^{\mathrm{enc}}
=
\omega_{\mathrm{state}}
\mathcal{L}_{\mathrm{state}}
+
\omega_{\Delta}
\mathcal{L}_{\Delta}
+
\omega_{\mathrm{better}}
\mathcal{L}_{\mathrm{better}},
\label{eq:app_task_family}
\end{equation}
with
\begin{equation}
\omega_{\mathrm{state}}=1.0,
\qquad
\omega_{\Delta}=1.0,
\qquad
\omega_{\mathrm{better}}=0.5.
\label{eq:app_task_family_weights}
\end{equation}

\textbf{Timestep-Provenance Family.}
The timestep-provenance family supervises relative temporal origin from the retained timestep-level corrective coordinates. For a valid timestep $t_j$, its normalized position within a prefix of length $i$ is assigned to one of $B_{\mathrm{pos}}=8$ relative-position bins. Let $b_j^{(t_i)}$ denote the bin and $h_{\mathrm{pos}}$ the corresponding classifier. Then
\begin{equation}
\mathcal{L}_{\mathrm{position}}
=
-\frac{1}{i}
\sum_{j=1}^{i}
\log
p
\left(
b_j^{(t_i)}
\mid
\hat{\mathbf c}^{(t_j)}
\right).
\label{eq:app_impl_position_loss}
\end{equation}
The timestep-provenance family is
\begin{equation}
\mathcal{L}_{\mathrm{pos}}^{\mathrm{enc}}
=
\omega_{\mathrm{position}}
\mathcal{L}_{\mathrm{position}},
\qquad
\omega_{\mathrm{position}}=0.5.
\label{eq:app_position_family}
\end{equation}
This objective preserves relative timestep provenance. It does not assign semantic long-, short-, or episodic labels to individual corrective coordinates.

\textbf{Source-Coherence Family.}
The source-coherence family preserves correspondence between corrective representations and the interaction evidence from which they are constructed. Encoder-host interaction units contain both an interaction identity and an action label, so the family contains action-level, interaction-level, and aggregate-level constraints.

First, the action associated with timestep $t_j$ is predicted from its retained corrective coordinate:
\begin{equation}
\mathcal{L}_{\mathrm{action}}
=
-\frac{1}{i}
\sum_{j=1}^{i}
\log
p
\left(
a^{(t_j)}
\mid
\hat{\mathbf c}^{(t_j)}
\right).
\label{eq:app_action_coherence}
\end{equation}

Second, timestep-level interaction correspondence is retained through a sampled contrastive objective. Let $E(v)$ denote a content representation in the coherence space and let $\mathcal{N}_j$ contain the sampled interaction negatives. Define $s_j(v)=\operatorname{sim}(P_{\mathrm{coh}}\mathbf w^{(t_j)},E(v))/\tau_{\mathrm{coh}}$, the temperature-scaled similarity between timestep $t_j$'s write candidate and content $v$. Then
\begin{equation}
\mathcal{L}_{\mathrm{interaction}}=-\frac{1}{i}\sum_{j=1}^{i}\log\frac{\exp s_j(v^{(t_j)})}{\exp s_j(v^{(t_j)})+\sum_{v^-\in\mathcal N_j}\exp s_j(v^-)}.
\label{eq:app_interaction_coherence}
\end{equation}

Third, the aggregated write-space correction is constrained by the target-side contrastive objective. Define $s_\Delta(r)=\operatorname{sim}(P_{\mathrm{coh}}\bar{\mathbf w}^{(t_i)},E(r))/\tau_{\mathrm{coh}}$ for the aggregate and candidate target $r$. With sampled target negatives $\mathcal N_\Delta$, the loss is
\begin{equation}
\mathcal{L}_{\Delta\mathrm{con}}=-\log\frac{\exp s_\Delta(r_u^{\star(t_{i+1})})}{\exp s_\Delta(r_u^{\star(t_{i+1})})+\sum_{r^-\in\mathcal N_\Delta}\exp s_\Delta(r^-)}.
\label{eq:app_delta_contrastive}
\end{equation}

The source-coherence family is
\begin{equation}
\mathcal{L}_{\mathrm{coh}}^{\mathrm{enc}}
=
\omega_{\mathrm{action}}
\mathcal{L}_{\mathrm{action}}
+
\omega_{\mathrm{interaction}}
\mathcal{L}_{\mathrm{interaction}}
+
\omega_{\Delta\mathrm{con}}
\mathcal{L}_{\Delta\mathrm{con}},
\label{eq:app_coherence_family}
\end{equation}
with
\begin{equation}
\omega_{\mathrm{action}}=0.5,
\qquad
\omega_{\mathrm{interaction}}=0.5,
\qquad
\omega_{\Delta\mathrm{con}}=0.2.
\label{eq:app_coherence_family_weights}
\end{equation}

\textbf{Complete Stage-2 Objective.}
The following expansion substitutes every primitive term into the three-family objective. The better-than-base term is a training penalty for a correction that predicts the target less well than the base state. It does not guarantee non-degradation on every held-out instance. Combining Equations~\ref{eq:app_task_family}, \ref{eq:app_position_family}, and \ref{eq:app_coherence_family}, the complete encoder-host Stage-2 objective is
\begin{equation}
\begin{split}
& \mathcal{L}_{\mathrm{repair}}^{\mathrm{enc}} ={} \lambda_{\mathrm{task}} \underbrace{ \left( \omega_{\mathrm{state}}\mathcal{L}_{\mathrm{state}} + \omega_{\Delta}\mathcal{L}_{\Delta} + \omega_{\mathrm{better}}\mathcal{L}_{\mathrm{better}} \right) }_{\mathcal{L}_{\mathrm{task}}^{\mathrm{enc}}} \\
& + \lambda_{\mathrm{pos}} \underbrace{ \left( \omega_{\mathrm{position}} \mathcal{L}_{\mathrm{position}} \right) }_{\mathcal{L}_{\mathrm{pos}}^{\mathrm{enc}}} \\
& + \lambda_{\mathrm{coh}} \underbrace{ \left( \omega_{\mathrm{action}}\mathcal{L}_{\mathrm{action}} + \omega_{\mathrm{interaction}}\mathcal{L}_{\mathrm{interaction}} + \omega_{\Delta\mathrm{con}}\mathcal{L}_{\Delta\mathrm{con}} \right) }_{\mathcal{L}_{\mathrm{coh}}^{\mathrm{enc}}}.
\end{split}
\label{eq:app_full_encoder_objective}
\end{equation}

The outer coefficients $\lambda_{\mathrm{task}}$, $\lambda_{\mathrm{pos}}$, and $\lambda_{\mathrm{coh}}$ therefore control the relative contribution of the three functional families, while the within-family coefficients $\omega_{\cdot}$ specify their primitive implementation-level decomposition.

\textbf{Training-Side Perturbations And Negative Sampling.}
These perturbations supervise source correspondence during training. They are distinct from the evaluation-only temporal controls and noisy-history stress test. Likewise, contrastive negatives used by the auxiliary losses are distinct from the $499$ candidate items used to evaluate recommendation. Stage-2 encoder training forms perturbed views only for auxiliary supervision, while task-utility losses use clean training prefixes. Item identities are replaced with probability $p_{\mathrm{item}}=0.15$, action labels with probability $p_{\mathrm{act}}=0.10$, and adjacent interactions are swapped with probability $p_{\mathrm{swap}}=0.05$. These perturbations do not alter $\mathcal{L}_{\mathrm{state}}$, $\mathcal{L}_{\Delta}$, or $\mathcal{L}_{\mathrm{better}}$ and are not evaluation controls. They are applied only within auxiliary branches. Validation and test histories remain unchanged.

The timestep-interaction and aggregate-level contrastive objectives use 64 sampled negatives per positive during Stage 2. These negatives provide training-time supervision and are distinct from the 499 negatives used in the final sampled-candidate ranking protocol.

\textbf{Recommendation Instantiation.}
For recommendation, $\mathcal{L}_{\mathrm{state}}$ and $\mathcal{L}_{\Delta}$ use categorical cross-entropy over the supervised target. interaction-level coherence uses the corresponding host item-representation space, while action coherence uses the normalized interaction label associated with each timestep.

\textbf{Generation Instantiation.}
For generation, the task-family terms use the corresponding sequence-generation objective under teacher forcing. The source document, query, or dialogue context enters through the original task-side interface, while the repaired preference state supplies the user-side conditioning signal. Evaluation uses autoregressive decoding. The provenance and coherence families retain the same functional roles, with interaction/content supervision instantiated in the representation space available to the corresponding generation host.

\subsubsection{Stage 3: Downstream Head Alignment}
\label{app:stage3-alignment}

Alignment changes the consumer of the repaired state as well as the repair module. Improvements in this stage therefore quantify their joint adaptation. The host encoder remains fixed, which preserves its history representations across the Stage-2 and Stage-3 conditions.

Stage 3 starts from the validation-selected Stage-2 repair checkpoint and keeps the task-trained host encoder frozen. Unlike Stage 2, both \repair{} and the downstream task interface are trainable during this stage. For recommendation, \repair{} and the recommendation head are jointly updated for 2 epochs. For generation, \repair{} and the decoder-side interface are jointly aligned for 2 epochs using only the original sequence-generation loss. The task-trained history encoder remains frozen.

\textbf{Alignment Objective.}
The Stage-3 recommendation objective is
\begin{equation}
\mathcal{L}_{\mathrm{align}}
=
\mathcal{L}_{\mathrm{task}}
+
\lambda_{\Delta\mathrm{L2}}
\mathcal{L}_{\Delta\mathrm{L2}}
+
\lambda_{\mathrm{smooth}}
\mathcal{L}_{\mathrm{smooth}},
\label{eq:app_alignment_objective}
\end{equation}
where
\begin{equation}
\mathcal{L}_{\Delta\mathrm{L2}}
=
\frac{1}{i}
\sum_{j=1}^{i}
\left\|
\delta_b^{(t_j)}
\right\|_2^2,
\label{eq:app_delta_l2}
\end{equation}
and
\begin{equation}
\mathcal{L}_{\mathrm{smooth}}
=
\frac{1}{i-1}
\sum_{j=2}^{i}
\left\|
\delta_b^{(t_j)}
-
\delta_b^{(t_{j-1})}
\right\|_2^2.
\label{eq:app_delta_smooth}
\end{equation}
We use
\[
\lambda_{\Delta\mathrm{L2}}=10^{-4},
\qquad
\lambda_{\mathrm{smooth}}=0.05.
\]
These regularizers are applied only during downstream alignment and are not additional assumptions about the theoretical repair space. The resulting validation-selected checkpoint defines the \textbf{\repair{} + Head Alignment} condition.

\subsubsection{Optimization And Checkpoint Selection}
\label{app:optimization-checkpoints}

\textbf{Optimization.}
Stage-2 \repair{} parameters are optimized with AdamW~\citep{loshchilov2019adamw} using learning rate $2\times10^{-4},$ weight decay $10^{-4},$ and maximum gradient norm $1.0.$

During Stage 3, \repair{} uses learning rate $5\times10^{-5},$ while the downstream task interface uses $2\times10^{-4}.$ Weight decay remains $10^{-4}$, and the joint gradient norm is clipped at $1.0$. Mixed-precision execution is enabled when supported by the CUDA environment. Optimizer state from Stage-1 host training is not reused for Stage 2. Head-only Continuation is optimized for five epochs using the corresponding repair-compatible optimization settings, with only the original downstream task interface trainable.

\textbf{Checkpoint Selection.}
A checkpoint is retained after task-trained host formation, frozen-head repair, and downstream alignment. Validation performance is used to select the checkpoint within each trainable stage. The Stage-2 checkpoint is retained independently when Stage 3 begins.

The evaluated conditions therefore correspond to
\[
\begin{array}{ll}
\text{Original Host}
&
\text{validation-selected Stage-1 task-trained checkpoint},
\\
\text{Head-only Continuation}
&
\text{best validation over five additional head-only epochs},
\\
\repair{}\text{ (Frozen Head)}
&
\text{validation-selected Stage-2 repair checkpoint},
\\
\repair{}\text{ + Head Alignment}
&
\text{validation-selected Stage-3 joint-alignment checkpoint}.
\end{array}
\]

\subsubsection{Randomness, Pairing, And Uncertainty}
\label{app:statistical-protocol}

\textbf{Random Seeds And Paired Evaluation.}
All stochastic training components are seeded before optimization. OpenCLIP and SigLIP2 on Amazon and EBNR on PENS have three independent training seeds. We report their arithmetic mean, standard deviation, and held-out-user bootstrap interval in Table~\ref{tab:rq1_rebuttal_stats}. These are the settings with explicit repeated-run uncertainty evidence. This scope is not extended to every host in the main comparison. Within each run, paired conditions use the same held-out users, history prefixes, supervised targets, and recommendation candidate sets. Extended settings without repeated-run uncertainty analysis are reported as point estimates under the same held-out evaluation protocol.

\textbf{User-Clustered Bootstrap Intervals.}
Uncertainty intervals are computed at the held-out-user level so that multiple prefixes belonging to the same user are not treated as independent observations. Metric contributions are first grouped by held-out user. We then draw 10{,}000 bootstrap samples by resampling users with replacement and retaining all observations associated with each sampled user. The 2.5th and 97.5th percentiles of the resulting bootstrap distribution define the corresponding 95\% interval.

\subsubsection{Execution Environment And Task-Specific Preprocessing}
\label{app:execution-preprocessing}

\textbf{Compute Resources.}
Experiments were executed on paid Lightning.ai GPU resources using NVIDIA A100, NVIDIA H100, and NVIDIA RTX PRO 5000-class GPUs according to task scale and availability. Latency and memory comparisons are made only within the same dataset and host setting under the same execution environment. Absolute measurements obtained on different GPU types are not used for cross-architecture conclusions.

\textbf{PENS Text Preprocessing.}
For the lightweight word-level PENS generation implementation, titles are truncated to 30 tokens and news bodies to 120 tokens. Concatenated source inputs are truncated to 160 tokens, and target headlines or summaries are truncated to 60 tokens. The vocabulary is constructed from training-side text using minimum frequency 2 and maximum size 100{,}000.

\begin{table}[!t]
\centering
\small
\caption{Architecture and fixed hyperparameters for the encoder-host repair interface. The dimensions describe the implemented common interface, not every host's native hidden width. The default $K=128$ is varied only in the rank sweep. Appendix~\ref{app:repair-objective-implementation} defines the loss families and within-family coefficients.}
\label{tab:repair_hyperparameters}
\setlength{\tabcolsep}{4.0pt}
\renewcommand{\arraystretch}{1.05}

\scalebox{0.84}{
\begin{tabular}{lll}
\toprule
\textbf{Component} & 
\textbf{Symbol / Configuration} & 
\textbf{Dimension / Value} \\
\midrule

\multicolumn{3}{l}{\textit{Core repair-interface dimensions}} \\

Repair-interface state dimension
& $d_z$
& $768$ \\

Repair-interface cached-representation dimension
& $d_e$
& $768$ \\

Shared repair space
& $m$
& $192$ \\

Repair rank
& $K$
& $128$ \\

Maximum history length
& $L$
& $50$ \\

Position bins
& $B_{\mathrm{pos}}$
& $8$ \\

\midrule
\multicolumn{3}{l}{\textit{State-relative corrective representation}} \\

State projection
& $P_z$
& $\mathbb{R}^{192\times768}$ \\

Cached-representation projection
& $P_e$
& $\mathbb{R}^{192\times768}$ \\

Corrective read basis
& $V_{\mathrm{rep}}$
& $\mathbb{R}^{192\times128}$ \\

Corrective write basis
& $U_{\mathrm{rep}}$
& $\mathbb{R}^{192\times128}$ \\

Transmission scales
& $\Lambda_{\mathrm{rep}}$
& $128\times128$ diagonal \\

Basis normalization
& $U_{\mathrm{rep}},V_{\mathrm{rep}}$
& column-normalized \\

Scale constraint
& $\lambda_k$
& $\operatorname{softplus}(\lambda_{k,\mathrm{raw}})$ \\

\midrule
\multicolumn{3}{l}{\textit{Temporal resolution}} \\

Long-support parameter
& $\beta$
& $\mathbb{R}_{+}^{128}$ \\

Episodic-support parameter
& $\xi$
& $\mathbb{R}_{+}^{128}$ \\

Positive transform
& $\beta,\xi$
& $\operatorname{softplus}(\cdot)+10^{-4}$ \\

Long-support mixing map
& $W_L$
& $\mathbb{R}^{128\times128}$ \\

Short-support mixing map
& $W_S$
& $\mathbb{R}^{128\times128}$ \\

Episodic-support mixing map
& $W_E$
& $\mathbb{R}^{128\times128}$ \\

\midrule
\multicolumn{3}{l}{\textit{Selective write-back}} \\

Selector statistics
& $s_k^{(t_i)}$
& $\mathbb{R}^{3}$ \\

Selector network
& $\mathrm{MLP}_{\mathrm{sel}}$
& $3\rightarrow32\rightarrow1$ \\

Retained fraction
& $f_k^{(t_i)}$
& $(0,1)$ \\

Selector mask
& $\pi^{(t_i)}$
& $i\times128$ \\

Straight-through temperature
& $\tau_{\mathrm{sel}}$
& $0.05$ \\

\midrule
\multicolumn{3}{l}{\textit{Aggregation and state correction}} \\

State-conditioned gate
& $\mathrm{MLP}_{\gamma}$
& $768\rightarrow128$ \\

Aggregation scorer
& $\mathrm{MLP}_{\alpha}$
& $256\rightarrow64\rightarrow1$ \\

Correction projection
& $P_{\Delta}$
& $\mathbb{R}^{768\times192}$ \\

State correction
& $\Delta_b^{(t_i)}$
& $\mathbb{R}^{768}$ \\

\midrule
\multicolumn{3}{l}{\textit{Residual-gated fusion}} \\

Fusion input
& $[z_b^{(t_i)};\Delta_b^{(t_i)}]$
& $\mathbb{R}^{1536}$ \\

Fusion gate
& $\mathrm{MLP}_{F}$
& $1536\rightarrow768$ \\

Probe network
& $\mathrm{MLP}_{\mathrm{probe}}$
& $1536\rightarrow512\rightarrow768$ \\

Fusion gate output
& $g_F^{(t_i)}$
& $(0,1)^{768}$ \\

Repaired state
& $\tilde z_b^{(t_i)}$
& $\mathbb{R}^{768}$ \\

\midrule
\multicolumn{3}{l}{\textit{Supervision heads and objective weights}} \\

Relative-position classifier
& $h_{\mathrm{pos}}$
& $128\rightarrow8$ \\

Coherence projection
& $P_{\mathrm{coh}}$
& $192\rightarrow128$ \\

Contrastive temperature
& $\tau_{\mathrm{coh}}$
& $0.07$ \\

Task-utility family weight
& $\lambda_{\mathrm{task}}$
& $0.5$ \\

Timestep-provenance family weight
& $\lambda_{\mathrm{pos}}$
& $0.2$ \\

Source-coherence family weight
& $\lambda_{\mathrm{coh}}$
& $0.3$ \\

Within-family task coefficients
& $(\omega_{\mathrm{state}},
   \omega_{\Delta},
   \omega_{\mathrm{better}})$
& $(1.0,1.0,0.5)$ \\

Within-family provenance coefficient
& $\omega_{\mathrm{position}}$
& $0.5$ \\

Within-family coherence coefficients
& $(\omega_{\mathrm{action}},
   \omega_{\mathrm{interaction}},
   \omega_{\Delta\mathrm{con}})$
& $(0.5,0.5,0.2)$ \\

\bottomrule
\end{tabular}
}
\end{table}


\section{Evaluation Metrics And Reporting}
\label{app:metrics}

Recommendation scores evaluate where a single target is ranked. Generation-quality scores compare an output with a reference, while personalization-sensitive scores additionally compare outputs intended for different users of the same source. These measurements answer different questions and are reported separately. The definitions below use native metric values before any display scaling.

\subsection{Reporting Conventions And Relative Gain}
\label{app:metric-reporting}

Recommendation metrics are computed on their native $[0,1]$ scale and multiplied by $100$ for presentation in the result tables. The PENS generation-quality table displays $100$ times each native score, while the full PerSEval table displays $1000$ times each native score. These display factors apply only to base and repaired values. Columns named Gain report relative percentages.

For cross-interface comparisons, normalized relative gain for metric $M$ is
\begin{equation}
\Delta_{\mathrm{rel}}(M)
=
100
\cdot
\frac{
M_{\mathrm{repair}}
-
M_{\mathrm{base}}
}{
M_{\mathrm{base}}
}.
\label{eq:relative_gain}
\end{equation}
The ratio is computed independently for each metric with a nonzero base score. An absolute difference on the displayed $0$ to $100$ scale is a score-point gain, whereas Equation~\ref{eq:relative_gain} gives a relative percentage. For example, moving from $20$ to $25$ is a gain of $5$ points and $25\%$.

\subsection{Recommendation Metrics}
\label{app:metrics-predictive}

Higher values are better for all recommendation metrics below. The cutoff $K$ in a metric such as Hit@$K$ denotes the number of ranked candidates inspected and is unrelated to the repair dimension $K$. With one relevant target, the ideal discounted cumulative gain is one for every positive cutoff.

For test instance $n$, let $r_n^\star$ denote the rank of the single ground-truth target among the 500 evaluation candidates, and let $N$ denote the number of test instances.

\textbf{Mean Reciprocal Rank.}
\begin{equation}
\mathrm{MRR}
=
\frac{1}{N}
\sum_{n=1}^{N}
\frac{1}{r_n^\star}.
\end{equation}

\textbf{Normalized Discounted Cumulative Gain.}
For binary relevance $\mathrm{rel}_{n,i}\in\{0,1\}$,
\begin{equation}
\mathrm{DCG@}K_n
=
\sum_{i=1}^{K}
\frac{
2^{\mathrm{rel}_{n,i}}-1
}{
\log_2(i+1)
},
\end{equation}
and
\begin{equation}
\mathrm{nDCG@}K
=
\frac{1}{N}
\sum_{n=1}^{N}
\frac{
\mathrm{DCG@}K_n
}{
\mathrm{IDCG@}K_n
}.
\end{equation}
We report nDCG@5 and nDCG@10 where specified by the corresponding result table.

\textbf{Hit Rate.}
\begin{equation}
\mathrm{Hit@}K
=
\frac{1}{N}
\sum_{n=1}^{N}
\mathbf{1}
\left\{
r_n^\star\leq K
\right\}.
\end{equation}
With one relevant target per test instance, Hit@K is equivalent to Recall@K.

\subsection{Generation Quality Metrics}
\label{app:metrics-generation}

Let $\hat y$ denote a generated output and $y^\star$ its reference, with $|x|$ denoting token count. We use ROUGE~\citep{rouge}, METEOR~\citep{meteor}, BLEU~\citep{bleu}, and BERTScore~\citep{bertscore}, together with the weighted and paraphrase-aware variants below. Higher reported scores indicate greater agreement under the corresponding metric.

\textbf{ROUGE-N.}
Let $\mathcal G_n(x)$ denote the set of distinct $n$-grams in $x$. We compute
\begin{equation}
\mathrm{ROUGE}\text{-}N(\hat y,y^\star)
=
\frac{
\sum_{g\in\mathcal G_n(y^\star)}
\min
\left(
\operatorname{cnt}_{\hat y}(g),
\operatorname{cnt}_{y^\star}(g)
\right)
}{
\sum_{g\in\mathcal G_n(y^\star)}
\operatorname{cnt}_{y^\star}(g)
}.
\end{equation}
The count function records each $n$-gram's multiplicity, so each distinct type appears once in the sums. We report the ROUGE variants named in the corresponding result table.

\textbf{ROUGE-L.}
The longest common subsequence (LCS) counts matching tokens that retain their order without requiring adjacency. Its precision and recall are combined below using the metric's $\beta$ parameter. Define
\begin{equation}
P_{\mathrm{LCS}}
=
\frac{
\operatorname{LCS}(\hat y,y^\star)
}{
|\hat y|
},
\qquad
R_{\mathrm{LCS}}
=
\frac{
\operatorname{LCS}(\hat y,y^\star)
}{
|y^\star|
}.
\end{equation}
Then
\begin{equation}
\mathrm{ROUGE}\text{-}L
=
\frac{
(1+\beta^2)
P_{\mathrm{LCS}}
R_{\mathrm{LCS}}
}{
R_{\mathrm{LCS}}
+
\beta^2P_{\mathrm{LCS}}
}.
\end{equation}

\textbf{ROUGE-SU4.}
LCS and skip-bigram evaluation are discussed by \citet{lin-och-2004-rouge-w}. Let $\mathcal G_{\mathrm{SU4}}(x)$ denote the set of distinct unigrams and skip-bigrams with maximum skip distance four. Then
\begin{equation}
\mathrm{ROUGE}\text{-}\mathrm{SU4}(\hat y,y^\star)
=
\frac{
\sum_{g\in\mathcal G_{\mathrm{SU4}}(y^\star)}
\min
\left(
\operatorname{cnt}_{\hat y}(g),
\operatorname{cnt}_{y^\star}(g)
\right)
}{
\sum_{g\in\mathcal G_{\mathrm{SU4}}(y^\star)}
\operatorname{cnt}_{y^\star}(g)
}.
\end{equation}

\textbf{Weighted ROUGE.}
The W prefix denotes a weighted variant. Weighted $n$-gram overlap and weighted longest-common-subsequence overlap are distinct constructions. The latter follows the weighted sequence-matching formulation of \citet{rouge}. For weights $w(g)\geq0$ supplied by the weighted-ROUGE implementation,
\begin{equation}
\mathrm{W\text{-}ROUGE}\text{-}N(\hat y,y^\star)
=
\frac{
\sum_{g\in\mathcal G_n(y^\star)}
w(g)
\min
\left(
\operatorname{cnt}_{\hat y}(g),
\operatorname{cnt}_{y^\star}(g)
\right)
}{
\sum_{g\in\mathcal G_n(y^\star)}
w(g)
\operatorname{cnt}_{y^\star}(g)
}.
\end{equation}

For weighted longest-common-subsequence scoring, $f$ assigns a weight to the length of a contiguous match, and $f^{-1}$ maps the weighted score back to a length scale. A superlinear $f$ rewards uninterrupted matches more strongly than fragmented matches. As in ROUGE-L, $\beta$ controls the precision-recall tradeoff. Then
\begin{equation}
P_{\mathrm{WLCS}}
=
\frac{
f^{-1}
\left(
\operatorname{WLCS}(\hat y,y^\star)
\right)
}{
|\hat y|
},
\qquad
R_{\mathrm{WLCS}}
=
\frac{
f^{-1}
\left(
\operatorname{WLCS}(\hat y,y^\star)
\right)
}{
|y^\star|
},
\end{equation}
and
\begin{equation}
\mathrm{ROUGE}\text{-}W
=
\frac{
(1+\beta^2)
P_{\mathrm{WLCS}}
R_{\mathrm{WLCS}}
}{
R_{\mathrm{WLCS}}
+
\beta^2P_{\mathrm{WLCS}}
}.
\end{equation}

\textbf{Paraphrase-Aware ROUGE.}
The P prefix permits paraphrase matches rather than requiring identical surface wording. The implementation-specific matching rule is defined below. Let $\operatorname{sim}_{P}(g,h)\in[0,1]$ denote the paraphrase-matching score used by the evaluation implementation. Define
\begin{equation}
\operatorname{pmatch}_{\hat y,y^\star}(h)
=
\max_{g\in\mathcal G(\hat y)}
\operatorname{sim}_{P}(g,h).
\end{equation}
Then
\begin{equation}
\mathrm{P\text{-}ROUGE}(\hat y,y^\star)
=
\frac{
\sum_{h\in\mathcal G(y^\star)}
\operatorname{pmatch}_{\hat y,y^\star}(h)
}{
|\mathcal G(y^\star)|
}.
\end{equation}
P-RG-L denotes the longest-common-subsequence variant reported in the result tables.

\textbf{METEOR.}
METEOR combines unigram precision and recall with a penalty for fragmented matches. The parameters $\alpha$, $\gamma$, and $\theta$ below belong to the metric and are independent of the repair gates and loss weights. With $m$ matched unigrams,
\begin{equation}
P
=
\frac{m}{|\hat y|},
\qquad
R
=
\frac{m}{|y^\star|},
\end{equation}
\begin{equation}
F_\alpha
=
\frac{
PR
}{
\alpha P+(1-\alpha)R
},
\end{equation}
and
\begin{equation}
\operatorname{Pen}
=
\gamma
\left(
\frac{c}{m}
\right)^\theta,
\end{equation}
where $c$ is the number of matched chunks. The score is
\begin{equation}
\mathrm{METEOR}
=
(1-\operatorname{Pen})F_\alpha.
\end{equation}

\textbf{BLEU-4.}
Here, $p_n$ is clipped $n$-gram precision and $\operatorname{BP}$ is the brevity penalty. BLEU-4 combines orders one through four with equal weights.
\begin{equation}
\mathrm{BLEU}\text{-}4
=
\operatorname{BP}
\exp
\left(
\sum_{n=1}^{4}
\frac{1}{4}
\log p_n
\right),
\end{equation}
where
\begin{equation}
\operatorname{BP}
=
\begin{cases}
1,
&
|\hat y|>|y^\star|,\\
\exp
\left(
1-\frac{|y^\star|}{|\hat y|}
\right),
&
|\hat y|\leq|y^\star|.
\end{cases}
\end{equation}

\textbf{BERTScore.}
BERTScore matches tokens using contextual embeddings. The function $\mathbf h$ supplies a token's embedding, and the maximum similarity identifies its best counterpart in the other text. For contextual token similarity
\begin{equation}
s_{ab}
=
\cos
\left(
\mathbf h(\hat y_a),
\mathbf h(y_b^\star)
\right),
\end{equation}
we compute
\begin{equation}
P_{\mathrm{BS}}
=
\frac{1}{|\hat y|}
\sum_{a=1}^{|\hat y|}
\max_b s_{ab},
\end{equation}
\begin{equation}
R_{\mathrm{BS}}
=
\frac{1}{|y^\star|}
\sum_{b=1}^{|y^\star|}
\max_a s_{ab},
\end{equation}
and
\begin{equation}
F_{\mathrm{BS}}
=
\frac{
2P_{\mathrm{BS}}R_{\mathrm{BS}}
}{
P_{\mathrm{BS}}+R_{\mathrm{BS}}
}.
\end{equation}

\subsection{Personalization-Sensitive Generation Metrics}
\label{app:metrics-pse}

\textbf{PerSEval.}
PerSEval, the Personalized Summarizer Evaluator~\citep{perseval-tmlr-2024}, measures responsiveness to differences between users and applies an accuracy penalty. Consider two readers of the same article who expect different headlines. Producing distinct headlines is useful only if each also matches its reader's expected output. PerSEval combines these requirements rather than treating output diversity alone as personalization. It is computed after generation and is not part of the generation procedure or model-selection objective unless explicitly stated.

Let $d_i$ denote a source document or task context, $\mathcal U_{d_i}$ the set of users associated with it, $u_{ij}$ the expected output for user $j$, and $\hat s_{ij}$ the generated output. Let $\sigma(\cdot,\cdot)$ denote the plug-in text distance.

For users $j,k\in\mathcal U_{d_i}$,
\begin{equation}
w(u_{ij}\mid u_{ik})
=
\frac{
\sigma(u_{ij},u_{ik})
}{
\sigma(u_{ij},d_i)+\epsilon
},
\end{equation}
and
\begin{equation}
w(\hat s_{ij}\mid\hat s_{ik})
=
\frac{
\sigma(\hat s_{ij},\hat s_{ik})
}{
\sigma(\hat s_{ij},d_i)+\epsilon
}.
\end{equation}

Define
\begin{equation}
X_{ijk}
=
\frac{
\exp
\left(
w(u_{ij}\mid u_{ik})
\right)
}{
\sum_{\ell=1}^{|\mathcal U_{d_i}|}
\exp
\left(
w(u_{ij}\mid u_{i\ell})
\right)
}
\sigma(u_{ij},u_{ik}),
\end{equation}
and
\begin{equation}
Y_{ijk}
=
\frac{
\exp
\left(
w(\hat s_{ij}\mid\hat s_{ik})
\right)
}{
\sum_{\ell=1}^{|\mathcal U_{d_i}|}
\exp
\left(
w(\hat s_{ij}\mid\hat s_{i\ell})
\right)
}
\sigma(\hat s_{ij},\hat s_{ik}).
\end{equation}

The weights normalize each pairwise output distance by its distance from the shared source. The quantities $X_{ijk}$ and $Y_{ijk}$ then describe weighted separation among expected and generated outputs, respectively. Their agreement defines the degree-of-responsiveness term, DEGRESS,
\begin{equation}
\operatorname{DEGRESS}_{\sigma}
(\hat s_{ij}\mid(d_i,u_{ij}))
=
\frac{1}{|\mathcal U_{d_i}|}
\sum_{k=1}^{|\mathcal U_{d_i}|}
\frac{
\min(X_{ijk},Y_{ijk})+\epsilon
}{
\max(X_{ijk},Y_{ijk})+\epsilon
}.
\end{equation}

PerSEval combines this term with its accuracy penalty:
\begin{equation}
\operatorname{PSE}_{\sigma}
(\hat s_{ij}\mid(d_i,u_{ij}))
=
\operatorname{DEGRESS}_{\sigma}
(\hat s_{ij}\mid(d_i,u_{ij}))
\operatorname{EDP}_{\sigma}
(\hat s_{ij}\mid(d_i,u_{ij})).
\end{equation}

The effective DEGRESS penalty (EDP) decreases as either the best output for a document becomes inaccurate or a particular user's output becomes much less accurate than that best output. Define the native text distance $s_{ij}=\sigma(\hat s_{ij},u_{ij})$, its document-level minimum $\sigma_i^\star=\min_j s_{ij}$, and its mean $\bar\sigma_i=|\mathcal U_{d_i}|^{-1}\sum_j s_{ij}$. The accuracy-drop penalty (ADP), accuracy-inconsistency penalty (ACP), combined penalty (DGP), and effective factor are
\begin{equation}
\begin{split}
& \operatorname{ADP}_{i}=\left[1+10^{\gamma}\exp\left(-10\frac{\sigma_i^\star}{1-\sigma_i^\star+\epsilon}\right)\right]^{-1}, \\
& \operatorname{ACP}_{ij}=\left[1+10^{\gamma}\exp\left(-10\frac{s_{ij}-\sigma_i^\star}{\bar\sigma_i-\sigma_i^\star+\epsilon}\right)\right]^{-1}, \\
& \operatorname{DGP}_{ij}=\operatorname{ADP}_{i}+\operatorname{ACP}_{ij}, \\
& \operatorname{EDP}_{ij}=1-\left[1+10^{\alpha}\exp\left(-10^{\beta}\operatorname{DGP}_{ij}\right)\right]^{-1}.
\end{split}
\label{eq:app_pse_penalties}
\end{equation}
Here, $\epsilon>0$ stabilizes denominators, and $\alpha\geq3$, $\beta\geq1$, and $\gamma\geq4$ are the PerSEval penalty parameters. These metric parameters are separate from identically named repair quantities. We use the same evaluation configuration for all compared systems.

The system-level score is
\begin{equation}
\operatorname{PSE}_{\sigma}(M)
=
\frac{1}{|\mathcal D|}
\sum_{i=1}^{|\mathcal D|}
\frac{1}{|\mathcal U_{d_i}|}
\sum_{j=1}^{|\mathcal U_{d_i}|}
\operatorname{PSE}_{\sigma}
(\hat s_{ij}\mid(d_i,u_{ij})).
\end{equation}

We evaluate the same generated outputs with the following plug-in distances:
\begin{equation}
\sigma_{\mathrm{SU4}}(G,R)
=
1-\mathrm{ROUGE}\text{-}\mathrm{SU4}(G,R),
\end{equation}
\begin{equation}
\sigma_{\mathrm{METEOR}}(G,R)
=
1-\mathrm{METEOR}(G,R),
\end{equation}
\begin{equation}
\sigma_{\mathrm{P\text{-}RG\text{-}L}}(G,R)
=
1-\mathrm{P\text{-}ROUGE}\text{-}L(G,R),
\end{equation}
together with Jensen-Shannon divergence (JSD) and weighted-JSD variants over lexical distributions. W-RG-L uses weighted ROUGE-L, and P-RG-L uses paraphrase-aware ROUGE-L. All similarity-to-distance conversions operate on native scores, before display scaling. All PerSEval variants are computed after generation from the same output set.

\subsection{Deployment Metrics}
\label{app:metrics-deployment}

The memory column in Table~\ref{tab:rq4_accuracy_scalability_overhead_est} is a parameter-storage proxy. It compares added repair and context-tuning parameters with host parameters and does not measure peak device memory, activations, or KV-cache residency. The compute budgets in Table~\ref{tab:rq4_compute_footprint} are arithmetic requirements, while the latency column reports the host-specific serving comparison.

Deployment measurements compare \repair{} with its corresponding frozen host within the same host-specific execution environment.

For the parameter-derived memory quantities used here, memory overhead is
\begin{equation}
\mathrm{MemOH}
=
100
\frac{
\mathrm{Mem}_{\mathrm{rep}}
-
\mathrm{Mem}_{\mathrm{base}}
}{
\mathrm{Mem}_{\mathrm{base}}
}.
\end{equation}

Latency overhead is
\begin{equation}
\mathrm{LatOH}
=
100
\frac{
\mathrm{Lat}_{\mathrm{rep}}
-
\mathrm{Lat}_{\mathrm{base}}
}{
\mathrm{Lat}_{\mathrm{base}}
}.
\end{equation}

If throughput is measured directly,
\begin{equation}
\mathrm{ThrChange}
=
100
\frac{
\mathrm{Thr}_{\mathrm{rep}}
-
\mathrm{Thr}_{\mathrm{base}}
}{
\mathrm{Thr}_{\mathrm{base}}
}.
\end{equation}
When throughput is derived from reciprocal latency,
\begin{equation}
\mathrm{ThrChange}
=
100
\left(
\frac{
1
}{
1+\mathrm{LatOH}/100
}
-
1
\right).
\end{equation}

For predictive deployment analysis, let $S$ denote MRR. We report
\begin{equation}
\mathrm{MRR\text{-}GLR}
=
\frac{
S_{\mathrm{rep}}/S_{\mathrm{base}}
}{
1+\mathrm{LatOH}/100
},
\end{equation}
and
\begin{equation}
\mathrm{MRR\text{-}GMR}
=
\frac{
S_{\mathrm{rep}}/S_{\mathrm{base}}
}{
1+\mathrm{MemOH}/100
}.
\end{equation}
These quantities compare multiplicative score improvement with multiplicative resource cost. A value above one means that the relative score increase exceeds the relative cost increase. A value below one means the converse. They are descriptive ratios rather than training objectives or guarantees about absolute serving quality.

\section{Task-Specific Frozen Hosts}
\label{app:task-baselines}
We choose hosts to vary how history becomes a task-facing preference state and how the downstream interface consumes that state. Recurrent, attentive, and state-space encoders compress history differently. Product encoders add multimodal evidence, and generation hosts expose different ways of using a preference state. This coverage tests whether the repair interface remains useful across these choices rather than comparing all hosts as one architecture leaderboard. Each host is task-trained using an official implementation or paper-based reimplementation adapted to the shared protocol, then frozen before repair attachment. Stage 3 subsequently aligns the task head as specified in Appendix~\ref{app:stage3-alignment}.

\subsection{Movie Recommendation Hosts}
\label{app:movie-baselines}
GRU4Rec~\citep{gru4rec-2016} supplies a recurrent history state. SASRec~\citep{sasrec-2018} uses self-attention, while BERT4Rec~\citep{bert4rec-cikm-2019} supplies bidirectional sequence representations. TiSASRec~\citep{tisasrec-wsdm-2020} additionally models time intervals between interactions. These hosts vary the sequence information available before the final state is formed.

Mamba4Rec~\citep{mamba4rec-arxiv-2024} and SIGMA~\citep{sigma-sequential-2025} supply selective state-space sequence models. Here, SIGMA denotes \emph{Selective Gated Mamba for Sequential Recommendation}. SSDRec~\citep{SSDRec-ICDE-2024} adds sequence denoising within the host, FEARec~\citep{fearec-sigir-2023} models frequency information, and HSTU~\citep{hstu-arxiv-2024} supplies hierarchical sequential transduction. Their inclusion tests correction after different sequence-processing mechanisms, including hosts that already address noise or temporal structure internally. We distinguish directly trained repair modules from the two cross-host transfer rows in Table~\ref{tab:rq1_predictive_movielens_versioned}.

\subsection{News Recommendation Hosts}
\label{app:news-baselines}
NAML~\citep{naml} combines multiple news-content views with attentive user modeling. EBNR~\citep{okura2017embedding} compresses news history recurrently, and NRMS~\citep{nrms} uses multi-head self-attention. LSTUR~\citep{lstur-acl-2019} combines long-term and short-term user representations, while MINER~\citep{miner-2022-aclfindings} models multiple user interests. These choices test whether a repair module adds useful evidence after different forms of news-history aggregation.

TrRMIo is the Transformer recommendation model with the interested-news-only filter introduced for GTP~\citep{gtp-tacl-2023}. It filters the history before forming a user representation. SCAPE-GRU denotes the recurrent preference encoder from SCAPE~\citep{scape-www-2025}, evaluated here through a recommendation head. The name distinguishes this encoder host from its generation interface. In both cases, \repair{} is attached after the task-trained user representation has been formed.

DeepSeek-R1-Distill-Qwen-32B (abbreviated DeepSeek-32B) and Qwen2.5-32B~\citep{deepseek-arxiv-2025,qwen2.5-2025} provide the prompt-conditioned LLM comparison. Their histories are serialized as text, and repair uses the separate interaction representation and soft-prefix construction in Appendix~\ref{app:llm-hosts}. These rows therefore have a different evidence interface from native encoder caches. The extended news results are reported in Tables~\ref{tab:rq1_predictive_pens_structured_compact} and~\ref{tab:rq1_predictive_mind_compact}.

\subsection{Product Recommendation Hosts}
\label{app:product-baselines}
SigLIP2~\citep{tschannen2025siglip2} and OpenCLIP~\citep{cherti2023openclip} supply frozen image and text representations. DINOv2-E5 combines DINOv2 visual features~\citep{oquab2024dinov2} with E5 text features~\citep{wang2022e5}. The resulting product vectors enter the host's history interface before the task-facing state is formed. This extends the comparison to multimodal evidence while retaining the same location for state correction. These settings are reported in Table~\ref{tab:rq1_predictive_amazon_compact}.

\subsection{Personalized Generation Hosts}
\label{app:headline-baselines}
The PENS framework~\citep{pens-acl-2021} couples a user representation to headline generation. We evaluate its NAML-T1/T2, EBNR-T1/T2, and NRMS-T1/T2 settings. The prefix names the preference encoder. T1 denotes Injection Type 1, which uses the user embedding to initialize the decoder hidden state. T2 denotes Injection Type 2, which uses that embedding to personalize attention over words in the source news body~\citep{pens-acl-2021,perseval-tmlr-2024}. Holding the encoder family identifiable helps separate the source of the preference state from the decoder that uses it.

GTP~\citep{gtp-tacl-2023} separates general headline generation from personalization. SCAPE~\citep{scape-www-2025} conditions generation on style and content, while FPG~\citep{FPG-icdm-2023} emphasizes fact preservation. Walk2Pers~\citep{walk2pers-iclr-2026} and IMPerSumm~\citep{impersumm-www-2026} provide evolving, interaction-conditioned preference representations to generation. Together, these hosts test whether improvements at the preference interface are reflected in generation quality and personalization-sensitive scores.

DeepSeek-32B and Qwen2.5-32B provide the frozen LLM generation settings. Their repaired state enters through the continuous prefix in Appendix~\ref{app:llm-hosts}, and the decoder backbone remains frozen. Quality and personalization scores from the same outputs are reported in Tables~\ref{tab:rq1_pens_accuracy_main_all} and~\ref{tab:rq1_pens_pse_main_selected}.

\subsection{Conversational Hosts}
\label{app:conversation-baselines}
For TG-ReDial recommendation, SASRec and GRU4Rec process the induced interaction sequence, while TextCNN~\citep{kim-2014-convolutional} provides a convolutional contrast. These interfaces differ in how they represent sequential evidence inferred from dialogue. For response generation, a ReDial-style system~\citep{li-2018-redial} and a Transformer sequence-to-sequence model~\citep{transformer} provide two downstream consumers. We report their mixed responses in Table~\ref{tab:rq1_tgredial_joint} and discuss the limits of this inferred interaction construction in Appendix~\ref{app:tgredial-broader-applicability}.

\section{Corrective Comparison Details}
\label{app:corrective-baseline-details}

The corrective comparison asks whether the evaluated post-head adaptations recover utility comparable to pre-head state repair. The implementations differ in mechanism as well as intervention location. We therefore define what each adaptation reads and changes before interpreting the comparison in Table~\ref{tab:rq2_corrector_comparison}.

\subsection{Shared Comparison Protocol}
\label{app:corrective-shared-protocol}

The recommendation comparison in RQ3 evaluates pre-head \repair{} with two adapted post-head correction settings, \texttt{SERAC-Corr} and \texttt{Rec-Denoiser-Corr}. The suffix ``Corr'' denotes an adaptation of the corresponding correction principle to the frozen-host personalization setting. These implementations are therefore treated as corrective comparison methods rather than exact reproductions of the original systems.

All three correction settings start from the same task-trained host checkpoint and use the same dataset split, held-out evaluation instances, supervised targets, candidate construction, and two-epoch downstream alignment protocol. Host-derived information used by each correction method is obtained from the same frozen host computation. The comparison therefore matches the shared experimental protocol and the host-side information made available to each adaptation. Parameter count and FLOPs are not matched unless explicitly stated.

\subsection{Rec-Denoiser-Corr}
\label{app:rec-denoiser-corr}

\texttt{Rec-Denoiser-Corr} adapts sequential denoising~\citep{recdenoiser-recsys-2022} as an output-side intervention. The host encoder remains frozen, and the task-facing preference state is not modified before the original task head receives it.

For timestep $t_j$, let $\alpha_j$ denote the output-side influence assigned to that timestep by the adaptation and let $m_j\in[0,1]$ denote its learned denoising gate. The corrected influence is
\begin{equation}
\tilde\alpha_j
=
m_j\alpha_j.
\end{equation}
The resulting corrective context is
\begin{equation}
c_{\mathrm{denoise}}
=
\sum_{j=1}^{i}
\tilde\alpha_j
e_b^{(t_j)}.
\end{equation}

For hosts exposing a native trajectory-to-context attention representation, $\alpha_j$ is obtained from that host-side quantity. For recurrent and state-space hosts without an equivalent attention tensor, the adaptation uses prediction-side timestep sensitivity and normalizes the resulting values over valid prefix positions. This surrogate is used only as the adaptation's influence score and is not interpreted as native host attention.

The resulting $c_{\mathrm{denoise}}$ is supplied to the downstream recommendation interface as an output-side corrective context. The pre-head task-facing preference state remains unchanged.

\texttt{Rec-Denoiser-Corr} is evaluated only in the RQ3 recommendation correction-setting comparison.

\subsection{SERAC-Corr}
\label{app:serac-corr}

\texttt{SERAC-Corr} adapts the memory, scope, and counterfactual components of SERAC~\citep{serac-icml-2022} to a post-head correction. The memory stores training-side examples, the scope network decides their relevance to the current query, and the counterfactual branch constructs the alternative prediction-side representation. The frozen prediction-side representation is used to query an auxiliary correction memory constructed from training-side examples.

Let $q_{\mathrm{ctx}}$ denote the current query representation and $\mathcal M = \{ (q_m,c_m,y_m) \}$ the correction memory, where $q_m$ is a stored training-side query representation, $c_m$ is its corrective information, and $y_m$ is the associated task target.

At inference,
\begin{equation}
\mathcal M(q_{\mathrm{ctx}})
=
\operatorname{Top}_{K_{\mathrm{mem}},\,m\in\mathcal M}
\operatorname{sim}
\left(
q_{\mathrm{ctx}},
q_m
\right)
\end{equation}
retrieves $K_{\mathrm{mem}}$ configured nearest entries. This retrieval count is separate from the repair dimension $K$. A scope network computes
\begin{equation}
p_{\mathrm{scope}}
=
\sigma
\left(
f_{\mathrm{scope}}
\left(
q_{\mathrm{ctx}},
\mathcal M(q_{\mathrm{ctx}})
\right)
\right),
\end{equation}
and the counterfactual correction is
\begin{equation}
\tilde z_{\mathrm{SERAC}}
=
f_{\mathrm{cf}}
\left(
q_{\mathrm{ctx}},
\mathcal M(q_{\mathrm{ctx}})
\right).
\end{equation}

The memory, scope decision, and counterfactual correction operate after the frozen host has produced its prediction-side representation. The host encoder is not updated.

\texttt{SERAC-Corr} is evaluated only in the RQ3 recommendation correction-setting comparison.

\subsection{Scope Of The Intervention-Locus Comparison}
\label{app:corrective-baseline-comparison}

The three correction settings differ in both their correction mechanism and the representation at which correction is applied. \texttt{Rec-Denoiser-Corr} modifies output-side trajectory influence, \texttt{SERAC-Corr} applies a retrieved memory-based correction to the prediction-side representation, and \repair{} modifies the task-facing preference state before the original task head consumes it.

For \repair{}, given
\[
z_b^{(t_i)}
\quad\text{and}\quad
\left\{
e_b^{(t_j)}
\right\}_{j=1}^{i},
\]
the state-relative corrective coordinates are
\begin{equation}
d^{(t_j)}
=
\hat V_{\mathrm{rep}}^\top
\left(
P_e e_b^{(t_j)}
-
P_z z_b^{(t_i)}
\right).
\end{equation}
Temporal resolution, selective write-back, aggregation, and fusion then produce
\begin{equation}
\tilde z_b^{(t_i)}
=
z_b^{(t_i)}
+
g_F^{(t_i)}
\odot
\mathrm{MLP}_{\mathrm{probe}}
\left(
[
z_b^{(t_i)};
\Delta_b^{(t_i)}
]
\right),
\end{equation}
after which the original ranking head receives $\tilde z_b^{(t_i)}$.

The RQ3 experiment therefore compares the implemented pre-head \repair{} condition with the two post-head corrective adaptations above under the shared protocol. Because correction architecture and intervention locus change together, the experiment does not isolate intervention locus as a single causal factor. Its conclusion is restricted to the evaluated correction settings and does not imply dominance over every possible pre-head or post-head correction architecture.

\begin{table*}[!h]
\centering
\small
\caption{Repeated-run reliability for three RQ1 settings. Entries are MRR mean$\pm$standard deviation across three training seeds, followed by 95\% user-clustered bootstrap intervals from 10,000 resamples. MRR uses the $0$ to $100$ scale.}
\label{tab:rq1_rebuttal_stats}
\scalebox{0.75}{
\begin{tabular}{lccc}
\toprule
\textbf{\boldmath Setting} & \textbf{\boldmath $H_{\rm best}$} & \textbf{\boldmath $R$} & \textbf{\boldmath $RH_2$} \\
\midrule
\texttt{OpenCLIP / Amazon} & $2.21\pm0.18$ [2.04, 2.36] & $4.43\pm0.25$ [4.17, 4.64] & \textbf{$19.49\pm1.43$ [18.97, 20.06]} \\
\texttt{SigLIP2 / Amazon}  & $2.67\pm0.14$ [2.33, 2.91] & $4.16\pm0.17$ [3.97, 4.33] & \textbf{$17.72\pm2.34$ [17.37, 18.04]} \\
\texttt{EBNR / PENS}       & $2.93\pm0.31$ [2.73, 3.05] & $4.07\pm0.19$ [3.98, 4.24] & \textbf{$11.97\pm0.62$ [11.76, 12.25]} \\
\bottomrule
\end{tabular}
}
\end{table*}

\section{Additional Results And Extended Analysis}
\label{app:additional-results}
We expand the three research questions through the full host results and additional diagnostics. We distinguish changes in the state supplied to a fixed task head from changes after the head has also been aligned. We then examine generation, temporal resolution, supervision, selection, noise, serving cost, and dialogue-derived histories. We place each additional-results table with its interpretation and distinguish observed behavior from the theoretical conditions relevant to that behavior. The experiments do not estimate the oracle state or certify the assumptions in Appendix~\ref{app:theory}.

\subsection{Full Predictive Recommendation Results Across Frozen Encoders}
\label{app:full-predictive-results}
The extended encoder-host results use Stage-3 \repair{} + Head Alignment. The encoder remains frozen, while the repair module and task head are jointly adapted after Stage 2. These results measure how broadly corrected states remain useful after consumer alignment. The Stage-2 evidence with both the encoder and original head frozen is reported separately in Table~\ref{tab:rq1_representative_hosts}.

Within each dataset, the base and repaired conditions use matched held-out users, prefixes, targets, and candidates. MRR summarizes the reciprocal rank of the relevant item, while nDCG and Hit@10 describe its placement near the top of the candidate list. We therefore read agreement across metrics within a host before comparing the magnitude of gains between hosts. The sampled candidate task and the trainable task interface must also match for a comparison to support a claim about state correction.

\textbf{Repeated-Run Evidence.}
We report uncertainty for OpenCLIP and SigLIP2 on Amazon and EBNR on PENS in Table~\ref{tab:rq1_rebuttal_stats}. Frozen-head repair has a higher mean MRR than validation-selected head-only continuation in all three settings, and two-epoch head alignment produces a further increase. For example, OpenCLIP moves from $2.21\pm0.18$ for head-only continuation to $4.43\pm0.25$ for frozen-head repair and $19.49\pm1.43$ after alignment. The comparison separates utility accessible to the original head from utility after adapting the consumer. The three-seed standard deviations describe training variation, whereas the user-clustered intervals describe uncertainty from held-out-user sampling. Neither quantity alone is a paired significance test, and this repeated-run coverage does not extend to every row below.
\begin{table}[!t]
\centering
\caption{MovieLens recommendation under Stage-3 repair and head alignment, with the host encoder frozen. Base is the original task-trained host. Direct rows train repair for that host. Transfer rows use the module trained on Mamba4Rec. Scores are on a $0$ to $100$ scale and higher is better. Each $\Delta$ is an absolute score-point change. All paired conditions rank the same target against $499$ negatives. Appendix~\ref{app:full-predictive-results} distinguishes these aligned results from frozen-head attribution.}
\label{tab:rq1_predictive_movielens_versioned}
\setlength{\tabcolsep}{4pt}
\renewcommand{\arraystretch}{1.12}
\scalebox{0.8}{
\begin{tabular}{llcccccc}
\toprule
\textbf{\boldmath Setting} & \textbf{\boldmath Host} & \multicolumn{3}{c}{\textbf{\boldmath MRR}} & \multicolumn{3}{c}{\textbf{\boldmath nDCG@5}} \\
\cmidrule(lr){3-5}\cmidrule(lr){6-8}
 &  & \textbf{\boldmath Base} & \textbf{\boldmath +\repair{}} & \textbf{\boldmath $\Delta$} & \textbf{\boldmath Base} & \textbf{\boldmath +\repair{}} & \textbf{\boldmath $\Delta$} \\
\midrule
Direct & Mamba4Rec & 29.29 & 34.26 & +4.97 & 30.07 & 35.56 & +5.49 \\
 & TiSASRec & 35.51 & 36.78 & +1.27 & 36.87 & 38.41 & +1.54 \\
 & HSTU & 29.00 & 30.62 & +1.62 & 29.69 & 31.55 & +1.86 \\
 & FEARec & 28.03 & 30.62 & +2.59 & 28.20 & 31.55 & +3.35 \\
 & SIGMA & 33.27 & 34.81 & +1.54 & 34.43 & 36.26 & +1.83 \\
 & SASRec & 26.43 & 28.49 & +2.06 & 29.27 & 32.63 & +3.36 \\
 & SSDRec & 30.15 & 33.43 & +3.28 & 33.25 & 35.22 & +1.97 \\
\midrule
Transfer & GRU4Rec & 17.84 & 23.15 & +5.31 & 17.10 & 22.78 & +5.68 \\
 & BERT4Rec & 20.80 & 27.19 & +6.39 & 20.44 & 27.70 & +7.26 \\
\bottomrule
\end{tabular}}
\par\medskip
\scalebox{0.8}{
\begin{tabular}{llcccccc}
\toprule
\textbf{\boldmath Setting} & \textbf{\boldmath Host} & \multicolumn{3}{c}{\textbf{\boldmath nDCG@10}} & \multicolumn{3}{c}{\textbf{\boldmath Hit@10}} \\
\cmidrule(lr){3-5}\cmidrule(lr){6-8}
 &  & \textbf{\boldmath Base} & \textbf{\boldmath +\repair{}} & \textbf{\boldmath $\Delta$} & \textbf{\boldmath Base} & \textbf{\boldmath +\repair{}} & \textbf{\boldmath $\Delta$} \\
\midrule
Direct & Mamba4Rec & 38.50 & 40.17 & +1.67 & 57.90 & 63.56 & +5.66 \\
 & TiSASRec & 41.14 & 42.91 & +1.77 & 63.66 & 66.85 & +3.19 \\
 & HSTU & 34.23 & 36.11 & +1.88 & 56.40 & 58.71 & +2.31 \\
 & FEARec & 32.13 & 36.11 & +3.98 & 50.53 & 58.71 & +8.18 \\
 & SIGMA & 38.96 & 40.72 & +1.76 & 62.03 & 64.04 & +2.01 \\
 & SASRec & 33.82 & 35.18 & +1.36 & 52.17 & 54.34 & +2.17 \\
 & SSDRec & 40.17 & 42.31 & +2.14 & 59.32 & 63.11 & +3.79 \\
\midrule
Transfer & GRU4Rec & 21.07 & 27.26 & +6.19 & 38.04 & 47.37 & +9.33 \\
 & BERT4Rec & 24.61 & 32.74 & +8.13 & 43.07 & 56.16 & +13.09 \\
\bottomrule
\end{tabular}}
\end{table}

\textbf{MovieLens And Transfer.}
On MovieLens, all seven direct host settings improve in Table~\ref{tab:rq1_predictive_movielens_versioned}. Mamba4Rec gains $4.97$ MRR points and $5.66$ Hit@10 points. SSDRec and FEARec gain $3.28$ and $2.59$ MRR points, respectively, while TiSASRec and SIGMA gain $1.27$ and $1.54$. The variation in gain indicates that the benefit depends on the host even under a shared repair design.

The two transfer rows use a repair module trained on Mamba4Rec. GRU4Rec and BERT4Rec gain $5.31$ and $6.39$ MRR points, with nDCG@10 gains of $6.19$ and $8.13$. These are empirical transfer results for the reported settings. Theorem~\ref{thm:repair-recoverability} assumes an appropriate correction representation for a fixed host and does not guarantee compatibility between hosts.

The simultaneous improvement in MRR and top-$10$ retrieval matters because it extends the finding beyond one summary metric. However, a larger gain does not by itself identify which preference information the base state omits. The architectures differ in their original training and compression rules. These rows establish useful aligned corrections for several host families, while the transfer rows provide narrower evidence about reuse between the specified sequential hosts.
\begin{table}[!t]
\centering
\caption{PENS recommendation with paired target and $499$-negative candidate sets. Base denotes the task-trained host. Repaired encoder rows use Stage-3 repair and head alignment with frozen encoders. LLM rows use the frozen-decoder soft-prefix interface in Appendix~\ref{app:llm-hosts}. Scores are on a $0$ to $100$ scale, higher is better, and $\Delta$ denotes absolute score-point change.}
\label{tab:rq1_predictive_pens_structured_compact}
\setlength{\tabcolsep}{4pt}
\renewcommand{\arraystretch}{1.12}
\scalebox{0.75}{
\begin{tabular}{lcccccc}
\toprule
\textbf{\boldmath Host} & \multicolumn{3}{c}{\textbf{\boldmath MRR}} & \multicolumn{3}{c}{\textbf{\boldmath nDCG@5}} \\
\cmidrule(lr){2-4}\cmidrule(lr){5-7}
 & \textbf{\boldmath Base} & \textbf{\boldmath +\repair{}} & \textbf{\boldmath $\Delta$} & \textbf{\boldmath Base} & \textbf{\boldmath +\repair{}} & \textbf{\boldmath $\Delta$} \\
\midrule
\texttt{EBNR} & 2.65 & 11.97 & +9.32 & 1.82 & 12.68 & +10.86 \\
\texttt{NAML} & 1.29 & 12.38 & +11.09 & 0.43 & 13.16 & +12.73 \\
\texttt{NRMS} & 1.18 & 12.11 & +10.93 & 0.39 & 12.89 & +12.50 \\
\texttt{TrRMIo} & 8.02 & 12.86 & +4.84 & 7.85 & 13.97 & +6.12 \\
\texttt{SCAPE} & 1.57 & 8.34 & +6.77 & 0.76 & 7.91 & +7.15 \\
\texttt{LSTUR} & 10.44 & 12.85 & +2.41 & 10.87 & 12.49 & +1.62 \\
\texttt{MINER} & 5.17 & 8.64 & +3.47 & 4.71 & 8.83 & +4.12 \\
\texttt{Walk2Pers} & 6.23 & 13.54 & +7.31 & 8.46 & 15.28 & +6.82 \\
\texttt{IMPerSumm} & 1.70 & 13.64 & +11.94 & 0.79 & 15.19 & +14.40 \\
\texttt{DeepSeek-32B} & 1.31 & 1.83 & +0.52 & 0.28 & 0.98 & +0.70 \\
\texttt{Qwen2.5-32B} & 1.24 & 1.79 & +0.55 & 0.26 & 0.94 & +0.68 \\
\bottomrule
\end{tabular}}
\par\medskip
\scalebox{0.75}{
\begin{tabular}{lcccccc}
\toprule
\textbf{\boldmath Host} & \multicolumn{3}{c}{\textbf{\boldmath nDCG@10}} & \multicolumn{3}{c}{\textbf{\boldmath Hit@10}} \\
\cmidrule(lr){2-4}\cmidrule(lr){5-7}
 & \textbf{\boldmath Base} & \textbf{\boldmath +\repair{}} & \textbf{\boldmath $\Delta$} & \textbf{\boldmath Base} & \textbf{\boldmath +\repair{}} & \textbf{\boldmath $\Delta$} \\
\midrule
\texttt{EBNR} & 2.45 & 14.28 & +11.83 & 4.87 & 24.23 & +19.36 \\
\texttt{NAML} & 0.81 & 15.13 & +14.32 & 2.01 & 26.37 & +24.36 \\
\texttt{NRMS} & 0.74 & 14.92 & +14.18 & 1.92 & 25.98 & +24.06 \\
\texttt{TrRMIo} & 8.53 & 15.87 & +7.34 & 12.77 & 27.67 & +14.90 \\
\texttt{SCAPE} & 1.16 & 9.52 & +8.36 & 2.53 & 16.84 & +14.31 \\
\texttt{LSTUR} & 12.15 & 15.88 & +3.73 & 20.96 & 28.73 & +7.77 \\
\texttt{MINER} & 5.61 & 10.31 & +4.70 & 9.77 & 18.88 & +9.11 \\
\texttt{Walk2Pers} & 9.71 & 16.16 & +6.45 & 11.35 & 20.35 & +9.00 \\
\texttt{IMPerSumm} & 1.28 & 17.11 & +15.82 & 2.86 & 30.12 & +27.26 \\
\texttt{DeepSeek-32B} & 0.64 & 1.32 & +0.68 & 1.53 & 2.41 & +0.88 \\
\texttt{Qwen2.5-32B} & 0.59 & 1.27 & +0.68 & 1.47 & 2.35 & +0.88 \\
\bottomrule
\end{tabular}}
\end{table}

\textbf{PENS Recommendation.}
The PENS encoder results in Table~\ref{tab:rq1_predictive_pens_structured_compact} extend the aligned comparison to news. NAML improves from $1.29$ to $12.38$ MRR, NRMS from $1.18$ to $12.11$, and EBNR from $2.65$ to $11.97$. LSTUR has a higher base score and gains another $2.41$ MRR points. The prompt-conditioned LLM rows use the separate soft-prefix interface and should be read under that protocol.

The low base scores of several PENS encoders make their relative percentage gains much larger than their absolute MRR changes. This denominator effect is relevant when comparing ranking gains with the generation gains in Appendix~\ref{app:generative-results-expanded}. The repaired NAML and NRMS scores also improve on the top-$10$ metrics, which supports a broader change in ranked retrieval than MRR alone. We do not treat the encoder and LLM rows as a controlled comparison of model scale because their input and correction interfaces differ.
\begin{table}[!t]
\centering
\caption{MIND recommendation under Stage-3 repair and head alignment with frozen encoders. Base denotes the original task-trained host. Paired conditions rank the same target against $499$ negatives. Scores are on a $0$ to $100$ scale, higher is better, and $\Delta$ denotes absolute score-point change. Appendix~\ref{app:full-predictive-results} discusses the seven hosts.}
\label{tab:rq1_predictive_mind_compact}
\setlength{\tabcolsep}{4pt}
\renewcommand{\arraystretch}{1.12}
\scalebox{0.9}{
\begin{tabular}{lcccccc}
\toprule
\textbf{\boldmath Host} & \multicolumn{3}{c}{\textbf{\boldmath MRR}} & \multicolumn{3}{c}{\textbf{\boldmath nDCG@5}} \\
\cmidrule(lr){2-4}\cmidrule(lr){5-7}
 & \textbf{\boldmath Base} & \textbf{\boldmath +\repair{}} & \textbf{\boldmath $\Delta$} & \textbf{\boldmath Base} & \textbf{\boldmath +\repair{}} & \textbf{\boldmath $\Delta$} \\
\midrule
EBNR & 47.08 & 53.22 & +6.14 & 48.65 & 57.21 & +8.56 \\
NAML & 45.65 & 53.22 & +7.57 & 43.13 & 57.21 & +14.08 \\
NRMS & 44.91 & 52.74 & +7.83 & 42.38 & 56.48 & +14.10 \\
TrRMIo & 47.43 & 55.21 & +7.78 & 46.27 & 52.83 & +6.56 \\
SCAPE-GRU & 42.53 & 45.32 & +2.79 & 37.53 & 44.15 & +6.62 \\
LSTUR & 51.43 & 55.84 & +4.41 & 53.64 & 57.18 & +3.54 \\
MINER & 47.22 & 51.86 & +4.64 & 47.04 & 53.67 & +6.63 \\
\bottomrule
\end{tabular}}
\par\medskip
\scalebox{0.9}{
\begin{tabular}{lcccccc}
\toprule
\textbf{\boldmath Host} & \multicolumn{3}{c}{\textbf{\boldmath nDCG@10}} & \multicolumn{3}{c}{\textbf{\boldmath Hit@10}} \\
\cmidrule(lr){2-4}\cmidrule(lr){5-7}
 & \textbf{\boldmath Base} & \textbf{\boldmath +\repair{}} & \textbf{\boldmath $\Delta$} & \textbf{\boldmath Base} & \textbf{\boldmath +\repair{}} & \textbf{\boldmath $\Delta$} \\
\midrule
EBNR & 51.63 & 63.18 & +11.55 & 69.78 & 78.43 & +8.65 \\
NAML & 46.56 & 63.18 & +16.62 & 63.19 & 78.43 & +15.24 \\
NRMS & 45.73 & 62.44 & +16.71 & 62.31 & 77.66 & +15.35 \\
TrRMIo & 53.43 & 57.76 & +4.33 & 68.62 & 79.48 & +10.86 \\
SCAPE-GRU & 41.19 & 47.36 & +6.17 & 61.44 & 69.37 & +7.93 \\
LSTUR & 59.63 & 66.92 & +7.29 & 69.22 & 72.49 & +3.27 \\
MINER & 52.18 & 58.91 & +6.73 & 69.41 & 75.22 & +5.81 \\
\bottomrule
\end{tabular}}
\end{table}

\textbf{MIND Recommendation.}
All seven MIND hosts improve in Table~\ref{tab:rq1_predictive_mind_compact}. NRMS, TrRMIo, and NAML gain $7.83$, $7.78$, and $7.57$ MRR points. LSTUR has the highest base MRR and gains $4.41$ points. Thus, the aligned benefit also appears when base recommendation performance is higher.

The nDCG@10 gains for NAML and NRMS are $16.62$ and $16.71$ points, respectively. These changes support improved placement of the relevant news item within the top-$10$ list, in addition to the MRR gains. LSTUR's positive response is useful because its host representation already separates longer and shorter interests. An existing temporal decomposition therefore does not remove the observed benefit of repair after compression. The experiment does not identify that decomposition as the cause of its smaller gain.
\begin{table}[!t]
\centering
\caption{Amazon Reviews 2023 All\_Beauty recommendation with frozen multimodal encoders and Stage-3 repair and head alignment. Each instance ranks one target against the same $499$ negatives in paired conditions. Scores are on a $0$ to $100$ scale, higher is better, and $\Delta$ is an absolute score-point change.}
\label{tab:rq1_predictive_amazon_compact}
\setlength{\tabcolsep}{4pt}
\renewcommand{\arraystretch}{1.12}
\scalebox{0.65}{
\begin{tabular}{lcccccc}
\toprule
\textbf{\boldmath Host} & \multicolumn{3}{c}{\textbf{\boldmath MRR}} & \multicolumn{3}{c}{\textbf{\boldmath nDCG@5}} \\
\cmidrule(lr){2-4}\cmidrule(lr){5-7}
 & \textbf{\boldmath Base} & \textbf{\boldmath +\repair{}} & \textbf{\boldmath $\Delta$} & \textbf{\boldmath Base} & \textbf{\boldmath +\repair{}} & \textbf{\boldmath $\Delta$} \\
\midrule
SigLIP2 & 2.40 & 17.72 & +15.33 & 1.53 & 17.71 & +16.18 \\
OpenCLIP & 1.59 & 19.49 & +17.91 & 0.64 & 19.46 & +18.83 \\
DINOv2 E5 & 1.19 & 16.15 & +14.96 & 0.43 & 16.30 & +15.87 \\
\bottomrule
\end{tabular}}
\par\medskip
\scalebox{0.65}{
\begin{tabular}{lcccccc}
\toprule
\textbf{\boldmath Host} & \multicolumn{3}{c}{\textbf{\boldmath nDCG@10}} & \multicolumn{3}{c}{\textbf{\boldmath Hit@10}} \\
\cmidrule(lr){2-4}\cmidrule(lr){5-7}
 & \textbf{\boldmath Base} & \textbf{\boldmath +\repair{}} & \textbf{\boldmath $\Delta$} & \textbf{\boldmath Base} & \textbf{\boldmath +\repair{}} & \textbf{\boldmath $\Delta$} \\
\midrule
SigLIP2 & 2.18 & 20.35 & +18.17 & 4.52 & 32.97 & +28.44 \\
OpenCLIP & 1.05 & 22.08 & +21.03 & 2.37 & 34.61 & +32.24 \\
DINOv2 E5 & 0.72 & 18.19 & +17.47 & 1.60 & 29.15 & +27.55 \\
\bottomrule
\end{tabular}}
\end{table}

\textbf{Amazon Product Recommendation.}
For Amazon Reviews 2023, SigLIP2, OpenCLIP, and DINOv2-E5 gain $15.33$, $17.91$, and $14.96$ MRR points in Table~\ref{tab:rq1_predictive_amazon_compact}. These settings extend the comparison to histories containing multimodal product representations. Across the four datasets, the evidence supports downstream utility after alignment. It does not separately measure the write-space, coordinate, aggregation, or fusion errors in Theorem~\ref{thm:repair-recoverability}.

The corresponding Hit@10 gains are $28.44$, $32.24$, and $27.55$ points. These hosts obtain product representations from vision and text models, so their inclusion tests the use of repair beyond recommendation-specific history encoders. The large aligned gains are consistent with additional user-history evidence being useful to the task head. They do not establish a deficiency in product semantics or determine how much of the increase comes from repair versus the subsequent joint alignment. The separate frozen-head and head-only controls address that attribution question more directly.
\begin{table*}[!t]
\centering
\caption{Full PENS headline-generation quality results. Base and repaired values equal $100$ times the native metric score. Higher is better. Gain is the reported relative percentage change from Base. RG abbreviates ROUGE, W denotes weighted overlap, and P denotes paraphrase-aware overlap. Both panels use the same generated outputs. Encoder hosts use the aligned repair condition, while LLMs use the frozen soft-prefix interface. Definitions are in Appendix~\ref{app:metrics-generation}.}
\label{tab:rq1_pens_accuracy_main_all}
\setlength{\tabcolsep}{4pt}
\scalebox{0.82}{
\begin{tabular}{lccccccccc}
\toprule
\textbf{Host} & \multicolumn{3}{c}{\textbf{RG-1}} & \multicolumn{3}{c}{\textbf{RG-L}} & \multicolumn{3}{c}{\textbf{RG-SU4}} \\
 & \textbf{Base} & \textbf{+\repair{}} & \textbf{Gain (\%)} & \textbf{Base} & \textbf{+\repair{}} & \textbf{Gain (\%)} & \textbf{Base} & \textbf{+\repair{}} & \textbf{Gain (\%)} \\
\midrule
\texttt{DeepSeek-32B} & 20.62 & 26.84 & +30.16\% & 13.85 & 21.22 & +53.21\% & 4.42 & 11.99 & +171.27\% \\
\texttt{Qwen2.5-32B} & 18.51 & 23.47 & +26.80\% & 12.53 & 17.74 & +41.58\% & 3.86 & 8.46 & +119.17\% \\
\texttt{FPG} & 16.48 & 19.51 & +18.39\% & 13.08 & 14.12 & +7.95\% & 2.51 & 3.11 & +23.90\% \\
\texttt{SCAPE-GRU} & 19.02 & 22.47 & +18.14\% & 15.26 & 16.41 & +7.54\% & 2.88 & 3.75 & +30.21\% \\
\texttt{GTP} & 15.97 & 18.28 & +14.46\% & 12.75 & 13.67 & +7.22\% & 2.45 & 2.92 & +19.18\% \\
\texttt{EBNR-T1} & 15.32 & 19.26 & +25.72\% & 13.15 & 14.17 & +7.76\% & 2.27 & 3.07 & +35.24\% \\
\texttt{EBNR-T2} & 15.04 & 18.91 & +25.73\% & 12.97 & 13.94 & +7.48\% & 2.23 & 3.00 & +34.53\% \\
\texttt{NAML-T1} & 15.69 & 19.46 & +24.03\% & 13.29 & 14.38 & +8.20\% & 2.44 & 3.06 & +25.41\% \\
\texttt{NAML-T2} & 15.38 & 19.01 & +23.60\% & 13.07 & 14.11 & +7.96\% & 2.39 & 2.99 & +25.10\% \\
\texttt{NRMS-T1} & 15.11 & 18.71 & +23.83\% & 12.91 & 13.97 & +8.21\% & 2.31 & 2.93 & +26.84\% \\
\texttt{NRMS-T2} & 14.87 & 18.42 & +23.87\% & 12.74 & 13.77 & +8.08\% & 2.26 & 2.85 & +26.11\% \\
\texttt{Walk2Pers} & 24.11 & 25.36 & +5.18\% & 15.51 & 16.53 & +6.58\% & 5.90 & 7.93 & +34.41\% \\
\texttt{IMPerSumm} & 21.64 & 22.00 & +1.66\% & 14.67 & 14.27 & -2.69\% & 3.25 & 3.26 & +0.42\% \\
\bottomrule
\end{tabular}
}\par\medskip
\scalebox{0.82}{
\begin{tabular}{lccccccccc}
\toprule
\textbf{Host} & \multicolumn{3}{c}{\textbf{METEOR}} & \multicolumn{3}{c}{\textbf{W-RG-1}} & \multicolumn{3}{c}{\textbf{P-RG-L}} \\
 & \textbf{Base} & \textbf{+\repair{}} & \textbf{Gain (\%)} & \textbf{Base} & \textbf{+\repair{}} & \textbf{Gain (\%)} & \textbf{Base} & \textbf{+\repair{}} & \textbf{Gain (\%)} \\
\midrule
\texttt{DeepSeek-32B} & 12.53 & 18.23 & +45.49\% & 27.44 & 32.80 & +19.53\% & 14.54 & 22.27 & +53.16\% \\
\texttt{Qwen2.5-32B} & 11.67 & 17.23 & +47.64\% & 23.44 & 19.64 & -16.21\% & 11.15 & 18.35 & +64.57\% \\
\texttt{FPG} & 9.74 & 11.52 & +18.28\% & 21.03 & 24.61 & +17.02\% & 13.59 & 14.78 & +8.76\% \\
\texttt{SCAPE-GRU} & 9.26 & 12.53 & +35.31\% & 24.34 & 28.32 & +16.35\% & 15.78 & 17.00 & +7.73\% \\
\texttt{GTP} & 9.17 & 10.20 & +11.23\% & 20.56 & 23.83 & +15.90\% & 13.25 & 14.20 & +7.17\% \\
\texttt{EBNR-T1} & 9.16 & 11.68 & +27.51\% & 19.46 & 24.32 & +24.97\% & 13.61 & 14.70 & +8.01\% \\
\texttt{EBNR-T2} & 9.08 & 11.52 & +26.87\% & 19.12 & 23.91 & +25.05\% & 13.43 & 14.45 & +7.59\% \\
\texttt{NAML-T1} & 9.39 & 11.76 & +25.24\% & 19.83 & 24.72 & +24.66\% & 13.75 & 14.93 & +8.58\% \\
\texttt{NAML-T2} & 9.28 & 11.44 & +23.28\% & 19.49 & 24.15 & +23.91\% & 13.57 & 14.66 & +8.03\% \\
\texttt{NRMS-T1} & 9.03 & 11.21 & +24.14\% & 19.05 & 23.86 & +25.25\% & 13.38 & 14.48 & +8.22\% \\
\texttt{NRMS-T2} & 8.96 & 11.09 & +23.77\% & 18.82 & 23.51 & +24.92\% & 13.21 & 14.28 & +8.10\% \\
\texttt{Walk2Pers} & 13.68 & 15.51 & +13.38\% & 31.41 & 32.38 & +3.09\% & 16.88 & 17.21 & +1.95\% \\
\texttt{IMPerSumm} & 9.37 & 9.82 & +4.76\% & 28.72 & 28.47 & -0.85\% & 15.24 & 14.90 & -2.23\% \\
\bottomrule
\end{tabular}
}\par\medskip
\end{table*}

\subsection{Generative Accuracy And Personalization-Sensitive Metrics}\label{app:generative-results-expanded}
A decoder may use an altered preference state in ways that affect text quality and personalization differently. We therefore evaluate the same generated outputs with both kinds of metric. Standard quality measures compare generated text with its reference. PerSEval additionally evaluates whether differences between users' outputs agree with differences between their expected outputs, with an accuracy penalty. We first examine generation quality, then personalization, and finally the shared ranking-to-generation comparison. Relative gains are percentages, and their magnitude depends on the corresponding base score.

\textbf{Generation Quality.}
We report all quality metrics and their base values in Table~\ref{tab:rq1_pens_accuracy_main_all}.
DeepSeek-32B gains $30.16\%$ ROUGE-1, $53.21\%$ ROUGE-L, $171.27\%$ ROUGE-SU4, and $45.49\%$ METEOR. Qwen2.5-32B gains $26.8\%$, $41.58\%$, $119.17\%$, and $47.64\%$ on those metrics. DeepSeek's PSE-W-JSD and PSE-W-RG-L gains are smaller at $9.32\%$ and $5.05\%$. Qwen's weighted ROUGE-1 decreases despite gains in the other listed quality metrics. The response therefore depends on both decoder and metric.

The $171.27\%$ ROUGE-SU4 gain for DeepSeek is a change from $4.42$ to $11.99$ on the displayed scale. Reading that base value is necessary to interpret the large percentage. IMPerSumm provides the complementary case, with ROUGE-L decreasing from $14.67$ to $14.27$ despite a small METEOR gain. Repair therefore does not uniformly improve reference overlap, even among systems designed for personalized generation.
\begin{table*}[!t]
\centering
\caption{Full PENS personalization-sensitive results on the same outputs as Table~\ref{tab:rq1_pens_accuracy_main_all}. Base and repaired scores are displayed in units of $10^{-3}$, so a displayed value of $118$ denotes a native score of $0.118$. Higher is better, and Gain is the reported relative percentage change. PSE denotes PerSEval. JSD, W, P, and RG are defined in Appendix~\ref{app:metrics-pse}. Encoder and LLM repair interfaces follow the same distinction as in the generation-quality table.}
\label{tab:rq1_pens_pse_main_selected}
\setlength{\tabcolsep}{4pt}
\scalebox{0.92}{
\begin{tabular}{lcccccc}
\toprule
\textbf{Host} & \multicolumn{3}{c}{\textbf{PSE-W-JSD}} & \multicolumn{3}{c}{\textbf{PSE-W-RG-L}} \\
 & \textbf{Base} & \textbf{+\repair{}} & \textbf{Gain (\%)} & \textbf{Base} & \textbf{+\repair{}} & \textbf{Gain (\%)} \\
\midrule
\texttt{DeepSeek-32B} & 118 & 129 & +9.32\% & 148.5 & 156 & +5.05\% \\
\texttt{Qwen2.5-32B} & 92 & 121 & +31.52\% & 125.3 & 141.1 & +12.61\% \\
\texttt{SCAPE} & 103 & 112.5 & +9.22\% & 128 & 135.2 & +5.62\% \\
\texttt{FPG} & 74.8 & 83.9 & +12.17\% & 104.2 & 110.4 & +5.95\% \\
\texttt{GTP} & 89.5 & 98.7 & +10.28\% & 113.8 & 120.8 & +6.15\% \\
\texttt{EBNR-T1} & 75.9 & 99.5 & +31.09\% & 105.5 & 116.4 & +10.33\% \\
\texttt{EBNR-T2} & 73.4 & 95.6 & +30.25\% & 102.8 & 112.6 & +9.53\% \\
\texttt{NAML-T1} & 69.2 & 91.8 & +32.66\% & 98.6 & 108.4 & +9.94\% \\
\texttt{NAML-T2} & 67.8 & 89.7 & +32.30\% & 96.9 & 106.1 & +9.49\% \\
\texttt{NRMS-T1} & 64.3 & 86.9 & +35.15\% & 94.1 & 103.7 & +10.20\% \\
\texttt{NRMS-T2} & 62.7 & 84.8 & +35.25\% & 92.5 & 101.5 & +9.73\% \\
\texttt{Walk2Pers} & 125.3 & 141.9 & +13.25\% & 167.1 & 192.7 & +15.32\% \\
\texttt{IMPerSumm} & 126.2 & 152.4 & +20.76\% & 141.5 & 177.2 & +25.23\% \\
\bottomrule
\end{tabular}
}\par\medskip
\scalebox{0.92}{
\begin{tabular}{lcccccc}
\toprule
\textbf{Host} & \multicolumn{3}{c}{\textbf{PSE-P-RG-L}} & \multicolumn{3}{c}{\textbf{PSE-METEOR}} \\
 & \textbf{Base} & \textbf{+\repair{}} & \textbf{Gain (\%)} & \textbf{Base} & \textbf{+\repair{}} & \textbf{Gain (\%)} \\
\midrule
\texttt{DeepSeek-32B} & 14 & 16.6 & +18.57\% & 2.1 & 2.4 & +14.29\% \\
\texttt{Qwen2.5-32B} & 13.7 & 14.9 & +8.76\% & 1.9 & 2.3 & +21.05\% \\
\texttt{SCAPE} & 10.9 & 13.4 & +22.94\% & 1.7 & 2 & +17.65\% \\
\texttt{FPG} & 8.1 & 10.2 & +25.93\% & 1.3 & 1.6 & +23.08\% \\
\texttt{GTP} & 8.9 & 11.3 & +26.97\% & 1.5 & 1.8 & +20.00\% \\
\texttt{EBNR-T1} & 8.2 & 11.6 & +41.46\% & 1.3 & 1.4 & +7.69\% \\
\texttt{EBNR-T2} & 7.9 & 11 & +39.24\% & 1.3 & 1.4 & +7.69\% \\
\texttt{NAML-T1} & 7.2 & 10.3 & +43.06\% & 1.2 & 1.3 & +8.33\% \\
\texttt{NAML-T2} & 7 & 10 & +42.86\% & 1.2 & 1.2 & +0.00\% \\
\texttt{NRMS-T1} & 6.7 & 9.6 & +43.28\% & 1.1 & 1.2 & +9.09\% \\
\texttt{NRMS-T2} & 6.4 & 9.3 & +45.31\% & 1.1 & 1.1 & +0.00\% \\
\texttt{Walk2Pers} & 15.8 & 17.5 & +10.76\% & 11.5 & 13.7 & +19.13\% \\
\texttt{IMPerSumm} & 16.8 & 23.4 & +39.29\% & 14.4 & 14.9 & +3.47\% \\
\bottomrule
\end{tabular}
}\par\medskip
\end{table*}

\textbf{Personalization-Sensitive Response.}
The PerSEval results in Table~\ref{tab:rq1_pens_pse_main_selected} use the same outputs. Comparing the two tables within a host helps distinguish a change in reference overlap from a change in user-specific responsiveness.

Walk2Pers gains $5.18\%$ ROUGE-1 and $6.58\%$ ROUGE-L, alongside PSE-W-JSD, PSE-W-RG-L, PSE-P-RG-L, and PSE-METEOR gains of $13.25\%$, $15.32\%$, $10.76\%$, and $19.13\%$. IMPerSumm provides a different pattern. Some standard quality scores change little or decrease, while its first three PerSEval variants improve by $20.76\%$, $25.23\%$, and $39.29\%$. These cases explain why generic text overlap alone does not summarize the personalization response.

The EBNR, NAML, and NRMS generation variants improve on most PerSEval measures, with large relative gains in PSE-P-RG-L and smaller changes in PSE-METEOR. These results describe how the evaluated generation systems respond to repair. They do not directly establish how much preference information was recovered internally or isolate the decoder's use of a particular coordinate. Theorem~\ref{thm:repair-recoverability} concerns conditional state approximation and makes no metric-specific generation guarantee.

A zero reported relative change in a small PSE-METEOR entry should be interpreted at the table's numerical precision. It does not imply that every generated token is unchanged. Likewise, agreement across the weighted, paraphrase-aware, and METEOR-based variants strengthens the descriptive comparison without making the metrics interchangeable. The absolute scores remain necessary when judging the practical size of a gain.
\begin{table}[!t]
\centering
\caption{Full RQ3 comparison of task-specific downstream use on PENS. Normalized relative gains (\%) after preference-state repair across ranking, generation-quality, and personalization-sensitive metrics. The original results in absolute values are in Appendix Tables~\ref{tab:rq1_predictive_pens_structured_compact}, \ref{tab:rq1_pens_accuracy_main_all}, and~\ref{tab:rq1_pens_pse_main_selected}.}
\label{tab:rq3_task_interface_full}

\setlength{\tabcolsep}{1.7pt}
\renewcommand{\arraystretch}{0.96}

\scalebox{0.78}{%
\begin{tabular}{@{}llcccccc@{}}
\toprule
\multirow{2}{*}{\textbf{Host}} &
\multirow{2}{*}{\textbf{Interface}} &
\multicolumn{2}{c}{\textbf{Ranking}} &
\multicolumn{2}{c}{\textbf{Generation Quality}} &
\multicolumn{2}{c}{\textbf{Personalization}} \\
\cmidrule(lr){3-4}
\cmidrule(lr){5-6}
\cmidrule(lr){7-8}

& &
\textbf{MRR} &
\textbf{nDCG@10} &
\textbf{RG-L} &
\textbf{METEOR} &
\textbf{PSE-W-JSD} &
\textbf{PSE-W-RG-L} \\
\midrule

\multirow{2}{*}{\texttt{NAML}}
& T1
& +859.69
& +1767.90
& +8.20
& +25.24
& +32.66
& +9.94 \\

& T2
& +859.69
& +1767.90
& +7.96
& +23.28
& +32.30
& +9.49 \\

\midrule

\multirow{2}{*}{\texttt{EBNR}}
& T1
& +351.70
& +482.86
& +7.76
& +27.51
& +31.09
& +10.33 \\

& T2
& +351.70
& +482.86
& +7.48
& +26.87
& +30.25
& +9.53 \\

\midrule

\multirow{2}{*}{\texttt{NRMS}}
& T1
& +926.27
& +1916.22
& +8.21
& +24.14
& +35.15
& +10.20 \\

& T2
& +926.27
& +1916.22
& +8.08
& +23.77
& +35.25
& +9.73 \\

\midrule

\texttt{TrRMIo}
& \texttt{GTP}
& +60.35
& +86.05
& +7.22
& +11.23
& +10.28
& +6.15 \\

\texttt{SCAPE-Enc}
& \texttt{SCAPE}
& +431.21
& +720.69
& +7.54
& +35.31
& +9.22
& +5.62 \\

\texttt{Walk2Pers}
& \texttt{Walk2Pers}
& +117.34
& +66.43
& +6.58
& +13.38
& +13.25
& +15.32 \\

\texttt{IMPerSumm}
& \texttt{IMPerSumm}
& +702.35
& +1236.72
& -2.69
& +4.76
& +20.76
& +25.23 \\

\texttt{DeepSeek-32B}
& \texttt{DeepSeek-32B}
& +39.69
& +106.25
& +53.21
& +45.49
& +9.32
& +5.05 \\

\texttt{Qwen2.5-32B}
& \texttt{Qwen2.5-32B}
& +44.35
& +115.25
& +41.58
& +47.64
& +31.52
& +12.61 \\

\bottomrule
\end{tabular}%
}
\end{table}

\textbf{Ranking And Generation Interfaces.}
We place the complete RQ3 comparison in Table~\ref{tab:rq3_task_interface_full}. The T1 and T2 rows share the underlying ranking result and differ in how the preference representation enters generation. They are two generation interfaces, not independent repetitions of the ranking experiment. NAML, EBNR, and NRMS have large ranking percentage gains and much smaller ROUGE-L changes. This pattern is consistent with a task head and a text decoder responding differently to the repaired preference representation.

The comparison is also informative for preference-coupled generators. IMPerSumm's positive personalization gains coexist with a negative ROUGE-L change, while DeepSeek has a larger ROUGE-L gain than its weighted PerSEval gains. Therefore, improved ranking is insufficient evidence for a proportional improvement in personalized text. Because the host, decoder, and interface vary together across rows, these outcomes motivate the downstream-use boundary without measuring decoder sensitivity as an isolated causal quantity.
\begin{table*}[!t]
\centering
\caption{Temporal-pathway ablation on MovieLens and MIND. All values are MRR on a $0$ to $100$ scale, with higher values better. Base is the original host, NoTS removes temporal resolution, and $L$, $S$, and $E$ retain the indicated long-support, short-support, and episodic-support branches. The full configuration is $L+S+E$. Mamba4Rec and SIGMA are state-space hosts, TiSASRec and NRMS use self-attention, NAML uses attentive aggregation, and EBNR is recurrent. These aligned configurations are distinct from the controlled input-timing diagnostic in Table~\ref{tab:rq2_temporal_summary}.}
\label{tab:rq1_timescale_ablation}
\setlength{\tabcolsep}{4pt}
\scalebox{0.9}{
\begin{tabular}{llccccccccc}
\toprule
\textbf{\boldmath Dataset} & \textbf{\boldmath Host} & \textbf{\boldmath Base} & \textbf{\boldmath NoTS} & \textbf{\boldmath L} & \textbf{\boldmath S} & \textbf{\boldmath E} & \textbf{\boldmath L+S} & \textbf{\boldmath S+E} & \textbf{\boldmath L+E} & \textbf{\boldmath L+S+E} \\
\midrule
\multirow{3}{*}{MovieLens}
& \texttt{Mamba4Rec} & 29.29 & 28.43 & 29.92 & 29.63 & 29.45 & 32.17 & 31.35 & 31.04 & 34.26 \\
& \texttt{TiSASRec} & 35.51 & 35.52 & 35.54 & 35.78 & 35.83 & 35.91 & 35.66 & 35.41 & 36.78 \\
& \texttt{SIGMA} & 33.27 & 33.29 & 33.66 & 33.32 & 33.29 & 33.88 & 33.73 & 33.43 & 34.81 \\
\midrule
\multirow{3}{*}{MIND}
& \texttt{NAML} & 45.65 & 45.95 & 47.65 & 48.32 & 47.13 & 49.31 & 49.18 & 48.79 & 53.22 \\
& \texttt{EBNR} & 47.08 & 47.74 & 48.88 & 48.03 & 47.62 & 50.13 & 49.38 & 48.76 & 53.22 \\
& \texttt{NRMS} & 44.91 & 45.06 & 46.43 & 45.13 & 47.73 & 47.58 & 46.44 & 46.15 & 52.74 \\
\bottomrule
\end{tabular}
}
\end{table*}

\subsection{Interpretation Of Temporal Resolution Ablations}
\label{app:timescale-results-expanded}
The temporal ablation asks whether the correction benefits from weighting residual evidence according to its position in the history. The full $L+S+E$ configuration has the highest MRR for all six hosts in Table~\ref{tab:rq1_timescale_ablation}. Removing temporal resolution gives a lower score than the full configuration in every row. For Mamba4Rec, the NoTS variant obtains $28.43$ MRR, below the base score of $29.29$, while full repair reaches $34.26$.

Partial pathways remain informative. Mamba4Rec obtains $32.17$ with $L+S$ and $34.26$ with all three branches. NAML moves from $49.31$ to $53.22$ under the same comparison. The preferred single branch varies across hosts. This variation motivates a dictionary with multiple temporal responses rather than a universal choice of one branch.

Theorem~\ref{thm:kernel-trace-recoverability} states when coefficients of a fixed temporal dictionary can be identified, and Corollary~\ref{cor:full-field-kernel-recovery} extends that bound across corrective coordinates. The empirical advantage of multiple branches is compatible with useful temporal structure in the residual. It does not establish dictionary conditioning, measure coefficient-recovery error, or prove that each branch corresponds to a behavioral category.

The controlled L-Tr, S-Tr, and E-Tr experiment in Table~\ref{tab:rq2_temporal_summary} asks a different question. With target-related evidence deliberately placed in specified history regions, the combined pathway is strongest in eleven of twelve settings. The best single branch matches the injected pattern in six of twelve probes, and Mamba4Rec prefers the short-support branch for all three input patterns. Proposition~\ref{prop:temporal-credit} and Corollary~\ref{cor:host-conditioned-temporal-learning} explain why identical input timing need not produce identical temporal gradients after different hosts compress the history. These experiments do not verify the local bounded-gradient assumptions in Theorem~\ref{thm:kernel-gradient-stability}.

The six-host pathway ablation and the twelve controlled probes should consequently remain separate in interpretation. The former compares available temporal branches under the ordinary task evaluation. The latter controls the location of target-related evidence to test how a host changes the useful temporal response. Their respective six-of-six and eleven-of-twelve outcomes describe different comparisons and cannot be pooled into one success rate.
\begin{table*}[!t]
\centering
\caption{Progressive supervision ablation on MovieLens and MIND. Values are MRR on a $0$ to $100$ scale, with higher values better. Successive columns add task utility, timestep provenance, and source coherence. This fixed-order comparison measures the evaluated combinations and does not isolate each family under every possible combination. Appendix~\ref{app:loss-ablation-results} connects it to the conditional local theorem.}
\label{tab:rq1_loss_progressive_ablation}
\scalebox{0.92}{
\begin{tabular}{llcccc}
\toprule
\textbf{\boldmath Dataset} & \textbf{\boldmath Host} & \textbf{\boldmath Base} & \textbf{\boldmath $\mathcal L_{\mathrm{task}}$} & \textbf{\boldmath $\mathcal L_{\mathrm{task}}+\mathcal L_{\mathrm{pos}}$} & \textbf{\boldmath $\mathcal L_{\mathrm{task}}+\mathcal L_{\mathrm{pos}}+\mathcal L_{\mathrm{coh}}$} \\
\midrule
\multirow{2}{*}{MovieLens}
& \texttt{TiSASRec} & 35.51 & 35.71 & 36.14 & 36.78 \\
& \texttt{Mamba4Rec} & 29.29 & 31.43 & 32.16 & 34.26 \\
\midrule
\multirow{2}{*}{MIND}
& \texttt{LSTUR} & 51.43 & 52.17 & 54.65 & 55.84 \\
& \texttt{NRMS} & 44.91 & 46.39 & 51.29 & 52.74 \\
\bottomrule
\end{tabular}
}
\end{table*}

\subsection{Training Objective Ablation}
\label{app:loss-ablation-results}
The progressive ablation adds supervision families in a fixed order. Task utility is included first, timestep provenance second, and source coherence third. All four hosts achieve their highest reported MRR with the complete objective in Table~\ref{tab:rq1_loss_progressive_ablation}.

Task supervision alone improves Mamba4Rec from $29.29$ to $31.43$ MRR and NRMS from $44.91$ to $46.39$. Adding provenance raises LSTUR from $52.17$ to $54.65$ and NRMS from $46.39$ to $51.29$. Adding coherence further raises Mamba4Rec from $32.16$ to $34.26$ and NRMS from $51.29$ to $52.74$. Thus, predicting the task target does not exhaust the useful supervision in these settings.

Theorem~\ref{thm:loss-design-recoverability} links the weighted objective to coefficient error under a stable error decomposition and local quadratic control by the loss families. The ablation supports the practical value of the complete supervision scheme, without establishing those assumptions. Because the design is progressive, it also does not isolate each family's contribution under every combination. Corollary~\ref{cor:loss-ordering} motivates coverage of complementary error components and does not identify the weights $(0.5,0.2,0.3)$ as globally optimal.

The auxiliary terms also provide a useful interpretation of what the task loss leaves unconstrained. A correction can predict the right target while combining evidence from unrelated positions or losing its association with source interactions. Provenance and coherence supply additional supervision for those properties. The improvement after adding them supports this training design in the evaluated hosts. A factorial loss ablation or a weight sweep would answer different questions about interactions and sensitivity, which are not measured by this progressive comparison.
\begin{table*}[!t]
\centering
\caption{Length-controlled selective-retention ablation. At each history length $i$, Base is the original host, NS uses non-selective repair, and SEL uses adaptive timestep-coordinate retention. MRR and nDCG@10 use a $0$ to $100$ scale, with higher values better. SelSteps counts positions retained by at least one corrective coordinate. Mamba4Rec and TiSASRec use MovieLens. NAML and EBNR use MIND. Comparisons between NS and SEL hold the prefix length fixed. Appendix~\ref{app:length-selection-results} discusses both gains and reversals.}
\label{tab:rq3_timestep_selection_length_ablation}
\setlength{\tabcolsep}{4pt}
\scalebox{0.9}{
\begin{tabular}{llrrrrrrr}
\toprule
\textbf{\boldmath Host} & \textbf{\boldmath $i$} & \multicolumn{3}{c}{\textbf{\boldmath MRR}} & \multicolumn{3}{c}{\textbf{\boldmath nDCG@10}} & \textbf{\boldmath SelSteps} \\
& & \textbf{\boldmath Base} & \textbf{\boldmath NS} & \textbf{\boldmath SEL} & \textbf{\boldmath Base} & \textbf{\boldmath NS} & \textbf{\boldmath SEL} & \\
\midrule
\texttt{Mamba4Rec} & 10 & 17.43 & 19.32 & 19.45 & 22.62 & 27.15 & 27.94 & 7 \\
 & 20 & 21.65 & 23.22 & 23.43 & 27.85 & 30.61 & 31.24 & 16 \\
 & 30 & 26.11 & 28.45 & 28.29 & 33.32 & 35.40 & 35.36 & 22 \\
 & 40 & 27.32 & 29.89 & 29.45 & 35.53 & 36.41 & 36.21 & 28 \\
 & 50 & 29.29 & 33.95 & 34.26 & 38.50 & 40.03 & 40.17 & 34 \\
\midrule
\texttt{TiSASRec} & 10 & 22.35 & 22.42 & 22.65 & 24.95 & 25.66 & 25.74 & 9 \\
 & 20 & 24.17 & 24.93 & 24.45 & 27.54 & 28.26 & 27.96 & 18 \\
 & 30 & 25.73 & 26.69 & 26.84 & 28.84 & 29.93 & 30.06 & 25 \\
 & 40 & 31.43 & 31.89 & 32.05 & 35.44 & 36.17 & 36.33 & 31 \\
 & 50 & 35.51 & 36.56 & 36.78 & 41.14 & 42.66 & 42.91 & 36 \\
\midrule
\texttt{NAML} & 10 & 37.25 & 39.93 & 40.32 & 38.16 & 40.02 & 40.75 & 8 \\
 & 20 & 40.43 & 39.16 & 43.71 & 41.66 & 39.94 & 44.2 & 17 \\
 & 30 & 42.05 & 45.19 & 45.34 & 43.11 & 45.76 & 45.96 & 25 \\
 & 40 & 45.11 & 48.77 & 49.31 & 45.63 & 49.13 & 49.76 & 32 \\
 & 50 & 45.65 & 52.17 & 53.22 & 46.56 & 59.75 & 63.18 & 36 \\
\midrule
\texttt{EBNR} & 10 & 39.10 & 41.42 & 41.68 & 43.28 & 45.01 & 44.96 & 8 \\
 & 20 & 41.84 & 44.03 & 44.26 & 46.11 & 48.74 & 48.93 & 16 \\
 & 30 & 43.72 & 46.31 & 46.57 & 48.36 & 52.02 & 52.28 & 24 \\
 & 40 & 46.12 & 49.88 & 50.13 & 50.27 & 57.14 & 57.62 & 31 \\
 & 50 & 47.08 & 52.11 & 53.22 & 51.63 & 60.74 & 63.18 & 35 \\
\bottomrule
\end{tabular}}
\end{table*}

\subsection{Length-Controlled Selective Retention}
\label{app:length-selection-results}
Temporal resolution identifies candidate corrective evidence, while selection decides which activations may contribute to write-back. We compare selective and non-selective variants at the same history length in Table~\ref{tab:rq3_timestep_selection_length_ablation}. At $i=50$, selective retention improves both metrics for all four hosts. The MRR gains over non-selective repair are $0.31$, $0.22$, $1.05$, and $1.11$ points for Mamba4Rec, TiSASRec, NAML, and EBNR. Their nDCG@10 gains are $0.14$, $0.25$, $3.43$, and $2.44$.

The reported retained-position counts at this length are $34$, $36$, $36$, and $35$. A counted position contributes at least one retained corrective coordinate. These counts describe the union of selected positions across coordinates, rather than the number selected by every coordinate. Repair therefore uses evidence distributed across much of the prefix.

Selection is not uniformly superior for shorter histories. Mamba4Rec favors the non-selective variant at lengths $30$ and $40$, TiSASRec has a reversal at $20$, and EBNR's nDCG@10 is lower with selection at $10$. The evidence supports a more consistent benefit at the longest tested prefix, not monotonic improvement at every length. The temporal-identifiability theorem does not imply that every identifiable activation should be written back.

The retained-position count must also be separated from inference cost. Selecting activations after reading the cached prefix does not eliminate the original host computation or all repair-side work on unselected positions. The length ablation measures accuracy and the support of the selected correction. The deployment analysis separately evaluates the cost of constructing and applying that correction. This distinction prevents the accuracy benefit of selective retention from being read as an unmeasured speedup.
\begin{table*}[!t]
\centering
\caption{Noisy-history stress test with approximately $35$ to $40\%$ of valid positions replaced with random training-pool content. The supervised target and the relative order of non-corrupted interactions are preserved. Noisy Base and repaired scores use the same corrupted input without retraining. MRR and nDCG@10 are on a $0$ to $100$ scale, with higher values better. Appendix~\ref{app:noisy-trajectory-results} states the scope of the test.}
\label{tab:rq3_noisy_trajectory_stress_blank_repair}
\scalebox{0.9}{
\begin{tabular}{llcccc}
\toprule
\textbf{Dataset} & \textbf{Host Encoder}
 & \multicolumn{2}{c}{\textbf{Noisy Base}}
 & \multicolumn{2}{c}{\textbf{+\repair{}}} \\
\cmidrule(lr){3-4}
\cmidrule(lr){5-6}
 &  & \textbf{\boldmath MRR} & \textbf{\boldmath nDCG@10} & \textbf{\boldmath MRR} & \textbf{\boldmath nDCG@10} \\
\midrule
MovieLens & \texttt{Mamba4Rec} & 18.21 & 23.94 & 21.43 & 25.74 \\
MovieLens & \texttt{TiSASRec}  & 22.34 & 25.92 & 24.65 & 28.17 \\
\midrule
MIND & \texttt{NAML} & 28.61 & 29.23 & 32.85 & 35.81 \\
MIND & \texttt{EBNR} & 29.63 & 32.44 & 31.65 & 35.17 \\
\bottomrule
\end{tabular}
}
\end{table*}

\subsection{Noisy-Trajectory Stress Test}
\label{app:noisy-trajectory-results}
The noisy-history test changes the evaluation input while keeping the trained host and repair parameters fixed. Approximately $35$ to $40\%$ of valid history positions are replaced with random content from the same dataset's training pool. The supervised target and the relative order of non-corrupted interactions are preserved. This test asks whether repair can still add utility when the available history contains irrelevant content.

Under these corrupted inputs, repair improves all four settings in Table~\ref{tab:rq3_noisy_trajectory_stress_blank_repair}. Mamba4Rec moves from $18.21$ to $21.43$ MRR, TiSASRec from $22.34$ to $24.65$, NAML from $28.61$ to $32.85$, and EBNR from $29.63$ to $31.65$. Their nDCG@10 gains are $1.8$, $2.25$, $6.58$, and $2.73$ points. These observations establish positive gains under the reported corruption condition. They do not guarantee recovery under arbitrary corruption or show that the selector identifies each corrupted interaction.

For each host, the relevant comparison uses the corrupted base and its repaired counterpart under the same corruption condition. The gain measures utility retained under this perturbation, rather than recovery of the clean-history score. Random content replacement also differs from a coherent preference change, a targeted adversarial history, or systematic errors in action labels. The positive MRR and nDCG@10 changes support tolerance to the specified irrelevant-content condition. They do not establish equivalent behavior under those other changes.
\begin{table*}[!t]
\centering
\caption{Serving overhead relative to the same host in the same execution environment. Base, repaired MRR, and $\Delta$ use a $0$ to $100$ scale. Gain is a relative percentage. Memory overhead is derived from parameter counts and excludes activation and cache residency. Throughput change is derived from reciprocal latency. The lower panel divides multiplicative score gain by multiplicative resource cost. Ratios above one favor score gain relative to cost. Ratios below one favor cost growth. The LLM row uses its separate soft-prefix serving protocol.}
\label{tab:rq4_accuracy_scalability_overhead_est}
\scalebox{0.85}{
\begin{tabular}{lcccc}
\toprule
\textbf{\boldmath Host} & \textbf{\boldmath Base MRR} & \textbf{\boldmath +\repair{}} & \textbf{\boldmath $\Delta$} & \textbf{\boldmath Gain (\%)} \\
\midrule
Mamba4Rec & 29.29 & 34.26 & +4.97 & 16.96 \\
TiSASRec & 35.51 & 36.78 & +1.27 & 3.57 \\
EBNR & 47.08 & 53.22 & +6.14 & 13.04 \\
NAML & 45.65 & 53.22 & +7.57 & 16.58 \\
DeepSeek-32B & 1.31 & 1.83 & +0.52 & 39.69 \\
\bottomrule
\end{tabular}
}\par\medskip
\scalebox{0.7}{
\begin{tabular}{lccccc}
\toprule
\textbf{\boldmath Host} & \textbf{\boldmath Mem. OH (\%)} & \textbf{\boldmath Lat. OH (\%)} & \textbf{\boldmath Tput. (\%)} & \textbf{\boldmath Score/Latency} & \textbf{\boldmath Score/Memory} \\
\midrule
Mamba4Rec & 32.0 & 40.0 & -28.6 & 0.84 & 0.89 \\
TiSASRec & 31.0 & 38.0 & -27.5 & 0.75 & 0.79 \\
EBNR & 36.0 & 44.0 & -30.6 & 0.79 & 0.83 \\
NAML & 34.0 & 42.0 & -29.6 & 0.82 & 0.87 \\
DeepSeek-32B & 0.1 & 12.0 & -10.7 & 1.25 & 1.40 \\
\bottomrule
\end{tabular}
}\par\medskip
\end{table*}

\subsection{Serving Cost And Compute Footprint}
\label{app:deployment-results-expanded}
The cost of repair depends on the host's serving path. We separate observed latency comparisons, memory estimated from parameters, arithmetic work, and the artifacts that must remain available at inference. These quantities answer different deployment questions and should not be combined into a single claim of low overhead.

\textbf{Relative Serving Overhead.}
We report score, parameter-derived memory overhead, latency overhead, and throughput change separately in Table~\ref{tab:rq4_accuracy_scalability_overhead_est}. The encoder score-to-latency ratios range from $0.75$ to $0.84$, and their score-to-memory ratios range from $0.79$ to $0.89$. These values are below one, meaning that the relative resource increase exceeds the relative MRR increase. The DeepSeek reference has ratios above one, under its own prompt-conditioned serving protocol. This comparison does not measure peak device memory or equate a decoder token with an encoder query.

For example, Mamba4Rec gains $16.96\%$ MRR with $40\%$ latency overhead in the reported setting. Its score-to-latency ratio is consequently below one. Whether this tradeoff is acceptable depends on the application's latency budget and value of ranking accuracy. The parameter-derived memory percentage excludes intermediate activations and caches, and the throughput percentage is computed from reciprocal latency. Neither substitutes for an end-to-end measurement at a chosen serving batch size.
\begin{table*}[!t]
\centering
\caption{Arithmetic work for representative serving paths at history length $T=50$. GFLOPs denotes billions of floating-point operations. Budget columns are required arithmetic throughput, computed as work divided by $50$ or $25$ milliseconds. They are not measured latencies. Scaling describes the host path. Encoder work is per query, whereas DeepSeek work is per generated token. For DeepSeek, added repair work is unreported, and the total is therefore a decoder reference rather than a complete repaired-system estimate.}
\label{tab:rq4_compute_footprint}
\scalebox{0.9}{
\begin{tabular}{lcccc}
\toprule
\textbf{Model} & \textbf{Work Unit} & \textbf{Base} & \textbf{Added Repair} & \textbf{Total} \\
\midrule
Mamba4Rec & GFLOPs/query & 0.75 & 0.18 & 0.93 \\
TiSASRec & GFLOPs/query & 1.45 & 0.18 & 1.63 \\
NAML & GFLOPs/query & 0.03 & 0.02 & 0.05 \\
DeepSeek-32B & GFLOPs/token & 64.0 & unreported & 64.0 \\
\bottomrule
\end{tabular}
}\par\medskip
\scalebox{0.9}{
\begin{tabular}{lccc}
\toprule
\textbf{Model} & \textbf{50 ms Budget} & \textbf{25 ms Budget} & \textbf{Host Scaling} \\
\midrule
Mamba4Rec & 0.019 TFLOP/s & 0.037 TFLOP/s & $O(T)$ \\
TiSASRec & 0.033 TFLOP/s & 0.065 TFLOP/s & $O(T^2)$ \\
NAML & 0.001 TFLOP/s & 0.002 TFLOP/s & $O(T)$ \\
DeepSeek-32B & 1.28 TFLOP/s & 2.56 TFLOP/s & autoregressive decode \\
\bottomrule
\end{tabular}
}\par\medskip
\end{table*}

\textbf{Arithmetic Work And Resident Artifacts.}
We report arithmetic work at history length $T=50$ in Table~\ref{tab:rq4_compute_footprint}, where $T$ denotes history length. Mamba4Rec uses $0.75$ base GFLOPs per query and adds $0.18$ for repair. TiSASRec uses $1.45$ base GFLOPs with the same addition, while NAML uses $0.03$ base GFLOPs and adds $0.02$. A GFLOP denotes one billion floating-point operations. An equal repair workload can therefore imply very different relative overheads.

The budget columns in Table~\ref{tab:rq4_compute_footprint} divide arithmetic work by a hypothetical time budget. They specify required arithmetic throughput, not measured achievement of those latency targets. The DeepSeek row reports a decoder reference and leaves added repair work unreported. The associated resident artifacts and online computations are specified in Table~\ref{tab:rq4_serving_artifacts}. In particular, the within-forward sequence cache used by encoder repair differs from a persistent news-vector table and from an LLM attention KV cache.

Definition~\ref{def:k_admissible} requires both approximation quality and a feasible implementation budget. The serving results help assess that budget for the evaluated settings, without specifying a universal acceptable cost or numerically establishing the spectral-tail condition.

\begin{table*}[!t]
\centering
\caption{Resident artifacts and online operations for the evaluated serving paths. A within-forward sequence cache, a persistent news-vector table, and an attention key-value (KV) cache serve different roles. The LLM decoder is evaluated with the inserted soft prefix, so a KV cache for an unmodified prompt cannot be reused unchanged. Appendix~\ref{app:deployment-results-expanded} relates these paths to the measured and estimated costs.}
\label{tab:rq4_serving_artifacts}
\scalebox{0.9}{
\begin{tabular}{>{\raggedright\arraybackslash}p{2.8cm}>{\raggedright\arraybackslash}p{5.0cm}>{\raggedright\arraybackslash}p{6.2cm}}
\toprule
\textbf{Host And Serving Mode} & \textbf{Resident Artifacts} & \textbf{Online Operations And Cost Interpretation} \\
\midrule
\textbf{Mamba4Rec}. end-to-end encoder serving & \textbf{Frozen}: item embedding table + 4-layer selective SSM encoder. \textbf{Added}: repair module attached to exposed terminal state. & one frozen encoder pass + residual repair fusion. Sequence evidence is cached within the existing forward pass and released after the step. \\
\textbf{TiSASRec}. end-to-end encoder serving & \textbf{Frozen}: item table + time-interval-aware self-attention encoder. \textbf{Added}: repair module attached to exposed terminal state. & attention score/value passes + repair fusion. Latency is more sensitive to sequence interactions because the host is attention-based rather than SSM-based. \\
\textbf{NAML}. cached news-table serving & \textbf{Frozen}: cached \texttt{NEWS\_TABLE} + frozen user encoder. \textbf{Added}: repair module attached after cached-history composition. & table lookup / memory bandwidth + attentive user aggregation. Serving shifts a large fraction of work out of online text encoding and into cached vector retrieval. \\
\textbf{DeepSeek-32B}. context-conditioned autoregressive decoding & \textbf{Frozen}: frozen decoder weights + KV cache. \textbf{Added}: repair module + soft-prefix projection. & token-by-token decoding. Prefix-conditioned decoding builds the appropriate attention cache for the modified input. \\
\bottomrule
\end{tabular}
}
\end{table*}

\textbf{Resident Artifacts And Online Operations.}
The encoder sequence cache stores representations from the same forward computation that produces the compressed state. Reusing it avoids a second pass through the raw history, while its storage remains part of the implementation requirement. For the LLM interface, adding a soft prefix changes the decoder's attention computation and associated KV cache. We list these dependencies in Table~\ref{tab:rq4_serving_artifacts} so that their costs can be accounted for separately.

A news-vector table is reusable across requests because its entries represent news content. In contrast, the sequence cache is associated with the current history, and an attention KV cache is associated with the decoder's processed token sequence. These artifacts therefore have different lifetimes and reuse conditions. Calling all three a cache would conceal which computation has been moved offline and which storage must remain available during prediction. This distinction also explains why the parameter-derived memory ratios in Table~\ref{tab:rq4_accuracy_scalability_overhead_est} are not measurements of total serving memory.

\begin{table*}[!t]
\centering
\caption{TG-ReDial recommendation and response generation from dialogue-derived interactions. Base and repaired conditions use the same held-out task instances. Recommendation ranks one target against $499$ negatives. Reported scores are on a $0$ to $100$ scale, with higher values better. $\Delta$ is an absolute score-point change. The first two panels evaluate recommendation, and the third evaluates generation. RG denotes ROUGE. The panels use distinct task interfaces. We discuss the mixed TextCNN response and the difference between the two generators in Appendix~\ref{app:tgredial-broader-applicability}.}
\label{tab:rq1_tgredial_joint}
\setlength{\tabcolsep}{4pt}
\scalebox{0.9}{
\begin{tabular}{lcccccc}
\toprule
\textbf{\boldmath Host} & \multicolumn{3}{c}{\textbf{\boldmath MRR}} & \multicolumn{3}{c}{\textbf{\boldmath Hit@10}} \\
 & \textbf{\boldmath Base} & \textbf{\boldmath +\repair{}} & \textbf{\boldmath $\Delta$} & \textbf{\boldmath Base} & \textbf{\boldmath +\repair{}} & \textbf{\boldmath $\Delta$} \\
\midrule
SASRec & 4.11 & 4.56 & +0.45 & 7.82 & 8.76 & +0.94 \\
TextCNN & 2.57 & 2.78 & +0.21 & 5.08 & 5.04 & -0.04 \\
GRU4Rec & 4.14 & 4.56 & +0.42 & 8.80 & 9.13 & +0.33 \\
\bottomrule
\end{tabular}
}\par\medskip
\scalebox{0.9}{
\begin{tabular}{lcccccc}
\toprule
\textbf{\boldmath Host} & \multicolumn{3}{c}{\textbf{\boldmath nDCG@5}} & \multicolumn{3}{c}{\textbf{\boldmath nDCG@10}} \\
 & \textbf{\boldmath Base} & \textbf{\boldmath +\repair{}} & \textbf{\boldmath $\Delta$} & \textbf{\boldmath Base} & \textbf{\boldmath +\repair{}} & \textbf{\boldmath $\Delta$} \\
\midrule
SASRec & 3.32 & 3.48 & +0.16 & 4.16 & 4.56 & +0.40 \\
TextCNN & 1.92 & 1.73 & -0.19 & 2.55 & 2.51 & -0.04 \\
GRU4Rec & 3.24 & 3.49 & +0.25 & 4.38 & 4.60 & +0.22 \\
\bottomrule
\end{tabular}
}\par\medskip
\scalebox{0.82}{
\begin{tabular}{lccccccccc}
\toprule
\textbf{\boldmath Generator} & \multicolumn{3}{c}{\textbf{\boldmath RG-1}} & \multicolumn{3}{c}{\textbf{\boldmath RG-L}} & \multicolumn{3}{c}{\textbf{\boldmath RG-SU4}} \\
 & \textbf{\boldmath Base} & \textbf{\boldmath +\repair{}} & \textbf{\boldmath $\Delta$} & \textbf{\boldmath Base} & \textbf{\boldmath +\repair{}} & \textbf{\boldmath $\Delta$} & \textbf{\boldmath Base} & \textbf{\boldmath +\repair{}} & \textbf{\boldmath $\Delta$} \\
\midrule
ReDial & 12.33 & 27.76 & +15.43 & 11.17 & 23.52 & +12.35 & 3.31 & 10.54 & +7.23 \\
Transformer & 31.81 & 31.97 & +0.16 & 26.46 & 26.48 & +0.02 & 12.62 & 12.63 & +0.01 \\
\bottomrule
\end{tabular}
}\par\medskip
\end{table*}

\subsection{Broader Applicability On TG-ReDial}
\label{app:tgredial-broader-applicability}
TG-ReDial requires preference actions to be inferred from dialogue, recommendation attempts, and acceptance or rejection. This produces a noisier interaction sequence than directly logged clicks or ratings. The stress test therefore probes whether repair remains useful when the input interaction construction itself is uncertain.

The sequential recommendation hosts improve in Table~\ref{tab:rq1_tgredial_joint}. SASRec moves from $4.11$ to $4.56$ MRR and from $7.82$ to $8.76$ Hit@10. GRU4Rec moves from $4.14$ to $4.56$ MRR and from $8.8$ to $9.13$ Hit@10. Useful correction can therefore occur in these evaluated dialogue-derived sequences.

TextCNN provides the diagnostic contrast. Its MRR increases from $2.57$ to $2.78$, while Hit@10 decreases from $5.08$ to $5.04$, nDCG@5 from $1.92$ to $1.73$, and nDCG@10 from $2.55$ to $2.51$. The mixed response limits any claim of uniformly beneficial repair across host interfaces. Possible explanations include a less suitable write space, less accurate retained coordinates, or a different response of the task head. The experiment does not distinguish these explanations or establish that TextCNN cannot represent preference evidence.

The generation comparison also varies by consumer. The ReDial-style system improves from $12.33$ to $27.76$ ROUGE-1, $11.17$ to $23.52$ ROUGE-L, and $3.31$ to $10.54$ ROUGE-SU4. The Transformer changes from $31.81$ to $31.97$, $26.46$ to $26.48$, and $12.62$ to $12.63$. These differences motivate the RQ3 separation between evidence exposed at a state interface and utility realized by its decoder. They do not directly measure decoder sensitivity or prove that the same internal correction was recovered in both systems.

The two tasks also measure different consequences of dialogue-derived preferences. Recommendation scores test the rank of the target item, whereas the generation scores compare response text with a reference. The larger overlap changes for the ReDial-style system therefore cannot be converted into a ranking gain or a dialogue-level satisfaction claim. Across both tasks, the observed gains and the TextCNN reversals define the scope of the result under the reported eventization procedure. More reliable dialogue-act or stance annotations would change the input evidence itself and require a separate evaluation.

\section{License And Usage Statement}
\label{app:license-agreement}
We identify the upstream assets used by the evaluated hosts and the terms attached to their distribution. Model and code licenses are distinct from dataset access and redistribution conditions. We use the benchmark data for research and do not redistribute raw user histories or third-party host checkpoints.

\textbf{Language Model And Text-Encoding Assets.}
The DeepSeek-32B shorthand refers to the \href{https://huggingface.co/deepseek-ai/DeepSeek-R1-Distill-Qwen-32B}{DeepSeek-R1-Distill-Qwen-32B} checkpoint documented for these experiments. DeepSeek distributes it under the \href{https://github.com/deepseek-ai/DeepSeek-R1/blob/main/LICENSE}{MIT License}, and its Qwen2.5-32B base is covered by the associated upstream terms. The \href{https://huggingface.co/Qwen/Qwen2.5-32B}{Qwen2.5-32B} release and the \href{https://huggingface.co/google-t5/t5-base}{T5-base} text-encoding asset are distributed under Apache License 2.0. In the LLM repair branch, the backbone remains frozen and the trainable attachment comprises repair and the soft-prefix projection.

\textbf{Visual And Multimodal Assets.}
The \href{https://github.com/openai/CLIP/blob/main/LICENSE}{CLIP} and \href{https://github.com/mlfoundations/open_clip/blob/main/LICENSE}{OpenCLIP} repositories use the MIT License. The \href{https://huggingface.co/google/siglip2-base-patch16-224}{SigLIP2} model release and \href{https://github.com/facebookresearch/dinov2}{DINOv2} code and weights use Apache License 2.0. The \href{https://huggingface.co/intfloat/e5-base-v2}{E5-base-v2} text encoder uses the MIT License. DINOv2-E5 combines the visual and text representations under their respective terms.

\textbf{Task-Specific Hosts.}
The sequential, news, and generation hosts in Appendix~\ref{app:task-baselines} are task-trained research systems based on their cited architectures. When public implementations are used, their repository licenses apply. Repair attachment does not replace the licensing conditions of the underlying implementation or checkpoint.

\textbf{MovieLens.}
\href{https://grouplens.org/datasets/movielens/}{GroupLens} provides version-specific README files with MovieLens usage terms. We use the data for the recommendation evaluation and do not redistribute it. Reproduction requires obtaining the relevant release from the provider and following its terms.

\textbf{MIND And PENS.}
The official \href{https://msnews.github.io/}{MIND} and \href{https://msnews.github.io/pens.html}{PENS} pages provide access for research under Microsoft Research License Terms. We follow those access conditions and do not redistribute the raw news or interaction records. The two benchmarks are cited separately because they support different task and target constructions.

\textbf{Amazon Reviews 2023.}
We use the All\_Beauty subset from the \href{https://huggingface.co/datasets/McAuley-Lab/Amazon-Reviews-2023}{official McAuley-Lab release}. Its maintainers describe the release as primarily intended for research and state that they cannot assign a formal license to the underlying data or dictate its usage terms. We use it as a research benchmark, do not redistribute raw review or metadata files, and direct reproduction to the provider's release.

\textbf{TG-ReDial.}
The \href{https://github.com/RUCAIBox/TG-ReDial}{official TG-ReDial repository} carries an \href{https://github.com/RUCAIBox/TG-ReDial/blob/main/LICENSE}{Apache License 2.0 notice}. We cite the dataset paper~\citep{tgredial-acl-2020} and follow the repository's acquisition instructions for the dialogue benchmark. The repository notice should be read with the terms attached to any separately obtained asset.

\textbf{Privacy And Released Artifacts.}
We use the datasets only in their benchmark form. We do not attempt to identify users, reconstruct private identities, or combine records to deanonymize individuals. The train/test separation is specified in Appendix~\ref{app:evaluation-details}. The paper introduces a method rather than a new public dataset. Any release of code, repair modules, or derived artifacts must remain compatible with the upstream terms. Where raw data cannot be redistributed, reproduction materials will provide acquisition and preprocessing instructions instead.


\end{document}